\documentclass[lettersize,journal]{IEEEtran}

\usepackage{textcomp}
\usepackage{stfloats}
\usepackage{url}
\usepackage{verbatim}
\usepackage{cite}
\usepackage{amsmath,amsfonts,amssymb,amsthm,graphicx,booktabs,caption}
\usepackage{algorithm,algorithmic}
\usepackage{enumitem}

\usepackage{hyperref}
\usepackage{multirow}
\usepackage{makecell}
\newtheorem{proposition}{Proposition}
\usepackage{xcolor}
\usepackage{tabularx}
\newtheorem{theorem}{Theorem}[section]
\newtheorem{lemma}{Lemma}[section]
\usepackage{array}

\usepackage{placeins}
\begin{document}

\title{Neural Uncertainty Principle: A Unified View of Adversarial Fragility and LLM Hallucination}

\author{Dong-Xiao~Zhang,~Hu~Lou,~Jun-Jie~Zhang,~Jun~Zhu,~Deyu~Meng%
\thanks{Dong-Xiao Zhang, Hu Lou, and Jun-Jie Zhang contributed equally to this work.}%
\thanks{Corresponding authors: Jun-Jie Zhang and Deyu Meng (e-mail: zjacob@mail.ustc.edu.cn; dymeng@mail.xjtu.edu.cn).}%
\thanks{Dong-Xiao Zhang, Hu Lou, and Jun-Jie Zhang are with the Northwest Institute of Nuclear Technology, Xi'an, Shaanxi 710024, China.}%
\thanks{Jun Zhu is with Department of Computer Science and Technology, Institute for AI, BNRist Center, THBI Lab, Tsinghua-Bosch Joint Center for ML, Tsinghua University, Beijing 10084, China (e-mail: dcszj@mail.tsinghua.edu.cn).}%
\thanks{Deyu Meng is with the School of Mathematics and Statistics and the Ministry of Education Key Lab of Intelligent Networks and Network Security, Xi'an Jiaotong University, Xi'an, Shaanxi 710049, China.}%
}

% The paper headers
% \markboth{Journal of \LaTeX\ Class Files,~Vol.~14, No.~8, August~2021}%
% {Shell \MakeLowercase{\textit{et al.}}: A Sample Article Using IEEEtran.cls for IEEE Journals}

% \IEEEpubid{0000--0000/00\$00.00~\copyright~2021 IEEE}
% Remember, if you use this you must call \IEEEpubidadjcol in the second
% column for its text to clear the IEEEpubid mark.

\maketitle

\begin{abstract}
Adversarial vulnerability in vision and hallucination in large language models are conventionally viewed as separate problems, each addressed with modality-specific patches. Here we argue that they can be interpreted through a common geometric mechanism: under a loss-induced weighting that emphasizes boundary-relevant samples, the input projection and its directional loss gradient form a conjugate pair whose dispersions obey a Robertson--Schr\"odinger-type lower bound---a Neural Uncertainty Principle (NUP).
This bound implies that a model cannot simultaneously achieve arbitrarily sharp boundary discrimination and uniformly low sensitivity to input perturbations. Vision adversarial fragility corresponds to a near-bound regime where boundary compression inflates gradient dispersion; LLM hallucination corresponds to the opposite extreme, where weak prompt--gradient coupling leaves the continuation space under-constrained before decoding. The bound is modulated by an input--gradient correlation channel that admits exact reduction to real-gradient statistics, yielding a single-backward probe (CC-Probe) applicable to both modalities. In vision, suppressing dominant coupling components during training improves robustness without adversarial training; in language, the same prefill-stage probe detects hallucination risk before generating any answer tokens. NUP thus reframes two isolated reliability failures as opposite sides of a single conjugate trade-off.
\end{abstract}

\begin{IEEEkeywords}
Neural uncertainty principle, input--gradient correlation, adversarial robustness, hallucination detection, prompt selection.
\end{IEEEkeywords}

\section{Introduction} 
\label{sec:intro}

Modern neural systems have evolved far beyond their initial roles as pattern recognizers. They have become integral infrastructure in scientific discovery, industrial automation, and daily digital interaction~\cite{krizhevsky2012imagenet,simonyan2014vgg,he2016resnet,vaswani2017attention,brown2020gpt3,radford2018gpt,radford2019gpt2,kaplan2020scaling}. As these models extend across critical domains, from predicting protein structures to generating legal or medical advice, the requirement to understand their \emph{boundary behaviors} shifts from a theoretical curiosity to an engineering necessity~\cite{Yang2022benchmark, hendrycks2021robustness, hendrycks2020pretrained}. This fragility is most clearly manifested in two distinct yet pervasive phenomena: \emph{adversarial vulnerability} in vision models, where imperceptible perturbations flip confident predictions~\cite{szegedy2013intriguing,goodfellow2014explaining,madry2017towards,moosavi2016deepfool,carlini2017towards,athalye2018obfuscated,croce2020autoattack,croce2021robustbench}, and \emph{hallucination}, especially in large language models (LLMs), where generation drifts fluently from fact to fabrication~\cite{maynez2020faithfulness, truthfulqa2021,rawte2023siren,farquhar2024semantic,sahoo2024survey,kossen2024cheap,OpenAI2025WhyLMHallucinate,KarbasiEtAl2025Impossible}.

Currently, the dominant responses to these failures have been domain- and modality-specific. 
\textit{Adversarial vulnerability} is typically formalized as worst-case risk under norm-bounded perturbations, and the dominant mitigation is adversarial training (AT) and its variants (e.g., TRADES), which improve robustness but are computationally expensive and tightly coupled to specific threat models~\cite{madry2017towards,zhang2019trades}.
\textit{Hallucination in LLMs} is usually framed as a faithfulness/factuality failure in generation.
Mitigation is commonly pursued through alignment and instruction tuning~\cite{ouyang2022training},
retrieval-augmented generation (RAG) that augments parametric memory with external evidence~\cite{lewis2020rag},
and post-hoc verification/self-check pipelines~\cite{manakul-etal-2023-selfcheckgpt,gao-etal-2023-rarr,dhuliawala-etal-2024-chain}.

Overall, current practice still largely reflects a patchwork of modality-specific solutions, rather than converging toward a unified \emph{observable} of boundary behavior that is comparable across perception and generation.

Here, we take a unified view and argue that both failure modes share a geometric signature: the model enters a boundary-stressed region of the loss landscape.
In adversarial attacks, small input perturbations can traverse locally sharp loss contours; in LLMs, weak prompt conditioning leaves a large set of continuations similarly compatible \emph{before decoding}, enabling prior-driven drift.
We show that these are two opposite ways of mismanaging the same ``uncertainty budget''—an imbalance between input-space localization and gradient-space sensitivity.

In summary, this paper mainly makes the following contributions:
\begin{itemize}[leftmargin=*]
\item \textbf{Neural Uncertainty Principle (NUP).}
We formalize a Neural Uncertainty Principle (NUP) for neural models with a well-defined scalar loss: under a boundary-emphasized (loss-induced) state, an input projection operator and its directional-derivative conjugate obey a Robertson--Schr\"odinger--type constraint (Theorem~\ref{thm:rs_form_of_nup}).
The constraint captures a principled trade-off: a model cannot be made simultaneously arbitrarily accurate on boundary-relevant samples and uniformly robust to small input perturbations.

\item \textbf{From operator constraint to a computable correlation channel.}
A key technical contribution is that, under the loss-phase construction, the covariance term in the Robertson–Schrödinger (RS) inequality admits an exact reduction to loss-weighted statistics of the \emph{real} loss gradient field (Theorem~\ref{thm:cov_reduction}, Lemma~\ref{lem:cos_dir_corr}).
This reduction exposes an explicit \emph{input--gradient coupling} channel that controls proximity to the constraint bound, and yields a practical single-backward per-sample proxy:
the \textit{Conjugate Correlation Probe (CC-Probe)}, defined as the absolute input--gradient cosine (mean-centered for prompt embeddings).

\item \textbf{Two opposite anomaly regimes under the same principle.}
Using the CC-Probe, we can empirically characterize two opposite boundary-anomaly regimes:
in vision, a persistent \emph{high}-CC-Probe marks boundary-stress and concentrates adversarial fragility;
in LLM prompting, \emph{anomalously low} CC-Probe marks under-conditioning at prefill and correlates with elevated hallucination risk.
Across settings, reliable behavior typically lies between these extremes, which we summarize as an intermediate ``Goldilocks'' band (Fig.~\ref{fig:nup_roadmap}) used as a qualitative regime indicator.
\end{itemize}

To validate the aforementioned results, we design six experiments that diagnose the two regimes and use two lightweight interventions, ConjMask (masking high-contribution input components) and LogitReg (logit-side regularization), as \emph{mechanistic tests} that manipulate the coupling channel and produce objective-specific shifts under controlled stress tests.

Our goal is to establish NUP as a unified and \emph{testable} lens for boundary anomalies, by turning its correlation channel into a single-backward observable and validating the resulting regime predictions with controlled mechanistic interventions, rather than simply optimizing an end-to-end adversarial defense or hallucination detector for leaderboards.

\section{Related Work}
\label{sec:related}

\subsection{Adversarial Robustness and Accuracy–Robustness Frontier}

Deep neural networks achieve remarkable accuracy on clean data~\cite{krizhevsky2012imagenet, he2016resnet, vaswani2017attention}, but remain vulnerable to imperceptible adversarial perturbations~\cite{szegedy2013intriguing, goodfellow2014explaining}. 
In robustness evaluation, a standard first-order white-box adversary is \emph{Projected Gradient Descent (PGD)}~\cite{madry2017towards}, which iteratively maximizes the training loss---typically the \emph{cross-entropy (CE)} loss for classification.
More recently, standardized suites such as AutoAttack~\cite{croce2020autoattack} include Auto-PGD (APGD) variants that optimize either CE (\emph{APGD-CE}) or the \emph{Difference of Logits Ratio (DLR)} loss (\emph{APGD-DLR}), offering complementary stress tests beyond plain CE optimization.

Adversarial training (AT) formulates robustness as a min--max optimization problem~\cite{madry2017towards}, and TRADES~\cite{zhang2019trades} further decomposes the objective to balance natural and robust accuracy. Despite progress, a persistent accuracy--robustness tension remains, attributed variously to sample complexity~\cite{schmidt2018robustgeneralization}, non-robust features~\cite{tsipras2018robustness}, and intrinsic margin constraints~\cite{fawzi2016robustness, stutz2019disentangling}. Standardized benchmarks such as RobustBench~\cite{croce2021robustbench} and strong attacks like AutoAttack~\cite{croce2020autoattack} now enable rigorous evaluation.

Our work offers a complementary perspective: Proposition~\ref{prop:exp1_sep} frames this tension as a \emph{conjugate trade-off} arising from the NUP. As models compress features to sharpen decision boundaries, the effective feasible volume shrinks, pushing the system toward bound saturation.

\subsection{Gradient-Based Attribution, Regularization, and Masking}

Gradient-based attribution methods explain model predictions by analyzing input sensitivities. Early saliency maps~\cite{simonyan2014saliency} visualize $\nabla_x f$, while Gradient$\times$Input (GxI)~\cite{shrikumar2017deeplift, ancona2018unified} weights gradients by input magnitudes—a quantity superficially similar to our probe (presented in Sec.~\ref{sec:nup_formalism}). Integrated Gradients~\cite{sundararajan2017integrated} and SmoothGrad~\cite{smilkov2017smoothgrad} improve attribution stability. Recent work connects gradient geometry to robustness: Input-Gradient Regularization~\cite{ross2018inputgradreg} penalizes gradient norms, while Perceptually Aligned Gradients~\cite{ganz2023pag} show that robust models exhibit human-interpretable gradients.

Masking-based regularization offers another avenue. Dropout~\cite{srivastava2014dropout} and DropBlock~\cite{ghiasi2018dropblock} randomly mask activations to prevent co-adaptation, while adversarial masking strategies~\cite{chen2021adversarialmask} target vulnerable features. Our \emph{ConjMask} (presented in Sec.~\ref{sec:exp3_results}) differs fundamentally: rather than random or adversarially-driven selection, we mask input components with large normalized interaction scores
$|\tilde x_{c,j}\tilde p_{c,j}|$ (Eq.~\eqref{eq:exp3_score}),
i.e., a channel-wise normalized version of the $x_i p_i$-type coupling, directly targeting the input--gradient coupling channel that appears in the NUP via the correlation coefficient $\rho_c(u)$ (Eq.~\eqref{eq:rho_formalism}) and the covariance reduction (Theorem~\ref{thm:cov_reduction}).
Meanwhile, prior GxI-based methods use $x \odot \nabla_x \mathcal{L}$ for \emph{post-hoc explanation}; we instead treat $|\cos(x,p)|$ as a \emph{predictive state variable} motivated by the NUP (Theorem~\ref{thm:rs_form_of_nup}) and its mixed-axis corollary (Lemma~\ref{lem:mixed_axis_nup}).

\subsection{Uncertainty Quantification and Conjugate Perspectives}

Predictive uncertainty estimation enables models to flag unreliable outputs. Deep Ensembles~\cite{lakshminarayanan2017ensembles} remain a strong baseline, while Monte Carlo Dropout~\cite{gal2016dropout} provides a Bayesian approximation. Calibration techniques~\cite{guo2017calibration} align confidence with accuracy, critical for out-of-distribution detection~\cite{hendrycks2016baseline, Yang2022benchmark}. These methods typically require multiple forward passes.

A separate line explores trade-off structures echoing conjugate relationships. The Information Bottleneck~\cite{tishby2015deep} formalizes compression--prediction trade-offs in representation space; Spectral Normalization~\cite{miyato2018spectral} and gradient penalties~\cite{gulrajani2017improved} implicitly constrain sensitivity dispersion. Logit regularization and label smoothing~\cite{szegedy2016rethinking, muller2019labelsmoothing} stabilize output distributions against overconfidence.

Our NUP framework differs in grounding uncertainty in \emph{input-space conjugate geometry} rather than representation-level compression or output-level statistics. The Robertson--Schr\"odinger form (Theorem~\ref{thm:rs_form_of_nup}) yields a single-backward probe that complements expensive sampling-based estimators.

\subsection{Hallucination Detection and Mitigation in LLMs}

Large language models generate fluent but factually unsupported text—a phenomenon termed hallucination~\cite{maynez2020faithfulness, truthfulqa2021, sahoo2024survey}. Recent theoretical analyses suggest hallucinations may be structurally unavoidable under realistic conditions~\cite{OpenAI2025WhyLMHallucinate, KarbasiEtAl2025Impossible}.

Detection approaches are predominantly \emph{post-hoc} and \emph{sampling-based}. Semantic entropy methods~\cite{farquhar2024semantic, kossen2024cheap} estimate uncertainty via multiple generations, while LLM-as-a-judge~\cite{zheng2024judging} uses auxiliary models to evaluate outputs. Prompt engineering strategies such as Chain-of-Thought~\cite{wei2022chain, kojima2022cot} reshape the generation process itself. Self-consistency decoding~\cite{wang2023selfconsistency} aggregates multiple reasoning paths but requires substantial sampling overhead.

These methods share a common limitation: they operate \emph{during or after decoding}. Early-exit mechanisms~\cite{schuster2022earlyexit} and confidence-based filtering~\cite{kadavath2022languagemodelconfidence} reduce inference cost but still require partial generation. In contrast, our prefill-stage probe (presented in Sec.~\ref{sec:exp5}--\ref{sec:exp6}) computes $|\cos(\bar{x}, \bar{p})|$ on prompt embeddings \emph{before generating any answer (output) tokens}, enabling decoding-free and sampling-free risk assessment. Proposition~\ref{prop:exp56_llm} interprets this through the NUP: anomalously low correlation indicates a high-slack regime (i.e., $S_c(u)\gg 1$ in Eq.~\eqref{eq:slack_formalism_repl}), where the prompt fails to constrain the feasible continuation space, amplifying prior-driven drift.

\subsection{Positioning of This Work}
\label{sec:related:positioning}

Existing approaches address adversarial robustness and hallucination 
as separate phenomena, employing modality-specific defenses 
(adversarial training, prompt engineering) or post-hoc diagnostics 
(semantic entropy, LLM-as-judge). Our work departs from this paradigm 
in the following ways:

\begin{enumerate}[label=(\roman*),nosep]
\item \textbf{Unified mechanism.} We ground both failure modes in a 
single operator-theoretic constraint (Theorem~\ref{thm:rs_form_of_nup}), 
where the input--gradient correlation \(\rho_c(u)\) governs proximity 
to the uncertainty bound.

\item \textbf{Prefill signal.} Unlike sampling-based uncertainty 
estimators, the CC-Probe requires only a single backward pass and 
operates \emph{before} decoding (for LLMs) or attack evaluation 
(for vision).

\item \textbf{Intervention design.} Rather than adversarially 
augmenting the training set, we directly target the conjugate coupling channel implicated by the NUP in boundary fragility.
\end{enumerate}

\section{Formalism of NUP}
\label{sec:nup_formalism}

% =========================
% Nutshell Roadmap (Theory -> Experiments)
% Put this at the beginning of the theory section (right before formal operator derivations).
% =========================
\subsection{NUP in a Nutshell: Roadmap From Geometry to Experiments}
\label{sec:nup_nutshell}

\begin{figure}[t]
    \centering
    % rename the file as you like, e.g., ./Plots/nup_roadmap.png
    \includegraphics[width=0.5\textwidth]{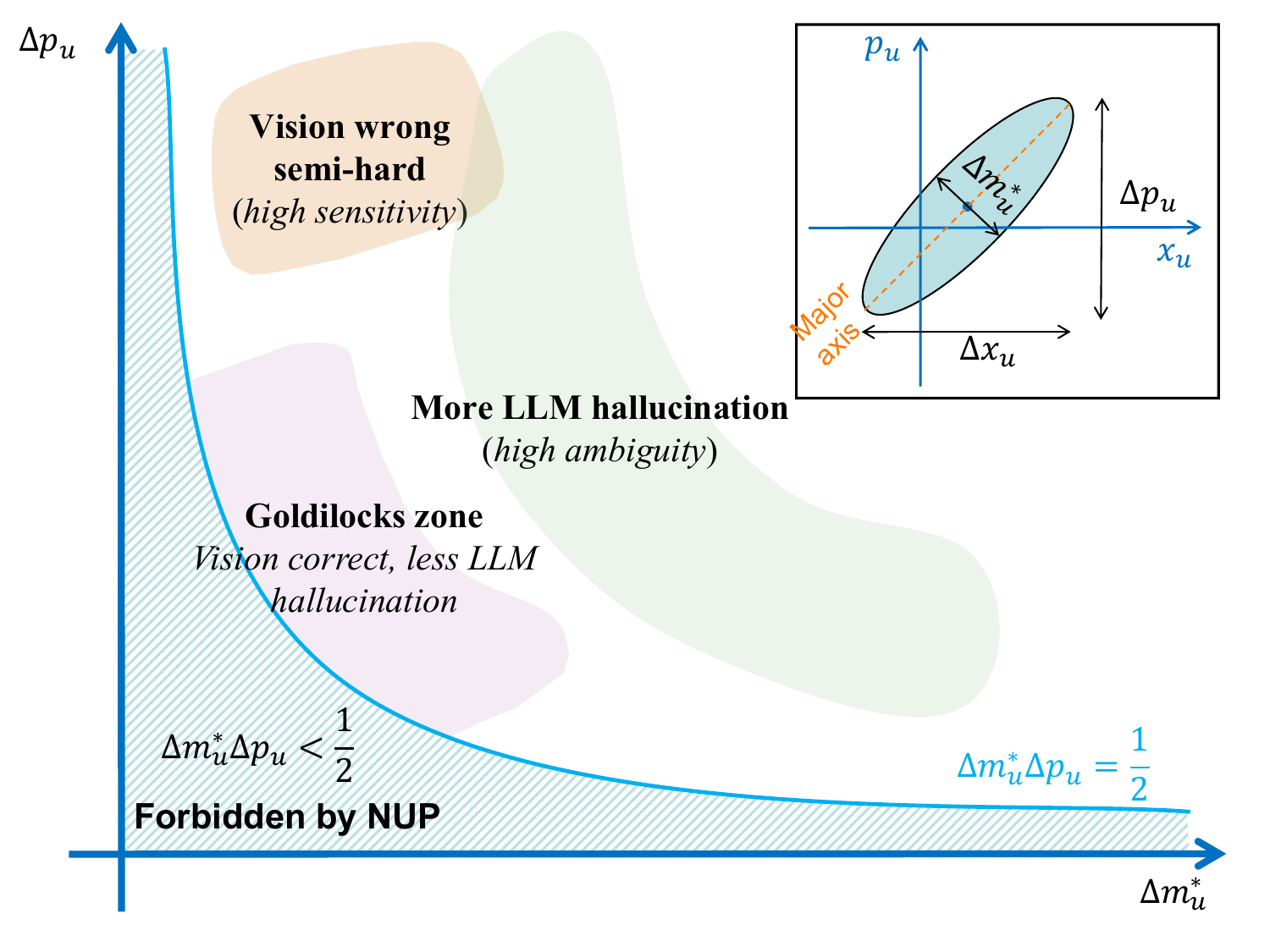}
    \vspace{-2mm}
    \caption{\textbf{Illustration of the Neural Uncertainty Principle (NUP) in the $(\Delta \hat m_u^\star,\Delta \hat p_u)$ plane.}
    Here $\Delta \hat m_u^\star$ is the \emph{minimum} dispersion achievable by a mixed observable
    $\hat m_u(\lambda)=\hat x_u+\lambda \hat p_u$, and $\Delta \hat p_u$ is the dispersion of the conjugate operator $\hat p_u$
    under the loss-phase (boundary-emphasized) state.
    NUP implies the forbidden region $\Delta \hat m_u^\star \Delta \hat p_u < \tfrac{1}{2}$.
    Vision boundary-stress concentrates toward the upper-left (small $\Delta \hat m_u^\star$, large $\Delta \hat p_u$),
    whereas LLM under-conditioning / hallucination occupies a larger region away from the uncertainty boundary (large $\Delta \hat m_u^\star$, uncontrolled $\Delta \hat p_u$).
    Reliable behavior lies in an intermediate ``Goldilocks'' band where neither term is extreme. 
    \textbf{Note.} Strictly speaking, $\Delta \hat m_u^\star$ and $\Delta \hat p_u$ denote the dispersions of the operators $\hat m_u$ and $\hat p_u$ under the loss-induced state (hence the ``hat'' notation at the operator level). In the figure, to emphasize the phase-plane geometry and the experimentally observable quantities, we plot numerical estimates of these dispersions. We therefore omit the hats in the figure labels for notational simplicity; the plotted quantities correspond one-to-one to $\Delta \hat m_u^\star$ and $\Delta \hat p_u$ as defined in the equations.
}
    \label{fig:nup_roadmap}
\end{figure}

\vspace{2pt}
\noindent\textbf{Phase-plane geometry under a loss-induced state.}
Fig.~\ref{fig:nup_roadmap} provides the minimal geometric picture needed for the rest of the paper.
Let $x$ and $\mathcal L_c(x)$ denote the input and the per-sample loss under condition $c$
(e.g., the cross-entropy loss for a ground-truth class in vision, or a shifted negative log-likelihood (NLL) objective in language).
We study one-dimensional projections along a unit direction $u$:
$x_u(x)=u^\top x$ and $p_u(x)=\partial_u \mathcal L_c(x)$.
Under the loss-phase construction (Sec.~\ref{subsec:state_and_ops}), loss-weighted samples form an effective
``boundary-relevant'' population in the $(x_u,p_u)$ plane whose second-order structure is summarized by an ellipse (the upper right corner of Fig.~\ref{fig:nup_roadmap}).
Two dispersions from this ellipse are the only quantities we need in this roadmap\footnote{$\hat O$ represents an ``operator'', a rule that takes a function $\psi(x)$ and produces another function $(\hat O\psi)(x)$. Here $\hat p_u$ and $\hat x_u$ correspond to the operators induced by $p_u$ and $x_u$ respectively. Please see more details in Sec.~\ref{subsec:state_and_ops}.}:

\begin{itemize}[leftmargin=1.2em,itemsep=1pt,topsep=2pt]
\item $\Delta \hat p_u$ (\textbf{sensitivity dispersion}): an operator dispersion whose real drift term is driven by the loss gradient. Operationally, it quantifies how widely the boundary-emphasized population spreads in
loss-sensitivity: larger $\Delta \hat p_u$ indicates that small input perturbations can more frequently induce large loss changes,
which is precisely the regime exploited by gradient-based attacks. 

\item $\Delta \hat m_u^\star$ (\textbf{best mixed-axis thickness}): define the mixed observable
$\hat m_u(\lambda)=\hat x_u+\lambda \hat p_u$ and let $\Delta \hat m_u(\lambda)$ be its dispersion under the same state.
Then $\Delta \hat m_u^\star:=\min_{\lambda}\Delta \hat m_u(\lambda)$ is the \emph{minimum} achievable thickness of the
boundary-emphasized ellipse along any mixed (tilted) axis. Intuitively, it measures how \emph{consistent} the boundary layer is across samples:
a smaller $\Delta \hat m_u^\star$ means the hard population collapses into a thinner ambiguity band in the phase plane,
which typically accompanies higher clean accuracy but also sharper boundary behavior.
\end{itemize}

\vspace{2pt}
\noindent\textbf{Mixed-axis NUP constraint.}
The full NUP is a Robertson--Schr\"odinger uncertainty relation for the operator pair $(\hat x_u,\hat p_u)$ (Theorem~\ref{thm:rs_form_of_nup}).
A key corollary (Lemma~\ref{lem:mixed_axis_nup}) eliminates the covariance term by passing to the optimal mixed axis and yields
\begin{equation}
\label{eq:nutshell_nup}
\Delta \hat m_u^\star\, \Delta \hat p_u \;\ge\; \tfrac{1}{2}.
\end{equation}
This inequality is the only constraint required to interpret Fig.~\ref{fig:nup_roadmap}:
the lower-left region is \textbf{forbidden}, i.e., one cannot simultaneously make the boundary-emphasized population
\emph{extremely thin} (very small $\Delta \hat m_u^\star$) and \emph{uniformly insensitive} (very small $\Delta \hat p_u$).
This provides the geometric intuition through which NUP relates adversarial vulnerability and hallucination.

\vspace{2pt}
\noindent\textbf{Two opposite failure modes on the same plane.}
The same $(\Delta \hat m_u^\star,\Delta \hat p_u)$ plane explains two ``boundary anomalies'' as opposite corners:

\paragraph{Vision boundary-stress}
Discriminative training tends to squeeze errors and near-margin samples into a narrow boundary layer,
which corresponds to pushing $\Delta \hat m_u^\star$ downward.
Based on Eq. \eqref{eq:nutshell_nup}, we know that this compression must be compensated by inflating $\Delta \hat p_u$:
the hard subset becomes increasingly sensitive.
Empirically, this is where \emph{wrong / semi-hard} samples concentrate---they are highly correctable by small perturbations,
hence adversarially fragile.

\paragraph{LLM under-conditioning / hallucination}
In prompting, a common failure is not an overly sharp boundary but an under-constrained continuation space.
If the prompt does not sufficiently couple to loss-sensitive directions at prefill, the effective phase-plane cloud is thick
(large $\Delta \hat m_u^\star$) while sensitivity dispersion is unbounded (uncontrolled $\Delta \hat p_u$),
corresponding to high slack: many continuations are similarly compatible before decoding begins.
This ``under-conditioning'' corner aligns with elevated hallucination risk, where generation can drift into fluent but weakly grounded outputs.

\paragraph{Intermediate ``Goldilocks'' zone}
Reliable behavior lies between extremes: $\Delta \hat m_u^\star$ is not excessively squeezed (avoiding boundary-stress),
and $\Delta \hat p_u$ is not vanishingly small under weak conditioning (avoiding unconstrained drift).
We will repeatedly interpret experiments as moving populations toward this intermediate band in Fig.~\ref{fig:nup_roadmap}.

\vspace{2pt}
\noindent\textbf{Practical probe.}
The Robertson--Schr\"odinger form contains an explicit correlation term $\rho_c(u)$ and satisfies
$\Delta \hat m_u^\star=\Delta \hat x_u\sqrt{1-\rho_c(u)^2}$ (Eq.~\ref{eq:min_var_m_formalism}), where
$\rho$ controls the \emph{tilt/coupling} of the ellipse,
while the primary ``budget'' relevant to failure modes is captured by $(\Delta \hat m_u^\star,\Delta \hat p_u)$ through Eq.
\eqref{eq:nutshell_nup}.

To localize \emph{individual samples} on this plane without estimating operator moments, we use a single-backward proxy for the
coupling channel: \textit{the input--gradient cosine (CC-Probe)}, i.e.,
$c_{\text{img}}$ in vision and $c_{\text{prompt}}$ in LLM prefill
(Eqs.~\eqref{eq:c_img}--\eqref{eq:c_prompt}).
It tracks the same input--gradient coupling that enters the Robertson--Schr\"odinger covariance term under the loss-phase construction,
and empirically identifies boundary-stress in vision and under-conditioning prompts in language.

\vspace{2pt}
\noindent\textbf{Preview our experiment objectives using Fig.~\ref{fig:nup_roadmap}.}
\begin{itemize}[leftmargin=1.2em,itemsep=1pt,topsep=2pt]
\item \textbf{Exp.~1--2 (diagnosis):} CC-Probe separates easy/correct vs.\ wrong/semi-hard samples in vision and responds causally to $\pm$FGSM (Fast Gradient Sign Method \cite{goodfellow2014explaining}),
consistent with moving subsets toward the intermediate band in Fig.~\ref{fig:nup_roadmap}.

\item \textbf{Exp.~3--4 (vision intervention):} ConjMask suppresses dominant $|x_i p_i|$ couplings during training (relieving boundary stress);
LogitReg complements it against score-space attacks. Net effect: shift the stressed population away from the upper-left corner.

\item \textbf{Exp.~5--6 (LLM intervention):} low $c_{\text{prompt}}$ indicates under-conditioning at prefill; selecting higher-$c_{\text{prompt}}$
prompt variants moves behavior towards the Goldilocks Zone.
\end{itemize}

\noindent
\textbf{In short,}
NUP forbids the lower-left: one cannot simultaneously minimize boundary ambiguity and sensitivity.
Vision failures arise when training pushes the hard subset toward small $\Delta \hat m_u^\star$ (hence large $\Delta \hat p_u$),
whereas LLM hallucinations arise under high slack with large $\Delta \hat m_u^\star$ and uncontrolled $\Delta \hat p_u$.
Our probes and interventions are designed to steer behavior toward the intermediate band in Fig.~\ref{fig:nup_roadmap}.

\textbf{A note to the reader:} The following subsections present an operator/commutator formulation (borrowed from the Robertson--Schr\"odinger uncertainty relation) to justify the correlation channel behind our probe.
Readers could safely skip ahead to Secs.~\ref{sec:exp:protocol}--\ref{sec:experiments} for the experimental results and practical algorithms, and return to this theory section for context as needed.

\subsection{Loss-weighted state and canonical directional operators}
\label{subsec:state_and_ops}

Let $f_\theta$ be a differentiable model and $\mathcal L(\cdot)$ a scalar loss.
For a condition $c$, define the per-sample loss
\begin{equation}
\label{eq:loss_def_formalism}
\mathcal L_c(x) := \mathcal L\big(f_\theta(x), c\big).
\end{equation}
The real input-gradient field (loss sensitivity) is
\begin{equation}
\label{eq:real_grad_formalism}
p(x):=\nabla_x \mathcal L_c(x)\in\mathbb R^d .
\end{equation}
For small $\delta$, $\mathcal L_c(x+\delta)-\mathcal L_c(x)=p(x)^\top\delta+o(\|\delta\|_2)$, so $p(x)$ is the standard first-order
sensitivity object.

\vspace{2pt}
\noindent\textbf{Loss-induced weighting and state.}
To emphasize boundary-relevant regions in operator moments, we evaluate second-order statistics under a loss-induced
analysis weighting
\begin{equation}
\label{eq:weight_def_formalism}
w_c(x) := \frac{\mathcal L_c(x)^2}{\int_{\mathcal X}\mathcal L_c(\xi)^2\,d\xi},
\qquad
\mathbb E_c[g] := \int_{\mathcal X} g(x)\,w_c(x)\,dx.
\end{equation}
In practice, all expectations are instantiated either over an empirical dataset distribution or over the prompt set used in evaluation. 

It is convenient to encode $w_c$ as a normalized state in $\mathcal H:=L^2(\mathcal X)$:
\begin{equation}
\label{eq:state_def_formalism}
\begin{aligned}
    \psi_c(x) := &A_c(x)\exp\!\big(i\alpha\,\mathcal L_c(x)\big),
\\
A_c(x):= &\frac{|\mathcal L_c(x)|}{\sqrt{\beta_c}},
\\
\beta_c:= &\int_{\mathcal X}\mathcal L_c(x)^2\,dx,
\end{aligned}
\end{equation}
so that $|\psi_c(x)|^2=A_c(x)^2=w_c(x)$. We fix the phase scale $\alpha=1$ throughout. It only sets the phase convention of the auxiliary state and is not treated as a tunable parameter.
For a (densely defined) operator 
$\hat O$ on $L^2(\mathcal X)$, we write its expectation \cite{Sakurai2020} as \footnote{The expectation $\langle \hat O\rangle_c$ is the average value of this quantity in the state $\psi_c$, computed via the usual $L^2$ inner product.}
\begin{equation}
\label{eq:expectation_integral_formalism}
\langle \hat O\rangle_c \;:=\; \langle \psi_c,\hat O\psi_c\rangle
\;=\;\int_{\mathcal X}\psi_c(x)^{*}\,(\hat O\psi_c)(x)\,dx,
\end{equation}
where $(\cdot)^{*}$ denotes complex conjugation.

\vspace{2pt}
\noindent\textbf{Directional projection.}
To avoid ambiguity in vector-valued uncertainty statements, we introduce a unit direction $u\in\mathbb S^{d-1}$ and study
one-dimensional projections:
\begin{equation}
\label{eq:directional_projection_formalism}
\begin{aligned}
x_u:=&u^\top x,
\\
p_u(x):=&u^\top p(x)=\partial_u \mathcal L_c(x),
\qquad
\partial_u:=u^\top\nabla_x .
\end{aligned}
\end{equation}

\vspace{2pt}
\noindent\textbf{Canonical operators.} 
Define the multiplication and directional derivative operators as follows:
\begin{equation}
\label{eq:ops_formalism}
(\hat x_u g)(x):=(u^\top x)\,g(x),
\qquad
(\hat p_u g)(x):=-i\,\partial_u g(x),
\end{equation}
on a common dense domain $\mathcal D\subset L^2(\mathcal X)$ where the following algebraic manipulations are valid.
A direct calculation on $\mathcal D$ gives, for any test function $g$,
\begin{align}
[\hat x_u,\hat p_u]g
&=\hat x_u(-i\partial_u g)-(-i\partial_u)((u^\top x)g) \nonumber\\
&=-i(u^\top x)\partial_u g+i\big((u^\top x)\partial_u g+g\big)=ig,
\end{align}
hence
\begin{equation}
\label{eq:comm_formalism}
[\hat x_u,\hat p_u]=i\mathbb I.
\end{equation}
We use the canonical constant
\begin{equation}
\label{eq:kappa_formalism}
\kappa:=\tfrac12\big|\langle[\hat x_u,\hat p_u]\rangle_c\big|=\tfrac12,
\end{equation}
which follows immediately from \eqref{eq:comm_formalism} and $\langle\mathbb I\rangle_c=1$. Eq.~\eqref{eq:comm_formalism} is similar to the canonical position--momentum commutation relation from quantum mechanics \cite{Heisenberg1927,Dirac1958}.

\subsection{RS form of NUP and the mixed-axis corollary}
\label{subsec:rs_and_mixed_axis}

Operator inequalities constrain \emph{fluctuations} rather than means. Accordingly, for an observable $\hat O$ we write its centered version
as $\Delta\hat O:=\hat O-\langle\hat O\rangle_c$.
The dispersions of $\hat x_u$ and $\hat p_u$ under $\psi_c$ are then
\begin{equation}
\label{eq:disp_formalism}
\begin{aligned}
(\Delta\hat x_u)^2:=&\langle(\Delta\hat x_u)^2\rangle_c
=\mathbb E_c\!\big[(x_u-\mathbb E_c[x_u])^2\big],
\\
(\Delta\hat p_u)^2:=&\langle(\Delta\hat p_u)^2\rangle_c
=\big\|\Delta\hat p_u\,\psi_c\big\|_2^2,
\end{aligned}
\end{equation}
and the symmetrized covariance is
\begin{equation}
\label{eq:cov_formalism}
\mathrm{Cov}_c(\hat x_u,\hat p_u)
:=\frac12\Big\langle \Delta\hat x_u\,\Delta\hat p_u+\Delta\hat p_u\,\Delta\hat x_u\Big\rangle_c.
\end{equation}

\begin{theorem}[Neural Uncertainty Relation]
\label{thm:rs_form_of_nup}
For the canonical pair $(\hat x_u,\hat p_u)$ and the state $\psi_c$ with finite second moments,
\begin{equation}
\label{eq:rs_formalism}
(\Delta\hat x_u)^2(\Delta\hat p_u)^2\ \ge\ \kappa^2+\mathrm{Cov}_c(\hat x_u,\hat p_u)^2,
\end{equation}
where $\kappa=\tfrac12\big|\langle[\hat x_u,\hat p_u]\rangle_c\big|=\tfrac12$.
\end{theorem}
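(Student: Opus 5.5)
The plan is to reproduce the standard Robertson--Schr\"odinger argument on the Hilbert space $\mathcal H=L^2(\mathcal X)$, taking $\psi_c$ as the state. We work on the common dense domain $\mathcal D$ from Eq.~\eqref{eq:ops_formalism} and assume $\psi_c\in\mathcal D$, with $\hat x_u\psi_c,\hat p_u\psi_c\in\mathcal D$; the finite-second-moment hypothesis is meant to supply this. We also use that $\hat x_u$ and $\hat p_u$ are symmetric on $\mathcal D$. For $\hat x_u$ this is immediate, since it is multiplication by a real function. For $\hat p_u=-i\partial_u$ it follows from integration by parts along $u$, and the boundary terms vanish because $\psi_c$ decays (or vanishes on $\partial\mathcal X$). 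Under these conditions the centered operators $A:=\Delta\hat x_u$ and $B:=\Delta\hat p_u$ are symmetric, and $\langle A^2\rangle_c=\|A\psi_c\|^2$, $\langle B^2\rangle_c=\|B\psi_c\|^2$, which matches Eq.~\eqref{eq:disp_formalism}.

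\textbf{Step 1 (Cauchy--Schwarz).} Set $f:=A\psi_c$ and $g:=B\psi_c$. Then
\[
(\Delta\hat x_u)^2(\Delta\hat p_u)^2=\|f\|^2\|g\|^2\ \ge\ |\langle f,g\rangle|^2 .
\]

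\textbf{Step 2 (split the inner product).} By symmetry, $\langle f,g\rangle=\langle\psi_c,AB\psi_c\rangle$. We decompose
\[
AB=\tfrac12\{A,B\}+\tfrac12[A,B].
\]
Since $A$ and $B$ are symmetric, $\langle\tfrac12\{A,B\}\rangle_c$ is real and equals $\mathrm{Cov}_c(\hat x_u,\hat p_u)$ as defined in Eq.~\eqref{eq:cov_formalism}. The other piece, $\langle\tfrac12[A,B]\rangle_c$, is purely imaginary. So the real and imaginary parts of $\langle f,g\rangle$ separate, giving
\[
|\langle f,g\rangle|^2=\mathrm{Cov}_c^2+\tfrac14|\langle[A,B]\rangle_c|^2 .
\]

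\textbf{Step 3 (commutator).} Centering by scalars does not change the commutator, so $[A,B]=[\hat x_u,\hat p_u]=i\mathbb I$ on $\mathcal D$ by Eq.~\eqref{eq:comm_formalism}. Because $\|\psi_c\|=1$ (normalization $|\psi_c|^2=w_c$), $\tfrac14|\langle[A,B]\rangle_c|^2=\kappa^2=\tfrac14$, as in Eq.~\eqref{eq:kappa_formalism}. Combining Steps 1--3 gives Eq.~\eqref{eq:rs_formalism}.

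\textbf{Main obstacle.} The algebra is routine; the delicate part is the domain and symmetry conditions. Symmetry of $\hat p_u$ on $\mathcal D$ needs the boundary term $\int\partial_u(\bar\phi\chi)\,dx$ to vanish. In addition, $\psi_c=A_c e^{i\mathcal L_c}$ involves $|\mathcal L_c|$, which may fail to be differentiable where $\mathcal L_c=0$. I would handle this in one of two ways. The first is to note that typical losses such as cross-entropy are positive, so $|\mathcal L_c|=\mathcal L_c$ is as smooth as $f_\theta$. The second is to state explicitly that $\psi_c\in\mathcal D$ with $\psi_c\to0$ at the boundary, or that $\mathcal X=\mathbb R^d$ with sufficient decay. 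Either way, the finite second moments make all the quantities in the inequality finite. I would state this regularity as the standing interpretation of the theorem's hypothesis rather than try to derive it.
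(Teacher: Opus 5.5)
Your proposal is correct and follows the same route as the paper's Supplementary proof of the Robertson--Schr\"odinger inequality: apply Cauchy--Schwarz to $\Delta\hat x_u\psi_c$ and $\Delta\hat p_u\psi_c$, split $\langle\Delta\hat x_u\psi_c,\Delta\hat p_u\psi_c\rangle$ into a real anticommutator part (the covariance) and an imaginary commutator part, then use $[\hat x_u,\hat p_u]=i\mathbb I$ with $\|\psi_c\|=1$ to get $\kappa=\tfrac12$. The only difference is presentational: the paper proves the general self-adjoint case and then specializes it, whereas you work directly with the directional pair and state explicitly the vanishing-boundary-term (symmetry) condition, which the paper places in its standing domain assumptions.
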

\noindent
The inequality is the classical Robertson--Schr\"odinger relation in quantum physics \cite{Robertson1929,Schrodinger1930}; a standard proof is provided in the Supplementary Material.

\vspace{2pt}
\noindent\textbf{Correlation channel.}
Define the (operator) correlation coefficient
\begin{equation}
\label{eq:rho_formalism}
\rho_c(u):=\frac{\mathrm{Cov}_c(\hat x_u,\hat p_u)}{\Delta\hat x_u\,\Delta\hat p_u}\in[-1,1].
\end{equation}
Then Theorem~\ref{thm:rs_form_of_nup} is equivalently
\begin{equation}
\label{eq:rs_rho_formalism}
\big(\Delta\hat x_u\,\Delta\hat p_u\big)^2\bigl(1-\rho_c(u)^2\bigr)\ \ge\ \kappa^2 .
\end{equation}
The factor $(1-\rho^2)$ is the explicit coupling term that will later be approximated by a cheap proxy.

\vspace{4pt}
\noindent\textbf{Mixed-axis elimination of covariance.}
A useful way to ``spend'' the covariance term is to pass to a mixed observable.
For $\lambda\in\mathbb R$, define
\begin{equation}
\label{eq:mixed_obs_formalism}
\hat m_u(\lambda):=\hat x_u+\lambda\,\hat p_u.
\end{equation}
Expanding the variance yields
\begin{equation}
\label{eq:var_m_formalism}
(\Delta\hat m_u(\lambda))^2
=(\Delta\hat x_u)^2+\lambda^2(\Delta\hat p_u)^2+2\lambda\,\mathrm{Cov}_c(\hat x_u,\hat p_u).
\end{equation}
The minimizing coefficient is
\begin{equation}
\label{eq:lambda_star_formalism}
\lambda^\star=-\frac{\mathrm{Cov}_c(\hat x_u,\hat p_u)}{(\Delta\hat p_u)^2},
\end{equation}
and the minimum mixed variance satisfies the identity
\begin{equation}
\label{eq:min_var_m_formalism}
(\Delta\hat m_u^\star)^2
:=\min_{\lambda}(\Delta\hat m_u(\lambda))^2
=(\Delta\hat x_u)^2\bigl(1-\rho_c(u)^2\bigr).
\end{equation}

\begin{lemma}[Mixed-axis form of NUP]
\label{lem:mixed_axis_nup}
With $\Delta\hat m_u^\star$ defined by \eqref{eq:min_var_m_formalism}, Theorem~\ref{thm:rs_form_of_nup} implies
\begin{equation}
\label{eq:nup_direct_formalism}
\boxed{\ \Delta\hat m_u^\star\,\Delta\hat p_u\ \ge\ \kappa=\tfrac12\ }.
\end{equation}
\end{lemma}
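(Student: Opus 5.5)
The plan is to combine the identity \eqref{eq:min_var_m_formalism} with the correlation form \eqref{eq:rs_rho_formalism} of Theorem~\ref{thm:rs_form_of_nup}.

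\textbf{Step 1: confirm the minimal mixed variance.} Starting from \eqref{eq:var_m_formalism}, which is a quadratic in $\lambda$ with leading coefficient $(\Delta\hat p_u)^2$, we first check that $(\Delta\hat p_u)^2>0$. This holds for the state $\psi_c$. If $\Delta\hat p_u=0$, then Theorem~\ref{thm:rs_form_of_nup} would give $0\ge \kappa^2=\tfrac14$, a contradiction, so the quadratic is strictly convex. Completing the square at $\lambda^\star$ from \eqref{eq:lambda_star_formalism} gives
\[
(\Delta\hat m_u^\star)^2=(\Delta\hat x_u)^2-\frac{\mathrm{Cov}_c(\hat x_u,\hat p_u)^2}{(\Delta\hat p_u)^2}.
\]
Using definition \eqref{eq:rho_formalism}, this equals $(\Delta\hat x_u)^2(1-\rho_c(u)^2)$. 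The same argument shows $\Delta\hat x_u>0$, so $\rho_c(u)$ is well defined.

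\textbf{Step 2: multiply and invoke the theorem.} Multiplying the identity of Step 1 by $(\Delta\hat p_u)^2$ gives
\[
(\Delta\hat m_u^\star)^2(\Delta\hat p_u)^2=(\Delta\hat x_u)^2(\Delta\hat p_u)^2-\mathrm{Cov}_c(\hat x_u,\hat p_u)^2.
\]
By Theorem~\ref{thm:rs_form_of_nup}, the right-hand side is at least $\kappa^2$; this is exactly \eqref{eq:rs_rho_formalism}. Both dispersions are nonnegative, so taking square roots yields $\Delta\hat m_u^\star\,\Delta\hat p_u\ge\kappa=\tfrac12$, using \eqref{eq:kappa_formalism}.

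\textbf{Main obstacle: well-definedness rather than algebra.} The variance expansion \eqref{eq:var_m_formalism} needs two things:
\begin{itemize}
\item $\psi_c$ must lie in the domain $\mathcal D$ of $\hat x_u$, $\hat p_u$ and their products. This is guaranteed by the finite-second-moment hypothesis of the theorem.
\item $\Delta\hat x_u$ and $\Delta\hat p_u$ must be symmetric on $\mathcal D$, so that $(\Delta\hat m_u(\lambda))^2=\|\Delta\hat m_u(\lambda)\psi_c\|^2$ is real. The cross term then equals $2\lambda\,\mathrm{Re}\langle\Delta\hat x_u\psi_c,\Delta\hat p_u\psi_c\rangle$, which is the symmetrized covariance \eqref{eq:cov_formalism}.
\end{itemize}
I would state these as the standing assumptions on $\mathcal D$ used earlier, for example boundary terms vanishing in the integration by parts for $\hat p_u$. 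Granting them, the lemma is a direct algebraic corollary of the theorem.
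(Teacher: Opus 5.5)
Your proof is correct and follows the paper's route. The paper substitutes the identity $(\Delta\hat m_u^\star)^2=(\Delta\hat x_u)^2(1-\rho_c(u)^2)$ into the correlation form of the Robertson--Schr\"odinger inequality and takes square roots, which is exactly your Step~2; your Step~1 is a useful addition the paper takes for granted, deriving the identity by completing the square and using the theorem itself to rule out $\Delta\hat p_u=0$, so that the minimum exists and $\rho_c(u)$ is well defined.
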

\noindent
\textit{Proof.}
Substitute \eqref{eq:min_var_m_formalism} into \eqref{eq:rs_rho_formalism} to obtain
$(\Delta\hat m_u^\star)^2(\Delta\hat p_u)^2\ge \kappa^2$, and take square roots. \hfill$\square$

When $\rho_c(u)=0$, \eqref{eq:min_var_m_formalism} gives $\Delta\hat m_u^\star=\Delta\hat x_u$ and Lemma~\ref{lem:mixed_axis_nup}
reduces to the canonical Heisenberg form
\begin{equation}
\label{eq:heisenberg_degen_formalism}
\Delta\hat x_u\,\Delta\hat p_u\ \ge\ \tfrac12.
\end{equation}
In this regime the mixed axis offers no thinner direction than $\hat x_u$ itself; this is the coupling-free case aligned with
the Heisenberg-type tension studied in \cite{zhang_iscience2025, zhang_nsr2024, zhang_scirep2024, zhang2025analysis}.

\subsection{Interpreting Lemma~\ref{lem:mixed_axis_nup}: ellipse geometry and conjugate trade-off}
\label{subsec:geometry_precision_robustness}

It is convenient to switch between (i) the operator notation $(\hat x_u,\hat p_u)$ defining second moments under the state $\psi_c$,
and (ii) the induced scalar random variables obtained by evaluating these observables on samples.
Concretely, $\hat x_u$ is a multiplication operator, so its ``measurement'' on a sample $x$ is the scalar $x_u=u^\top x$; accordingly,
$\langle \hat x_u\rangle_c=\mathbb E_c[x_u]$ and $(\Delta\hat x_u)^2=\mathbb E_c[(x_u-\mathbb E_c[x_u])^2]$.
For $\hat p_u=-i\partial_u$, the corresponding \emph{real} sensitivity channel is the directional loss gradient
$p_u(x)=\partial_u \mathcal L_c(x)$.
Under the loss-induced state, the symmetrized \emph{operator} covariance reduces exactly to a \emph{scalar} covariance
(Theorem~\ref{thm:cov_reduction}).

\textbf{Boundary layer induced by loss weighting.}
All expectations $\mathbb E_c[\cdot]$ are taken under the analysis weighting $w_c(x)\propto \mathcal L_c(x)^2$
(Section~\ref{subsec:state_and_ops}).
In classification, large loss concentrates on misclassified or small-margin samples, which are typically boundary-adjacent.
Therefore, the loss-weighted population emphasized by $w_c$ can be viewed as an \emph{effective boundary layer}.

\textbf{Ellipse view of the loss-weighted $(x_u,p_u)$ cloud.}
With the above identification, the loss-weighted samples form a 2D cloud of points $(x_u,p_u)$ in the $(x_u,p_u)$ plane.
Its second-order geometry is summarized by the covariance matrix
\begin{equation}
\label{eq:sigma_formalism_repl}
\Sigma(u):=
\begin{pmatrix}
(\Delta\hat x_u)^2 & \mathrm{Cov}_c(\hat x_u,\hat p_u)\\
\mathrm{Cov}_c(\hat x_u,\hat p_u) & (\Delta\hat p_u)^2
\end{pmatrix}.
\end{equation}
Geometrically, $\Sigma(u)$ defines an ellipse: the correlation $\rho_c(u)$ controls its tilt (coupling),
while $\det\Sigma(u)$ controls its area. A short calculation gives
\begin{equation}
\label{eq:det_sigma_formalism_repl}
\begin{aligned}
\det\Sigma(u)
=&(\Delta\hat x_u)^2(\Delta\hat p_u)^2-\mathrm{Cov}_c(\hat x_u,\hat p_u)^2\\
=&\big(\Delta\hat x_u\,\Delta\hat p_u\big)^2\bigl(1-\rho_c(u)^2\bigr).
\end{aligned}
\end{equation}
Combining \eqref{eq:det_sigma_formalism_repl} with \eqref{eq:min_var_m_formalism} yields the factorization
\begin{equation}
\label{eq:area_factorization_formalism_repl}
\sqrt{\det\Sigma(u)}=\Delta\hat m_u^\star\,\Delta\hat p_u,
\end{equation}
so Theorem~\ref{thm:rs_form_of_nup} and Lemma~\ref{lem:mixed_axis_nup} can be read as an \emph{area lower bound}:
\begin{equation}
\label{eq:area_bound_formalism_repl}
\sqrt{\det\Sigma(u)}\ \ge\ \kappa=\tfrac12
\ \Longleftrightarrow \ 
\Delta\hat m_u^\star\,\Delta\hat p_u\ \ge\ \tfrac12.
\end{equation}

\textbf{Stability of the loss landscape.}
The mixed observable $\hat m_u(\lambda)=\hat x_u+\lambda\hat p_u$ corresponds to projecting the $(x_u,p_u)$ cloud onto the line
$m_u(\lambda)=x_u+\lambda p_u$.
The quantity $\Delta\hat m_u^\star:=\min_{\lambda}\Delta\hat m_u(\lambda)$ is the smallest standard deviation achievable among this one-parameter
family of projections, i.e., the thickness of the loss-weighted cloud along its sharpest mixed axis.

Thus, the decrease of $\Delta\hat m_u^\star$ means that within the loss-weighted (boundary-layer) population,
the \emph{position--sensitivity relation becomes more consistent}: similar positions $x_u$ tend to induce similar sensitivities $p_u$,
and conversely, a given sensitivity level $p_u$ corresponds to a narrower range of positions $x_u$.
Equivalently, the local loss landscape seen by boundary-adjacent samples is more ``aligned" across samples, producing a thinner ambiguity band
in the \emph{$(x_u,p_u)$ phase plane}. We emphasize that this is an \emph{effective} thickness (a second-order spread of a loss-weighted phase-plane
population), not the Euclidean thickness of the decision boundary in input space.

\textbf{Connection to clean accuracy.}
Since natural errors concentrate near the decision boundary, and $w_c$ upweights precisely these boundary-adjacent/high-loss samples,
reducing this boundary-layer ambiguity band (smaller $\Delta\hat m_u^\star$) typically reduces near-boundary natural misclassifications.
Consequently, $\Delta\hat m_u^\star$ often co-varies inversely with clean accuracy along training trajectories.

\textbf{Conjugate axis as sensitivity dispersion and robustness.}
In contrast, $\Delta\hat p_u$ measures the dispersion of the derivative operator $\hat p_u$ under $\psi_c$; under the loss-induced state,
$\hat p_u\psi_c$ contains a real drift term proportional to the real gradient $p_u(x)$ (Section~\ref{subsec:bridge_and_proxy}),
so $\Delta\hat p_u$ captures a boundary-weighted \emph{sensitivity scale}.
This connects to robustness via the first-order expansion
\begin{equation}
\label{eq:first_order_loss_repl}
\mathcal L_c(x+\delta)-\mathcal L_c(x)=p(x)^\top\delta+o(\|\delta\|_2),
\end{equation}
which implies that, under an $L_2$-bounded perturbation $\|\delta\|_2\le\epsilon$, the worst-case first-order loss increase scales as
$\epsilon\|p(x)\|_2$.
Therefore, larger boundary-weighted sensitivity dispersion (i.e., larger $\Delta\hat p_u$) indicates that smaller perturbations tend to induce larger loss changes
and more frequent boundary crossings, degrading adversarial robustness under gradient-based threat models.

\textbf{Accuracy--robustness tension as a conjugate constraint.}
Eq.~\eqref{eq:area_bound_formalism_repl} is the central geometric message: when the ellipse area is close to its RS lower bound,
reducing the thin-axis thickness $\Delta\hat m_u^\star$ (sharper boundary-layer consistency / thinner ambiguity band) necessarily enlarges the conjugate
sensitivity dispersion $\Delta\hat p_u$.
This is the sense in which the two quantities are \emph{conjugate}: compressing the boundary-weighted $(x_u,p_u)$ cloud along its sharpest mixed axis
must be compensated by expansion along the sensitivity axis so that the area constraint remains satisfied.
This provides a principled geometric vocabulary for the empirical accuracy--robustness tension.

It is sometimes convenient to define
\begin{equation}
\label{eq:slack_formalism_repl}
V_c(u):=\sqrt{\det\Sigma(u)}=\Delta\hat m_u^\star\,\Delta\hat p_u,
\ \ 
S_c(u):=\frac{V_c(u)}{\kappa}\ge 1.
\end{equation}
Here $S_c(u)\approx 1$ indicates a near-bound regime (limited slack), where the conjugate trade-off is most pronounced,
while $S_c(u)\gg 1$ indicates substantial slack in the second-order geometry.

\subsection{From operator moments to real gradients and a practical proxy}
\label{subsec:bridge_and_proxy}

The NUP above is stated at the operator level.
To connect it to computable neural quantities, we first show that the operator covariance
$\mathrm{Cov}_c(\hat x_u,\hat p_u)$ reduces \emph{exactly} to a scalar covariance involving the real gradient $p_u(x)$ under the loss-phase
state \eqref{eq:state_def_formalism}. We then introduce a cheap proxy for the coupling channel.

\paragraph{Operator covariance reduction}
\label{subsubsec:cov_reduction}
We first present the following theorem.

\begin{theorem}[Covariance reduction under the loss-induced state]
\label{thm:cov_reduction}
Let $\psi_c(x)=A_c(x)e^{i\alpha\mathcal L_c(x)}$ with $A_c^2=w_c$ as defined in Eq. \eqref{eq:state_def_formalism}.
Assume that boundary terms vanish for integration by parts along direction $u$, and
then the symmetrized operator covariance satisfies
\begin{equation}
\label{eq:cov_reduction_formalism}
\mathrm{Cov}_c(\hat x_u,\hat p_u)=\alpha\,\mathrm{Cov}^{\mathrm{sc}}_c(x_u,p_u),
\end{equation}
where $\mathrm{Cov}^{\mathrm{sc}}_c(g,h):=\mathbb E_c[gh]-\mathbb E_c[g]\mathbb E_c[h]$ is the scalar covariance under $w_c$ and
$p_u(x)=\partial_u\mathcal L_c(x)$.
\end{theorem}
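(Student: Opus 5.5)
The plan is to compute the two operator moments $\langle\hat p_u\rangle_c$ and $\tfrac12\langle\hat x_u\hat p_u+\hat p_u\hat x_u\rangle_c$ directly in the state $\psi_c=A_ce^{i\alpha\mathcal L_c}$. Afterwards we subtract $\langle\hat x_u\rangle_c\langle\hat p_u\rangle_c$. Expanding $\Delta\hat x_u\Delta\hat p_u+\Delta\hat p_u\Delta\hat x_u$ shows that
\[
\mathrm{Cov}_c(\hat x_u,\hat p_u)=\tfrac12\langle\hat x_u\hat p_u+\hat p_u\hat x_u\rangle_c-\langle\hat x_u\rangle_c\langle\hat p_u\rangle_c ,
\]
and $\langle\hat x_u\rangle_c=\mathbb E_c[x_u]$ holds because $\hat x_u$ acts by multiplication and $|\psi_c|^2=w_c$. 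So the whole argument reduces to showing two identities: $\langle\hat p_u\rangle_c=\alpha\,\mathbb E_c[p_u]$ and $\tfrac12\langle\hat x_u\hat p_u+\hat p_u\hat x_u\rangle_c=\alpha\,\mathbb E_c[x_up_u]$.

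\textbf{Step 1: the derivative of the state.} Apply the product rule, treating $A_c$ as differentiable along $u$ and assuming $\psi_c\in\mathcal D$. This gives
\[
-i\partial_u\psi_c=\bigl(\alpha\,p_u(x)A_c-i\,\partial_uA_c\bigr)e^{i\alpha\mathcal L_c}.
\]
It follows that $\psi_c^*\,\hat p_u\psi_c=\alpha p_uA_c^2-iA_c\partial_uA_c=\alpha p_uw_c-\tfrac{i}{2}\partial_u(w_c)$.

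\textbf{Step 2: the mean of $\hat p_u$.} Integrating the expression from Step 1, the imaginary term is $-\tfrac i2\int\partial_u w_c\,dx$. This vanishes by the assumed boundary condition, so $\langle\hat p_u\rangle_c=\alpha\mathbb E_c[p_u]$.

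\textbf{Step 3: the symmetrized moment.} Use the commutator \eqref{eq:comm_formalism} to write $\hat p_u\hat x_u=\hat x_u\hat p_u-i\mathbb I$. The symmetrized moment then equals $\mathrm{Re}\,\langle\hat x_u\hat p_u\rangle_c$, and we compute
\[
\langle\hat x_u\hat p_u\rangle_c=\int x_u\,\psi_c^*\hat p_u\psi_c\,dx=\alpha\,\mathbb E_c[x_up_u]-\tfrac i2\int x_u\,\partial_uw_c\,dx .
\]
Integrating by parts with $\partial_u x_u=1$, the last integral equals $-1$ (again discarding boundary terms). Hence $\langle\hat x_u\hat p_u\rangle_c=\alpha\mathbb E_c[x_up_u]+\tfrac i2$.

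Since $\hat p_u\hat x_u=\hat x_u\hat p_u-i\mathbb I$, the companion moment is $\langle\hat p_u\hat x_u\rangle_c=\alpha\mathbb E_c[x_up_u]-\tfrac i2$. The imaginary parts therefore cancel in the symmetrized sum, leaving exactly $\alpha\mathbb E_c[x_up_u]$. Combining this with Step 2 yields \eqref{eq:cov_reduction_formalism}.

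\textbf{Main obstacle.} The delicate point is regularity, not algebra. The amplitude $A_c=|\mathcal L_c|/\sqrt{\beta_c}$ involves an absolute value, so $\partial_uA_c$ must be interpreted where $\mathcal L_c$ vanishes. I would first note that for cross-entropy or shifted NLL the loss is positive, so $A_c=\mathcal L_c/\sqrt{\beta_c}$ is smooth. Otherwise, one can work with $w_c=\mathcal L_c^2/\beta_c$ directly, since only $A_c\partial_uA_c=\tfrac12\partial_uw_c$ enters the computation, and $w_c$ is differentiable whenever $\mathcal L_c$ is.

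The boundary-term hypothesis also has to cover two separate integrands: both $w_c$ and $x_uw_c$ must vanish at infinity (or at the boundary of $\mathcal X$) along $u$. I would state this explicitly as the precise form of the assumption. The finite second moments assumed in Theorem~\ref{thm:rs_form_of_nup} are what guarantee that all the integrals above converge.
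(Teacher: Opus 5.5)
Your proof is correct. It differs from the paper's mainly in how the amplitude term is disposed of.

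The paper works directly with the centered inner product. From the drift identity it obtains $\langle\Delta\hat x_u\psi_c,\Delta\hat p_u\psi_c\rangle=\alpha\,\mathbb E_c[\Delta x_u\,p_u]-i\,\mathbb E_c[\Delta x_u\,\partial_u\log A_c]$; the $\langle\hat p_u\rangle_c$ contribution drops out because $\mathbb E_c[\Delta x_u]=0$. It then invokes $\mathrm{Cov}_c(\hat x_u,\hat p_u)=\mathrm{Re}\,\langle\Delta\hat x_u\psi_c,\Delta\hat p_u\psi_c\rangle$. So the imaginary term is discarded without being evaluated, and $\langle\hat p_u\rangle_c$ never has to be computed. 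That real-part identity, however, tacitly uses the symmetry of $\hat p_u$ between $\psi_c$ and $x_u\psi_c$, and the reality of $\langle\hat p_u\rangle_c$.

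You instead expand into uncentered moments, replace self-adjointness by the commutator $\hat p_u\hat x_u=\hat x_u\hat p_u-i\mathbb I$, and evaluate everything explicitly with two integrations by parts. This costs a few extra lines, but it gains three things:
\begin{itemize}
\item It pins the boundary hypothesis down precisely as vanishing boundary terms of $w_c$ and $x_uw_c$ along $u$. These are exactly the conditions hidden in the paper's symmetry step.
\item It shows the discarded imaginary part is exactly $\tfrac12$, i.e.\ the commutator contribution that becomes $\kappa$ in the RS inequality.
\item By working with $A_c\partial_uA_c=\tfrac12\partial_uw_c$ rather than $\partial_u\log A_c$, it removes the paper's auxiliary requirement that $|\mathcal L_c|>0$ almost everywhere.
\end{itemize}
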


\noindent\textit{Proof sketch.}
We rewrite the argument in terms of the centered variable $\Delta x_u$ and apply the drift identity for $-i\partial_u\psi_c$, which makes the $\partial_u A_c$ contribution purely imaginary and hence irrelevant after taking the real part in the symmetrized covariance. Please see more proof details in the supplementary material.

\paragraph{CC-Probe as a proxy for the coupling channel}
\label{subsubsec:cc_probe}

The operator correlation $\rho_c(u)$ is defined through dataset-level moments under $w_c$ and depends on the choice of direction $u$.
To obtain a cheap per-sample statistic that tracks the same coupling channel appearing in
$\mathrm{Cov}^{\mathrm{sc}}_c(x_u,p_u)$, we use a geometric identity for random directional probes.

\begin{lemma}[Directional-probe correlation and cosine]
\label{lem:cos_dir_corr}
Fix nonzero vectors $x,p\in\mathbb R^d$, and draw $u\sim\mathrm{Unif}(\mathbb S^{d-1})$.
Let $x_u:=u^\top x$ and $p_u:=u^\top p$. Then
\begin{equation}
\label{eq:cos_dir_corr_formalism}
\mathrm{Corr}_u(x_u,p_u)=\frac{x^\top p}{\|x\|_2\,\|p\|_2}=\cos(x,p).
\end{equation}
\end{lemma}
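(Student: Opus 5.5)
The approach is to compute the first and second moments of the pair $(x_u,p_u)$ directly under $u\sim\mathrm{Unif}(\mathbb S^{d-1})$, with $x,p$ held fixed. The whole argument rests on the isotropy identities for the uniform distribution on the sphere. The first is $\mathbb E_u[u]=0$, which follows from the symmetry $u\mapsto -u$. The second is $\mathbb E_u[uu^\top]=\tfrac1d I_d$. To see this, note that by rotation invariance the matrix $\mathbb E_u[uu^\top]$ commutes with every orthogonal matrix, so it must be a multiple $cI_d$. Taking the trace then gives $c\,d=\mathbb E_u[\|u\|_2^2]=1$. Neither identity needs any computation beyond this symmetry argument.

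With these identities in hand, the steps go in order. First, the means vanish: $\mathbb E_u[x_u]=\mathbb E_u[u]^\top x=0$, and similarly $\mathbb E_u[p_u]=0$. As a result, covariances reduce to raw second moments. Second, write $x_up_u=x^\top uu^\top p$ and take expectations to get
\begin{equation*}
\mathrm{Cov}_u(x_u,p_u)=x^\top\,\mathbb E_u[uu^\top]\,p=\tfrac1d\,x^\top p .
\end{equation*}
Third, the same computation with $p$ replaced by $x$, and then $x$ replaced by $p$, gives $\mathrm{Var}_u(x_u)=\tfrac1d\|x\|_2^2$ and $\mathrm{Var}_u(p_u)=\tfrac1d\|p\|_2^2$. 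Both are strictly positive because $x$ and $p$ are nonzero, so the correlation is well defined.

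Finally, form the ratio $\mathrm{Cov}_u/\sqrt{\mathrm{Var}_u\,\mathrm{Var}_u}$. The common factor $1/d$ cancels, leaving
\begin{equation*}
\frac{x^\top p}{\|x\|_2\,\|p\|_2}=\cos(x,p),
\end{equation*}
which is exactly \eqref{eq:cos_dir_corr_formalism}.

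The only step that needs real justification is $\mathbb E_u[uu^\top]=\tfrac1d I_d$. Everything else is linearity of expectation. I will prove it with the invariance argument above: for any orthogonal $Q$, $Qu$ has the same law as $u$. If a more elementary route is preferred, an alternative is to realize $u=g/\|g\|_2$ with $g\sim\mathcal N(0,I_d)$. Then the coordinate symmetries $u_i\mapsto -u_i$ kill the off-diagonal entries, and exchangeability of the coordinates makes all diagonal entries equal to $1/d$.

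The result also holds for any isotropic distribution of $u$, that is, any law with mean zero and covariance proportional to $I_d$. I may remark on this after the proof, since it shows the CC-Probe identity does not depend on the specific spherical law.
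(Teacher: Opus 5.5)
Your proposal is correct and follows the paper's proof essentially step for step: both use $\mathbb E_u[u]=0$ and $\mathbb E_u[uu^\top]=\tfrac1d I$ to get zero means and second moments $\tfrac1d x^\top p$, $\tfrac1d\|x\|_2^2$, $\tfrac1d\|p\|_2^2$, whose ratio is $\cos(x,p)$. Your only additions are a justification of the isotropy identity (rotation invariance plus a trace argument) and the remark that the variances are positive, both of which the paper takes for granted.
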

\noindent\textit{Proof.}
Since $\mathbb E_u[u]=0$ and $\mathbb E_u[uu^\top]=\frac{1}{d}I$,
\begin{equation}
\begin{aligned}
\mathbb E_u[x_u p_u]=&\mathbb E_u[(u^\top x)(u^\top p)]=x^\top \mathbb E_u[uu^\top]p=\tfrac1d x^\top p,
\\
\mathbb E_u[x_u^2]=&\tfrac1d\|x\|_2^2,
\\
\mathbb E_u[p_u^2]=&\tfrac1d\|p\|_2^2.
\end{aligned}
\end{equation}
Because $\mathbb E_u[x_u]=\mathbb E_u[p_u]=0$, the Pearson correlation equals
$\mathrm{Corr}_u(x_u,p_u)=\mathbb E_u[x_u p_u]/\sqrt{\mathbb E_u[x_u^2]\mathbb E_u[p_u^2]}=\cos(x,p)$. \hfill$\square$

Motivated by Lemma~\ref{lem:cos_dir_corr} and Theorem~\ref{thm:cov_reduction}, we provide the
\textbf{Conjugate Correlation Probe (CC-Probe, briefly)} for a sample $x$ as
\begin{equation}
\label{eq:ccprobe_formalism}
c_{\text{probe}}(x):=\big|\cos\!\big(x,\,p(x)\big)\big|
=\frac{\big|x^\top p(x)\big|}{\|x\|_2\,\|p(x)\|_2}.
\end{equation}
The quantity $c_{\text{probe}}$ is a cheap per-sample indicator of directional coupling: averaged over random directions, $\cos(x,p)$ is exactly the correlation
between the projected coordinate and projected gradient (Lemma~\ref{lem:cos_dir_corr}), and the dataset-level covariance channel driving
$\rho_c(u)$ reduces to scalar covariances between $x_u$ and $p_u$ (Theorem~\ref{thm:cov_reduction}).
This relationship is sufficient for $c_{\text{probe}}$ to serve as a practical proxy used in our experiments and interventions.

\subsection{Testable predictions}
\label{subsec:nur_regimes}

The NUP motivates a simple idea: \emph{abnormal input--gradient coupling} is a reliable marker of failure modes. We test this idea with the CC-Probe introduced above.

\begin{proposition}[Exp.~1 prediction: a ``high-cosine tail'' marks hard/fragile vision samples]
\label{prop:exp1_sep}
As training improves clean accuracy, correctly classified images should show \emph{lower} CC-Probe $c_{\text{img}}$, while misclassified/hard images remain at \emph{higher} $c_{\text{img}}$ (a persistent high-cosine tail). Adversarial errors should concentrate on this high-cosine tail.
\end{proposition}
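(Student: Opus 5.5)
Since Proposition~\ref{prop:exp1_sep} is an empirical prediction, not a theorem, the plan is a heuristic derivation from the NUP geometry followed by a reduction to a checkable experimental criterion. The derivation runs through the loss-induced weighting $w_c\propto\mathcal L_c^2$ and the covariance channel of Theorem~\ref{thm:cov_reduction}. First, with $\alpha=1$ we have $\mathrm{Cov}_c(\hat x_u,\hat p_u)=\mathrm{Cov}^{\mathrm{sc}}_c(x_u,p_u)$. Averaging over random directions $u$ and applying Lemma~\ref{lem:cos_dir_corr} per sample, the per-sample quantity $\cos(x,p(x))$ becomes the directional correlation that feeds this covariance. A sample with large $|\cos(x,p)|$ therefore contributes strongly to $|\rho_c(u)|$, which by \eqref{eq:min_var_m_formalism} shrinks $\Delta\hat m_u^\star$. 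Lemma~\ref{lem:mixed_axis_nup} then forces $\Delta\hat p_u\ge 1/(2\Delta\hat m_u^\star)$ to grow. This formalizes the claim that high coupling marks the near-bound, high-sensitivity regime where the slack $S_c(u)$ in \eqref{eq:slack_formalism_repl} is close to $1$.

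Second, I will argue the training-dynamics part. As clean accuracy improves, correctly classified samples have $\mathcal L_c\to 0$, so their gradient $p(x)=\partial\mathcal L/\partial f\cdot J_f(x)$ shrinks. More importantly, its direction becomes dominated by generic logit-Jacobian directions that are not aligned with the input itself. Under a random-feature heuristic this gives $|\cos(x,p)|\approx O(1/\sqrt d)$.

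Misclassified and small-margin samples behave differently. They retain large loss and hence most of the weight under $w_c$. Their gradient points along the boundary normal, which is learned precisely from the discriminative features these inputs contain. To make this quantitative I would use a local linearization $f(x)\approx Wx+b$. In that case $x^\top p=x^\top W^\top(\sigma(f)-e_y)$, which is of the order of the logit-margin change. This scalar stays of order one when the margin is small or negative and is suppressed when the margin is large and the softmax saturates.

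Third, the adversarial claim follows from the first-order expansion \eqref{eq:first_order_loss_repl}. High-cosine tail samples are the high-$w_c$, high-$\Delta\hat p_u$ population, so an $\epsilon$-bounded attack crosses the boundary for them first. Adversarial errors should therefore concentrate there.

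Operationally, the proposition reduces to three checks: (i) monotone separation of the $c_{\text{img}}$ distributions of correct and wrong samples across epochs; (ii) an AUROC above chance for $c_{\text{img}}$ as a predictor of PGD or FGSM failure; and (iii) enrichment of adversarial errors in the top quantiles of $c_{\text{img}}$.

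The main obstacle is the second step, the claim that $|\cos(x,p)|$ is systematically lower for correct samples. Nothing in the RS bound alone dictates per-sample cosines. The bound constrains dataset-level moments, and Lemma~\ref{lem:cos_dir_corr} links cosines to correlation only after averaging over $u$, not over samples. I would first try to make this rigorous for a linear or softmax-regression model, where $x^\top p$ has the closed form above. Beyond that case I would state the conclusion explicitly as a prediction, to be validated by Exp.~1 rather than proved in general.
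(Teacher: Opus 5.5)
Your overall framing matches the paper's, but the one step that tries to go beyond heuristics is wrong. The paper never derives this proposition. It motivates it with the same slack/boundary-stress vocabulary as your first step. It then supports it only empirically: Exp.~1 tracks $\bar c_{\text{img}}$ for correct versus misclassified samples over epochs, and the adversarial-concentration clause is not tested there directly.

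\textbf{The gap is in your second step.} It argues through magnitudes: $\mathcal L_c\to 0$, $\|p\|_2$ shrinking, and $x^\top p$ suppressed by softmax saturation. But $c_{\text{img}}=|x^\top p|/(\|x\|_2\|p\|_2)$ is scale-free, as the paper stresses. Any factor $1-\sigma_y$ that multiplies $x^\top p$ multiplies $\|p\|_2$ too. Worse, the linear/softmax case you propose to make rigorous points the other way. Take $f(x)=Wx$, or any bias-free ReLU network (Euler: $x^\top\nabla_x f_k=f_k$), with two dominant classes $y,k^\star$ and $m:=f_y-f_{k^\star}$. Then $p=-\sigma_{k^\star}\nabla_x m$ and $x^\top p=-\sigma_{k^\star}m$, so
\[
c_{\text{img}}(x)=\frac{|m(x)|}{\|x\|_2\,\|\nabla_x m(x)\|_2},
\]
which is a normalized margin. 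Several things follow:
\begin{itemize}
\item The saturation factor cancels.
\item $c_{\text{img}}$ vanishes on the decision boundary and is large for confidently correct and confidently wrong samples alike.
\item A correct sample's gradient points along the discriminative direction $W^\top(e_{k^\star}-e_y)$, not a generic one, so the random-feature $O(1/\sqrt d)$ heuristic does not apply.
\end{itemize}
So your claim that $x^\top p$ is ``of order one when the margin is small'' is false. Semi-hard samples land in the \emph{low}-cosine region, and correctness per se does not separate the two populations. Whatever produces the separation seen in Exp.~1 must come from non-homogeneous ingredients your argument never uses: biases, normalization layers, and mean/std-normalized inputs.

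\textbf{Your first and third steps have a sample-versus-population gap that you only partly acknowledge.} In step one, $\mathrm{Cov}^{\mathrm{sc}}_c(x_u,p_u)$ is a cross-sample covariance at fixed $u$ with dataset-level centering. A sample enters it as $w_c(x)\,(u^\top\bar x)(u^\top\bar p)$, which depends on magnitudes and signs, so contributions can cancel. A large uncentered $|\cos(x,p)|$ for one sample therefore does not raise $|\rho_c(u)|$. Also, the mixed-axis lemma only raises a lower bound on $\Delta\hat p_u$. That bound binds only when $S_c(u)\approx 1$, and a large $|\rho_c(u)|$ does not imply this, because $\Delta\hat x_u\,\Delta\hat p_u$ is unconstrained above. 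In step three, the first-order expansion ties fragility to $\epsilon\|p(x)\|_2$ relative to the margin. That is again a magnitude, invisible to $c_{\text{img}}$. Moreover, $\Delta\hat p_u$ is a population dispersion, not a per-sample attribute. So ``high cosine, hence attacked first'' is the empirical content of the proposition, not a consequence of the bound. Your fallback of stating it as a prediction and testing it is the right one, and it is what the paper does. Your checks (ii) and (iii) would test the adversarial clause more directly than Exp.~1 does.
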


\begin{proposition}[Exp.~2 prediction: $\pm$FGSM changes CC-Probe in the expected direction]
\label{prop:exp2_causal}
For a fixed vision model, a small gradient-aligned perturbation (+FGSM) should \emph{increase} $c_{\text{img}}$ and reduce accuracy, while a sufficiently small anti-aligned perturbation (--FGSM) should \emph{decrease} $c_{\text{img}}$ and can preserve (or slightly improve) accuracy.
\end{proposition}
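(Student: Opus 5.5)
The plan is to treat Proposition~\ref{prop:exp2_causal} as a first-order (locally linear) perturbation statement about the CC-Probe \eqref{eq:ccprobe_formalism} and the loss, valid for small FGSM step size $\epsilon$. Write the perturbed input as $x_\pm(\epsilon)=x\pm\epsilon s$ with $s:=\mathrm{sign}(p(x))$. Two quantities then need to be differentiated at $\epsilon=0$:
\begin{itemize}
\item the loss $\mathcal L_c(x_\pm(\epsilon))$;
\item the signed cosine $\cos(x_\pm,p(x_\pm))$, whose absolute value is $c_{\text{img}}$.
\end{itemize}
The accuracy half is handled by the first-order expansion \eqref{eq:first_order_loss_repl}. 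The probe half is handled by a direct derivative of the cosine, with the Hessian of $\mathcal L_c$ entering through $p(x_\pm)=p+\epsilon H s+o(\epsilon)$, where $H:=\nabla_x^2\mathcal L_c(x)$.

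\textbf{Step 1 (loss and accuracy).} From \eqref{eq:first_order_loss_repl},
\[
\mathcal L_c(x_\pm)-\mathcal L_c(x)=\pm\epsilon\,p^\top s+O(\epsilon^2)=\pm\epsilon\|p\|_1+O(\epsilon^2).
\]
So $+$FGSM strictly increases the ground-truth cross-entropy, which lowers the true-class logit margin and, aggregated over samples, reduces accuracy. For $\epsilon$ below the scale where the $O(\epsilon^2)$ term competes with $\epsilon\|p\|_1$, $-$FGSM strictly decreases the loss. It therefore cannot flip a correct sample to wrong at first order, and it can push near-margin misclassified samples across the boundary. This gives ``preserve (or slightly improve)'' and explains the qualifier ``sufficiently small''.

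\textbf{Step 2 (probe derivative).} Write $N=x^\top p$, $a=\|x\|_2$, $b=\|p\|_2$ and $\gamma=N/(ab)$. Differentiating gives
\[
\frac{d}{d\epsilon}\gamma\Big|_{0}=\pm\frac{1}{ab}\Big(\|p\|_1-\gamma\,\tfrac{b}{a}\,x^\top s\Big)\;\pm\;\frac{1}{ab}\Big(x^\top H s-\gamma\,\tfrac{a}{b}\,p^\top H s\Big).
\]
The first bracket is the \emph{alignment term}. Moving $x$ toward $s$ adds $\|p\|_1$ to the inner product, while the normalisation cost is controlled by $|x^\top s|\le\|x\|_1$. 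The second bracket is a \emph{curvature term}.

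I will then work in the locally linear regime that motivates FGSM \cite{goodfellow2014explaining}, i.e.\ assume the curvature term is dominated by the alignment term. Under that assumption, the condition $\|p\|_1> |\gamma|\,\frac{b}{a}|x^\top s|$ makes $\frac{d}{d\epsilon}\gamma$ carry the sign $\pm$. When additionally $\gamma\ge 0$ (or more generally $\mathrm{sign}(\gamma)$ matches the alignment sign), this becomes $\frac{d}{d\epsilon}c_{\text{img}}\gtrless0$ for $\pm$FGSM, which is the claimed direction. For natural images, where $|\gamma|$ is small by Proposition~\ref{prop:exp1_sep}'s regime and $\|p\|_1$ is comparable to $\sqrt d\,b$ for dense gradients, the sufficient condition reads roughly $|\gamma|\,|x^\top s|/a\ll\sqrt d$, which is typically satisfied.

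\textbf{Main obstacle.} The hard part is that neither the curvature term nor the sign of $x^\top p$ is controlled by the NUP itself. The proposition is therefore an expected-direction prediction, not a theorem for every sample.

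\begin{itemize}
\item \emph{Curvature term.} I would first try to bound $|x^\top Hs|+|\gamma|\frac{a}{b}|p^\top Hs|\le (a+|\gamma|a)\|H\|_{\mathrm{op}}\sqrt d$. Comparing this with $\|p\|_1$ yields an explicit sufficient smoothness condition, roughly $\|H\|_{\mathrm{op}}\,a\lesssim\|p\|_1/\sqrt d$.
\item \emph{Sign of $x^\top p$.} For samples with $\gamma<0$, $+$FGSM first drives $|\gamma|$ toward zero before increasing it. I would state the result for the population average, or for samples with $\gamma\ge0$, and treat the remainder as the exception that the experiment measures empirically.
\end{itemize}
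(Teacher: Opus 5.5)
Your derivative of $\gamma=\cos(x,p)$ is correct. The alignment bracket equals $s^\top(I-xx^\top/\|x\|_2^2)\,p$ and the curvature bracket equals $x^\top(I-pp^\top/\|p\|_2^2)\,Hs$. The gap is in going from this derivative to the proposition, and it is the step you treated as a side issue. Since $c_{\text{img}}=|\gamma|$, you have $\frac{d}{d\epsilon}c_{\text{img}}=\mathrm{sign}(\gamma)\,\frac{d}{d\epsilon}\gamma$. So even if the alignment term dominates, the predicted direction holds only where $x^\top p>0$. Your plan to average away the $\gamma<0$ samples assumes they are a minority, and nothing in the NUP controls the sign of $x^\top p$. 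In natural model classes $\gamma<0$ is the generic case. For a bias-free (positively homogeneous) ReLU network, Euler's identity $x^\top\nabla_x z_k=z_k$ gives $x^\top p=\mathbb E_\pi[z]-z_y<0$ for every correctly classified sample.

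The simplest instance is an outright counterexample to the local statement. Take a bias-free binary logistic model with margin $\Delta=v^\top x$. Then $p=-\eta v$ with $\eta=(1+e^{\Delta})^{-1}$, so $p$ never rotates and your curvature bracket vanishes identically. The alignment bracket is positive for $x$ in general position. Yet $c_{\text{img}}=|\Delta|/(\|x\|_2\|v\|_2)$ strictly decreases under $+$FGSM, and increases under $-$FGSM, on every such correctly classified sample.

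So the claimed direction does not follow from a first-order expansion plus smoothness. It depends on how trained deep networks distribute $\mathrm{sign}(x^\top p)$ and rotate their gradients. This is why the paper gives no derivation. It states the proposition as a testable prediction motivated by the NUP, and supports it only empirically through the signed-FGSM sweeps of Exp.~2. It also explicitly treats $c_{\text{img}}$ as a proxy, not an estimator of $V_c(u)$ or $S_c(u)$.

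The curvature step also fails as planned. Local linearity of the logits, which is what motivates FGSM, does not make the loss curvature negligible. Even for exactly linear logits $z=Wx$, the softmax gives $H=W^\top(\mathrm{diag}\,\pi-\pi\pi^\top)W\neq 0$ with $x^\top Hs=\mathrm{Cov}_\pi(z,Ws)$. In the multiclass case $p=W^\top(\pi-e_y)$ rotates as the relative weights of the wrong classes change. The curvature bracket is then nonzero and is governed by logit differences, not by any small nonlinearity.

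Your explicit sufficient condition $\|H\|_{\mathrm{op}}\|x\|_2\lesssim\|p\|_2$ is vacuous in the regime of interest. In the binary linear case $\|H\|_{\mathrm{op}}\|x\|_2/\|p\|_2=(1-\eta)\|v\|_2\|x\|_2\ge(1-\eta)|\Delta|$, so the condition requires a logit margin of order one. It therefore fails for essentially every confidently classified sample, even though there the true curvature bracket is exactly zero; the operator-norm bound is the wrong tool.

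A smaller point: for $K>2$ classes, a decrease in cross-entropy does not preserve the argmax. Your claim ``$-$FGSM cannot flip a correct sample'' therefore needs a margin argument, not a loss argument.

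To obtain an actual theorem you would need hypotheses that neither Theorem~\ref{thm:rs_form_of_nup} nor Theorem~\ref{thm:cov_reduction} supplies. For example, you would need $x^\top p\ge 0$ on the relevant population together with a direct bound on the rotation term $x^\top(I-pp^\top/\|p\|_2^2)Hs$.
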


\begin{proposition}[Exp.~3--4 prediction: training that suppresses strong $x\!\cdot\!p$ coupling becomes more robust]
\label{prop:exp34_mask}
If vulnerability is driven by a few input components with large coupling scores
(implemented in practice as the channel-wise normalized interaction
$|\tilde x_{c,j}\tilde p_{c,j}|$),
then masking these dominant components during training (ConjMask) should improve robustness to standard gradient attacks (e.g., PGD/APGD-CE) without adversarial training (Exp.~3). If this robustness is loss-dependent, adding a logit stabilizer (LogitReg) should restore robustness under stronger loss-optimized attacks (e.g., APGD-DLR) (Exp.~4).
\end{proposition}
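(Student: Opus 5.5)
Since the statement is a \emph{prediction} rather than a closed mathematical claim, I would not prove it as stated. Instead I would derive a conditional first-order version from Theorems~\ref{thm:rs_form_of_nup} and~\ref{thm:cov_reduction} and Lemma~\ref{lem:cos_dir_corr}, and leave the empirical content to Exps.~3--4. The argument is a chain of three implications: masking reduces coupling, reduced coupling means reduced first-order attack gain, and loss dependence is what motivates LogitReg.

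\textbf{Step 1 (masking shrinks the coupling channel).} Write $x^\top p(x)=\sum_j x_j p_j$. I would take the stated hypothesis in a concrete form: the mass of $\sum_j |x_jp_j|$ is concentrated on a small index set $J$ (after the channel-wise normalization in $|\tilde x_{c,j}\tilde p_{c,j}|$). Zeroing the components in $J$, that is $x\mapsto x\odot(1-\mathbf 1_J)$, removes the dominant terms. This yields $|x^\top p|$ small relative to $\|x\|_2\|p\|_2$, so $c_{\text{probe}}$ decreases. By Lemma~\ref{lem:cos_dir_corr}, the direction-averaged $|\mathrm{Corr}_u(x_u,p_u)|$ then drops. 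By Theorem~\ref{thm:cov_reduction}, $|\rho_c(u)|$ also drops on the loss-weighted population, since $\mathrm{Cov}_c(\hat x_u,\hat p_u)$ is exactly the scalar covariance. Training on masked inputs makes the network fit without relying on $J$. The learned gradient field on clean inputs should therefore place less mass on high-coupling coordinates.

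\textbf{Step 2 (from reduced coupling to robustness).} Using \eqref{eq:first_order_loss_repl}, an $\ell_\infty$ PGD step gains roughly $\epsilon\|p(x)\|_1$. I would argue that the masked training objective implicitly penalizes the components of $p$ that co-occur with large $|x_j|$, lowering the boundary-weighted $\Delta\hat p_u$ along those directions. Via \eqref{eq:area_bound_formalism_repl}, the population then moves from the upper-left, near-bound corner ($S_c\approx1$) toward the intermediate band. The price is a modest increase in $\Delta\hat m_u^\star$, which corresponds to the mild clean-accuracy cost we expect.

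\textbf{Step 3 (loss dependence and LogitReg).} I would exploit the fact that DLR is invariant to logit scaling and shifting. Suppose the Step~2 gain is partly achieved by flattening the CE gradient, for example through saturated softmax or large-norm logits. Then it need not bound the logit-margin gradient that APGD-DLR follows. A logit penalty controls $\|f_\theta(x)\|$, so the CE gradient becomes comparable to the margin gradient, and the reduced coupling then transfers to DLR.

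\textbf{Main obstacle.} The hardest link is Step~2. NUP is a \emph{lower} bound, so reducing $\rho$ alone does not force $\Delta\hat p_u$ down; it actually relaxes the constraint. I would therefore have to assume that training dynamics saturate near the bound. Even granting that, the dataset-level operator moments differ from the per-sample $\|p\|$ that attacks exploit. I would try first to state this as an explicit assumption (near-bound saturation plus concentration of $|x_jp_j|$) rather than derive it. I would then treat Exps.~3--4 as the test of that assumption.
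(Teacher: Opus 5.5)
Your conclusion agrees with the paper. Proposition~\ref{prop:exp34_mask} is never derived there. It is only motivated by the coupling channel of Theorem~\ref{thm:cov_reduction} and Lemma~\ref{lem:cos_dir_corr}, and your Steps~1--2 essentially formalize the paper's own phase-plane narrative. It is then supported empirically by the robust accuracies in Table~\ref{tab:merged_robustness_results_eps8}. The conditional derivation you place in front of it, however, fails at more points than the one you flag.

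In Step~1, Lemma~\ref{lem:cos_dir_corr} concerns a single sample, uncentered, with $u$ random. Theorem~\ref{thm:cov_reduction} instead concerns a fixed $u$ and a covariance \emph{across samples} under $w_c$, centered at $\mathbb E_c[x]$ and $\mathbb E_c[p]$. Small per-sample $|\cos(x,p)|$ does not make that covariance small. For example, two equally weighted samples $(x,p)=(e_1,e_2)$ and $(-e_1,-e_2)$ have $\cos(x,p)=0$, yet $\mathrm{Corr}^{\mathrm{sc}}_c(x_u,p_u)=1$ for $u=(e_1+e_2)/\sqrt2$. This matches the paper's own caveat that the CC-Probe is a proxy, not an estimator of $\rho_c(u)$.

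Moreover, ConjMask does not zero coordinates; it blends Gaussian noise into triggered samples. The only coupling it lowers by construction is therefore (approximately) $|\cos(x_{\mathrm{mask}},p(x))|$ against the old gradient. The prediction concerns the clean-input gradient field of the fine-tuned model, and your ``should therefore'' is exactly that unproved link. Exps.~3--4 never measure $c_{\text{img}}$, $\rho_c$ or $S_c$ after training, so they test only robust accuracy, not the intermediate assumptions you propose to check with them.

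Step~2 fails more sharply than ``NUP is only a lower bound''. Combining Theorem~\ref{thm:cov_reduction} with the decomposition $(\Delta\hat p_u)^2=\alpha^2\mathrm{Var}_c(p_u)+\mathrm{Var}_c(\partial_u\log A_c)$ from the Supplementary Material gives
\begin{multline*}
\det\Sigma(u)=(\Delta\hat x_u)^2\,\mathrm{Var}_c(\partial_u\log A_c)\\
+\alpha^2(\Delta\hat x_u)^2\,\mathrm{Var}_c(p_u)\bigl(1-\mathrm{Corr}^{\mathrm{sc}}_c(x_u,p_u)^2\bigr).
\end{multline*}
Integration by parts, under the same boundary assumption, gives $\mathbb E_c[(x_u-\mathbb E_c[x_u])\,\partial_u\log A_c]=-\tfrac12$. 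By Cauchy--Schwarz, the first term alone is therefore at least $\tfrac14$; this is a Cram\'er--Rao inequality for the density $w_c$ along $u$.

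The whole bound is thus paid by the shape of $w_c$, and the gradient-coupling term is nonnegative slack. Other things fixed, lowering $|\mathrm{Corr}^{\mathrm{sc}}_c|$ moves the system \emph{away} from the bound. Near-saturation requires $w_c$ to be nearly Gaussian along $u$ and the second term to be $\approx0$. So assuming the system stays near the bound after Step~1 has lowered the coupling already entails $(\Delta\hat x_u)^2\,\mathrm{Var}_c(p_u)\approx0$. That is Step~2's conclusion, so the inequality does no work.

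Even granting it, this is a loss-weighted across-sample variance along one $u$. It is blind to $\mathbb E_c[p_u]$ and to the per-sample magnitude behind the first-order $\ell_\infty$ gain $\epsilon\|p(x)\|_1$, which the scale-free cosine cannot see either. Also, $w_c\propto\mathcal L_c^2$ gives little weight to the low-loss correct samples whose flips account for most of the lost robust accuracy.

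In Step~3, logit control removes, rather than transfers, whatever part of the CE gain came from softmax saturation. Nothing in the framework links CE-loss coupling to the DLR gradient. Your mechanism therefore predicts the Exp.~4 drop on PGD-20/APGD-CE (ResNet-18: $77.55\to62.71$ and $77.02\to36.92$), but not the APGD-DLR rise ($1.28\to43.29$). It also ignores the KL consistency term toward $X_{\mathrm{aux}}$ in the actual Exp.~4 objective. Finally, your Step~3 premise is the gradient-masking reading of the Exp.~3 CE/DLR gap. You should state it as a caveat on the first clause rather than absorb it into the argument.
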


\begin{proposition}[Exp.~5--6 prediction: in LLM prefill, low CC-Probe means under-conditioning and higher hallucination risk]
\label{prop:exp56_llm}
Using only prefill (no decoding), prompts with unusually \emph{low} $c_{\text{prompt}}$ should have higher hallucination risk; therefore $-c_{\text{prompt}}$ should predict hallucination above chance (Exp.~5). Among paraphrased prompts, choosing the one with \emph{higher} $c_{\text{prompt}}$ should more often select the judge-preferred prompt (Exp.~6).
\end{proposition}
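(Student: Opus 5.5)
Because this proposition is an empirical prediction, my plan is to split it into two parts. First, a conditional derivation from the NUP shows \emph{why} low $c_{\text{prompt}}$ should signal under-conditioning. Second, there is a falsifiable statistical claim that the experiments must settle.

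\textbf{Step 1: from per-prompt cosine to slack.} For the theoretical part, I would take the mean-centered prompt embedding $\bar x$ and its gradient $\bar p$ under the shifted-NLL loss $\mathcal L_c$. By Lemma~\ref{lem:cos_dir_corr}, $c_{\text{prompt}}=|\cos(\bar x,\bar p)|$ equals $|\mathrm{Corr}_u(x_u,p_u)|$ over random directions $u$. By Theorem~\ref{thm:cov_reduction} with $\alpha=1$, the operator covariance equals the scalar covariance $\mathrm{Cov}^{\mathrm{sc}}_c(x_u,p_u)$. I would add an explicit modelling assumption: the loss-weighted population correlation $|\rho_c(u)|$, averaged over $u$, is monotone in the per-sample cosines of the prompts that dominate $w_c$. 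Low $c_{\text{prompt}}$ then means $\rho_c(u)\approx 0$.

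\textbf{Step 2: read off the phase-plane regime.} With $\rho_c(u)\approx 0$, Eq.~\eqref{eq:min_var_m_formalism} gives $\Delta\hat m_u^\star\approx\Delta\hat x_u$. The bound in Lemma~\ref{lem:mixed_axis_nup} then degenerates to the Heisenberg form~\eqref{eq:heisenberg_degen_formalism}. A near-zero $\rho_c(u)$ is the precondition for the high-slack regime $S_c(u)\gg1$ of Eq.~\eqref{eq:slack_formalism_repl}, but it does not imply that regime: Eq.~\eqref{eq:area_factorization_formalism_repl} still requires $\Delta\hat x_u\,\Delta\hat p_u$ itself to be large. I would therefore add a second assumption, that low-$c_{\text{prompt}}$ prompts also have an unconcentrated loss-weighted cloud. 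In words, the loss gradient is not aligned with the prompt content, so no mixed axis thins the cloud.

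\textbf{Step 3: the bridge to behaviour.} I would then posit that high slack means many continuations are nearly equally compatible at prefill, so decoding is driven more by priors. This gives a monotone relation $\Pr[\text{hallucination}\mid c_{\text{prompt}}]\downarrow$ in $c_{\text{prompt}}$. Both statements of the proposition follow formally from this relation. Under it, ranking prompts by $-c_{\text{prompt}}$ gives AUROC $>1/2$ for Exp.~5. For Exp.~6, choosing the paraphrase with higher $c_{\text{prompt}}$ matches the lower-risk, judge-preferred prompt with probability $>1/2$.

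\textbf{Step 4: empirical confirmation and the main obstacle.} Since this bridge is not derivable from the operator inequality, it has to be checked experimentally. I would estimate AUROC with bootstrap confidence intervals against $0.5$ for Exp.~5. For Exp.~6, I would use a one-sided binomial test on the paraphrase-pair win rate. I expect Step 3 to be the main obstacle. Theorem~\ref{thm:rs_form_of_nup} is only a lower bound, so it cannot force large slack, and the link from slack to hallucination is interpretive rather than deductive. To make the proposal concrete, I would first state the monotonicity in Step 3 as a hypothesis and present Steps 1--2 as the NUP-grounded motivation. The experiments then serve as the test of that hypothesis rather than as a proof.
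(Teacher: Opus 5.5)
Your route matches the paper's. The paper also gives no deduction for this proposition. It offers a heuristic chain: the CC-Probe tracks the coupling channel via Lemma~\ref{lem:cos_dir_corr} and Theorem~\ref{thm:cov_reduction}; smaller $|\rho_c(u)|$ enlarges $\sqrt{1-\rho_c(u)^2}$ and hence the slack $S_c(u)$; high slack is read as under-conditioning and prior-driven drift. It then reports the Exp.~5 bootstrap AUROC of $-c_{\text{prompt}}$ and the Exp.~6 selection metrics. You are more candid than the paper in labelling the non-deductive links as hypotheses; the paper concedes only in a supplementary footnote that lowering $|\rho_c(u)|$ raises $V_c(u)$ only at fixed dispersions. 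Your Exp.~6 test is also against chance. The paper's Exp.~6 evidence is purely comparative: Risk-Cos beats the margin, NLL and entropy selectors on Top-1 Hit Rate and Mean Regret, with no random-selection baseline. That matters because ties in the judge-best set can push the chance Top-1 rate far above $1/5$.

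The genuine slip is in Step~3. The Exp.~6 claim does not ``follow formally'' from $\Pr[\text{hallucination}\mid c_{\text{prompt}}]$ being decreasing. That is a marginal statement across problems. Between-problem differences in difficulty can produce it even if, among paraphrases of the \emph{same} problem, $c_{\text{prompt}}$ is unrelated or inversely related to hallucination (a Simpson-type reversal). You need three things:
\begin{itemize}
\item the conditional hypothesis that $\Pr[\text{hallucination}\mid c_{\text{prompt}},\text{problem}]$ is decreasing in $c_{\text{prompt}}$, which the paper's Exp.~6 design tacitly assumes when it attributes within-cluster variation solely to surface form;
\item a link from hallucination to the aggregate judge score that defines the judge-best set, since that score uses structured fields beyond the binary label;
\item a restriction of your pairwise binomial test to strictly ordered pairs.
\end{itemize}

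Two smaller corrections belong in Step~2. First, near-zero $\rho_c(u)$ is not a precondition for $S_c(u)\gg1$. High slack is reachable at any $|\rho_c(u)|<1$ once $\Delta\hat x_u\,\Delta\hat p_u$ is large, and $\rho_c(u)=0$ is compatible with saturation. Second, your gloss ``no mixed axis thins the cloud'' merely restates $\Delta\hat m_u^\star=\Delta\hat x_u$, i.e.\ $\rho_c(u)\approx0$. The second assumption you actually need is a large dispersion product.

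In Step~1, the lemma applied to one prompt's scalar-mean-centered vectors gives the direction-averaged $\mathrm{Corr}_u(u^\top\bar x,u^\top\bar p)$. By contrast, $\rho_c(u)$ is a fixed-direction correlation over the $w_c$-weighted population, with centering at $\mu_c,\nu_c$. Your monotonicity assumption has to bridge both mismatches.
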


\section{Experimental Protocol}
\label{sec:exp:protocol}

The preceding section derived four testable predictions from the NUP.
We design Experiments~1--6 to test these predictions across two modalities: discriminative vision (classification) and generative language (reasoning). Detailed training recipes and hyperparameters are deferred to Supplementary Material.

\subsection{Datasets and Model Zoo}
\label{sec:exp:common}

\paragraph{Vision}
We cover three levels of visual complexity:
(i) \textbf{CIFAR-10} ($32{\times}32$, coarse-grained);
(ii) \textbf{Tiny-ImageNet-200} ($64{\times}64$, medium-scale);
and (iii) \textbf{ImageNet-100} ($224{\times}224$, a class-balanced subset of ILSVRC2012).
Across these benchmarks, we instantiate a diverse ``architecture zoo'' to ensure architecture-agnostic validity:
\begin{itemize}[leftmargin=*]
    \item \textbf{CNNs:} ResNet-18/50, DenseNet-121, EfficientNet-B0/B4.
    \item \textbf{Transformers:} Vision Transformer ViT-Tiny and Swin-Tiny.
    \item \textbf{State-Space Models:} Vision Mamba (Vim-Tiny) \cite{zhu2024visionmamba}.
\end{itemize}

\paragraph{Language}
We employ a publicly available model, \textit{deepseek-coder-7b-instruct-v1.5}, as the subject model. The evaluation focuses on undergraduate-level math reasoning tasks where hallucination is prevalent. We curated two specific datasets:
\begin{itemize}[leftmargin=*]
    \item \textbf{Benchmark-500:} A collection of 500 unique, undergraduate-level mathematics problems.
    \item \textbf{Perturbation-100:} A focused subset of 100 problems sampled from Benchmark-500. For each problem, we generated 4 additional semantically equivalent but syntactically distinct rephrasings (5 variations total per problem).
\end{itemize}

These datasets are designed for controlled falsification of Proposition~\ref{prop:exp56_llm}: math reasoning reduces confounds from retrieval and enables more stable verification.

Ground truth for these generative tasks is established via a ``Panel-of-Judges'' consensus mechanism involving five external LLMs (\textit{Claude-4.5-Opus-Think}, \textit{DeepSeek-v3.2-Think}, \textit{Gemini-3-Pro}, \textit{GPT-5.2-Thinking}, and \textit{Grok-4.1-Think}).

\subsection{Operationalizing the Conjugate Probe}
\label{sec:exp:probe}

A core contribution of this work is translating the theoretical operator correlation \(\rho_c\) into a computable per-sample observable.

\paragraph{Vision}
For an image input \(X \in \mathbb{R}^{C \times H \times W}\), we compute the gradient of the task loss \(\mathcal{L}\) with respect to the input, \(P = \nabla_X \mathcal{L}\). Both \(X\) and \(P\) are flattened into 1-D vectors \(x\) and \(p\). Because the practical vision probe is computed on the normalized input tensor fed to the model, we use the uncentered image-space cosine; the sample-level conjugate correlation and its dataset-level average over \(N\) samples are defined as:
\begin{equation}
\label{eq:c_img}
c_{\text{img}} = \frac{|x^\top p|}{\|x\|_2 \|p\|_2}, \qquad \bar{c}_{\text{img}} = \frac{1}{N} \sum_{i=1}^{N} c_{\text{img}}^{(i)},
\end{equation}
where $c_{\text{img}}^{(i)}$ is calculated on the $i$-th sample.
In practice, unless explicitly stated, we report simple (uniform) averages over a given subset (e.g., correct vs.\ incorrect samples) rather than explicitly reweighting by \(\mathcal L^2\). The loss-induced weighting is the theoretical device used to derive the operator statistics; empirically, the per-sample cosine serves as a lightweight proxy for input--gradient coupling.

\textit{Remark.} The NUP is expressed via an operator correlation $\rho_c(u)$ under a loss-induced state, while we use the practical proxy $c_{\text{img}}$.
$c_{\text{img}}$ is intended to track the \emph{input--gradient coupling channel} that enters the Robertson--Schr\"odinger covariance term:
under the loss-phase construction, $\mathrm{Cov}_c(\hat x_u,\hat p_u)$ reduces exactly to a (loss-weighted) scalar covariance between $x_u$ and $p_u$ (Theorem~\ref{thm:cov_reduction}),
and the dot-product $x^\top p$ is the direction-averaged form of this coupling.

\paragraph{Language}
For language models, we extract the signal during the prompt processing stage (prefill).
We compute a mean-centered cosine between the prompt embeddings and their input gradients.
Let $X \in \mathbb{R}^{T \times d}$ be the embedding tensor for the $T$ valid tokens, and
$P \in \mathbb{R}^{T \times d}$ be the corresponding gradient tensor. We use a shifted loss, i.e., the next-token negative log-likelihood. Unless otherwise stated, the shifted NLL loss is computed on the non-system prefill tokens of the chat template (i.e., the user prompt together with template tokens available before answer generation), so the score is available at prefill time without generating any answer tokens; it still requires one backward pass with respect to the prompt embeddings. 
We flatten these tensors into 1-D vectors $x,p \in \mathbb{R}^{Td}$ and define the mean-centered vectors
\[
\bar{x}:=x-\mu_x\mathbf{1},\qquad \bar{p}:=p-\mu_p\mathbf{1},
\]
where $\mu_x=\mathrm{mean}(x)$ and $\mu_p=\mathrm{mean}(p)$ are the corresponding scalar means.
The prompt-level conjugate score and its corpus-level average over $M$ prompts are computed as:
\begin{equation}
\label{eq:c_prompt}
c_{\text{prompt}} = \frac{| \bar{x}^\top \bar{p} |}{\| \bar{x} \|_2 \| \bar{p} \|_2}, \qquad
\bar{c}_{\text{prompt}} = \frac{1}{M} \sum_{j=1}^{M} c_{\text{prompt}}^{(j)}.
\end{equation}
where $c_{\text{prompt}}^{(j)}$ is calculated on the $j$-th prompt.

\subsection{Evaluation Metrics}

\paragraph{Adversarial Robustness (Vision)}
To assess robustness under complementary white-box threat models, we evaluate three widely used $L_\infty$ attacks at the target budget:
(i) \textbf{Iterative baseline attack:} \textbf{PGD-20} as the standard multi-step first-order attack.
(ii) \textbf{AutoAttack component (CE loss):} \textbf{APGD-CE} (Auto-PGD optimizing cross-entropy) to provide a stronger, well-tuned first-order baseline.
(iii) \textbf{AutoAttack component (DLR loss):} \textbf{APGD-DLR} (Auto-PGD optimizing the Difference of Logits Ratio loss) to stress-test robustness beyond CE-optimized gradients.

\paragraph{Hallucination Analysis (Language)}
We treat Exp.~5 hallucination detection as a binary classification task ($1=$ hallucination, $0=$ non-hallucination), quantified by the \textbf{Area Under the ROC Curve (AUROC)} with bootstrap resampling ($N_{\text{boot}}=2000$). For Exp.~6 prefill-stage prompt selection, the \textbf{primary metrics} are \textbf{Top-1 Hit Rate} (frequency with which the selected prompt belongs to the judge-best set) and \textbf{Mean Regret} (the average quality gap between the selected prompt and the best judge score within the same five-prompt cluster). Deterministic decoding is used for reproducibility.

% =========================
% Experiments and Evidence
% =========================
\section{Experiments and Evidence}
\label{sec:experiments}

\subsection{Experimental Roadmap}
\label{sec:roadmap}

Our experiments test the NUP predictions using a simple, computable proxy---the CC-Probe---across vision classification and LLM prompting. We organize Exp.~1--6 into three phases\footnote{To reproduce Exp.~1--6, please refer to the released artifacts at \url{https://doi.org/10.57760/sciencedb.29613}.}:

\begin{itemize}[leftmargin=*,itemsep=2pt,topsep=2pt]
    \item \textbf{Phase I: Diagnosis in vision (Secs.~\ref{sec:exp1}--\ref{sec:exp2}).}
    We test whether input--gradient coupling, measured by $c_{\text{img}}$, behaves as a per-sample indicator of boundary difficulty.
    Exp.~1 checks that training separates samples into a low-$c_{\text{img}}$ bulk (mostly correct) and a high-$c_{\text{img}}$ tail (hard/incorrect), as predicted by Proposition~\ref{prop:exp1_sep}.
    Exp.~2 then applies $\pm$FGSM as a controlled perturbation and verifies the directional prediction that $c_{\text{img}}$ increases under +FGSM and decreases under --FGSM (Proposition~\ref{prop:exp2_causal}). These experiments locate the ``wrong/semi-hard samples" region of Fig.~\ref{fig:nup_roadmap} (small boundary thickness, large sensitivity).

    \item \textbf{Phase II: Vision intervention for robustness (Secs.~\ref{sec:exp3_results}--\ref{sec:exp4_results}).}
    We test whether reducing dominant input--gradient couplings during training improves robustness.
    Exp.~3 introduces ConjMask, which masks large $|x_i p_i|$ components on triggered samples, and evaluates robustness under $L_\infty$ attacks: PGD-20, APGD-CE, and APGD-DLR.
    Exp.~4 adds LogitReg to broaden robustness beyond the CE channel, targeting cases where performance is strong under CE-based attacks but weaker under DLR-based attacks (Proposition~\ref{prop:exp34_mask}). ConjMask aims to push these samples towards the central ``Goldilocks" band in Fig.~\ref{fig:nup_roadmap}.

    \item \textbf{Phase III: Prefill-only LLM evaluation (Secs.~\ref{sec:exp5}--\ref{sec:exp6}).}
    We test the complementary claim that unusually low prompt-level coupling $c_{\text{prompt}}$ indicates weak conditioning and higher hallucination risk.
    Exp.~5 evaluates $-c_{\text{prompt}}$ as a decoding-free hallucination risk score (AUROC).
    Exp.~6 applies the same prefill score for prompt selection among paraphrases, measuring whether choosing higher-$c_{\text{prompt}}$ prompts improves judge-preferred selection (Proposition~\ref{prop:exp56_llm}). Hallucination-prone prompts reside in the high ambiguity region of Fig.~\ref{fig:nup_roadmap} (large boundary thickness, uncontrolled sensitivity).
\end{itemize}

\subsection{Exp.~1: CC-Probe Separates Correct and Incorrect Samples During Training}
\label{sec:exp1}

\begin{figure}[t!]
    \centering
    \includegraphics[width=0.48\textwidth]{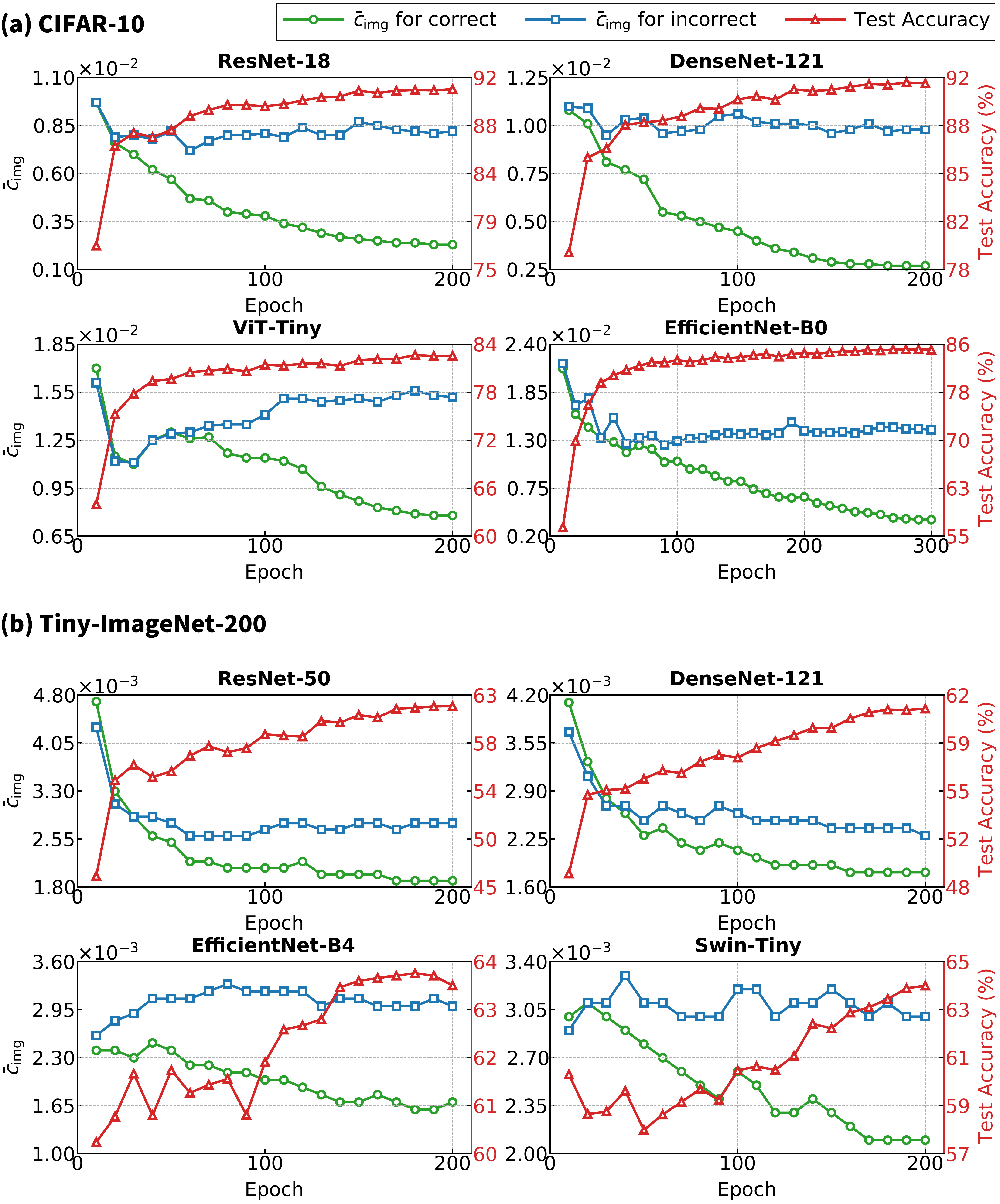}
    \caption{\textbf{Evolution of the CC-Probe and accuracy during training.}
We plot evaluation accuracy (red, right axis) and the mean absolute input--gradient cosine $\bar{c}_{\text{img}}$ (left axis; Eq.~\eqref{eq:c_img}), computed on the held-out evaluation split every 10 epochs and reported separately for correctly classified versus misclassified samples (green vs.\ blue).
Gradients are taken w.r.t.\ the standard cross-entropy loss using the ground-truth label.
\textbf{(a)} CIFAR-10 (ResNet-18, DenseNet-121, ViT-Tiny, EfficientNet-B0).
\textbf{(b)} Tiny-ImageNet-200 (ResNet-50, DenseNet-121, EfficientNet-B4, Swin-Tiny).}
    \label{fig:cosine_sim_evolution}
\end{figure}

For each test input $x$ we compute the per-sample CC-Probe
$c_{\text{img}}(x)$ from Eq.~\eqref{eq:c_img}, where
$p(x)=\nabla_x \mathcal{L}(f_\theta(x),y_{\text{true}})$ is the input gradient of the standard cross-entropy.
We then average $c_{\text{img}}(x)$ over the subset of correctly classified samples and over the subset of misclassified samples at each epoch.
Since $c_{\text{img}}$ is scale-free, the trends below are not driven by gradient magnitudes; in implementation we stabilize the cosine computation by adding a small value to the input/gradient norm denominators, rather than dropping low-gradient samples.

We tracked $\bar{c}_\text{img}$ during training across CNN and Transformer architectures on
CIFAR-10 and Tiny-ImageNet-200, recording the statistic separately for correctly and incorrectly
classified samples on the evaluation split.
As shown in Fig.~\ref{fig:cosine_sim_evolution}, two consistent phenomena emerged:

\begin{enumerate}[label=(\roman*)]
\item \textbf{Correct samples decouple.}
As training progresses and accuracy improves, $\bar{c}_\text{img}$ for correctly classified
samples steadily decreases toward zero, indicating that these images tend to decouple with their gradients.

\item \textbf{A persistent ``boundary-stress''.}
In contrast, $\bar{c}_\text{img}$ for incorrectly classified samples remains elevated,
creating a stable gap between the two populations across checkpoints.
\end{enumerate}

A small $\bar{c}_\text{img}$ implies weak average conjugate alignment between $x$ and
$p(x)=\nabla_x\mathcal L_c(x)$, suggesting that the NUP correlation channel is relatively unsaturated
for the corresponding sample set. From the NUP viewpoint, training tends to move correctly classified
samples into a high-slack regime (low loss with low residual correlation), whereas misclassified
samples remain in a boundary-stress where correlation stays elevated, indicating persistent boundary
sensitivity. This pattern supports the redistribution picture in Proposition~\ref{prop:exp1_sep}:
conjugate alignment concentrates on a small hard subset while most correct samples exhibit low coupling.

\subsection{Exp.~2: Stepping Along/Against the Gradient Moves the CC-Probe as Predicted}
\label{sec:exp2}

\FloatBarrier
\begin{figure}[!t]
    \centering
    \includegraphics[width=0.48\textwidth]{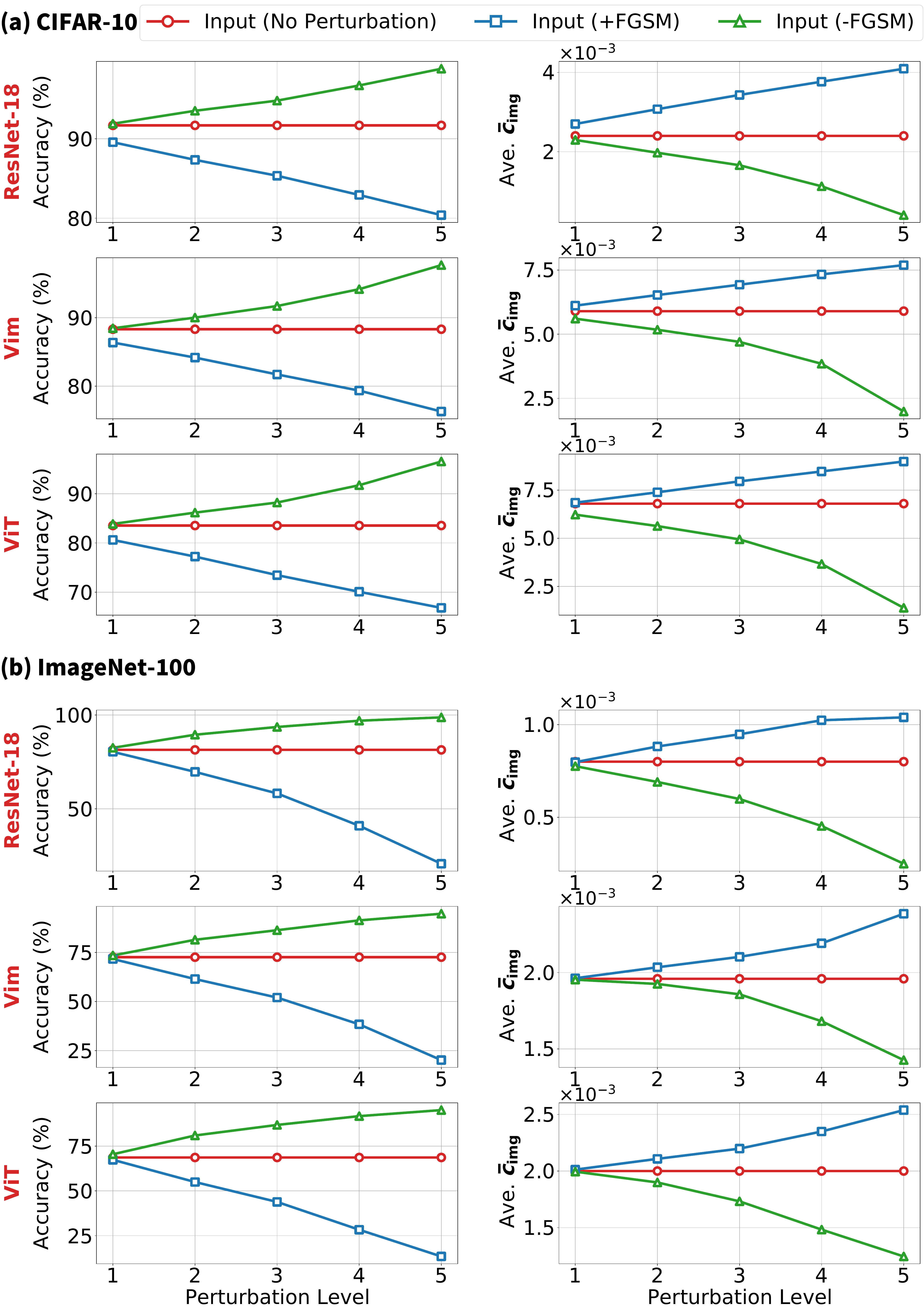}
    \caption{\textbf{Effect of gradient-aligned perturbations on accuracy and \(\bar{c}_\text{img}\).}
    We evaluate ResNet-18, ViT-Tiny, and Vim-Tiny on CIFAR-10 and ImageNet-100.
    Left column: accuracy; right column: \(\bar{c}_\text{img}\).
    We compare \textbf{Clean}, \textbf{+FGSM} and \textbf{--FGSM} across perturbation levels \(\epsilon\)
    (see Supplementary Material for exact values).}
    \label{fig:Cifar10_cosine_sim}
\end{figure}

To test whether $\bar c_{\text{img}}$ responds \emph{causally} to boundary-directed input changes, we evaluate the dedicated Exp.~2 checkpoints (trained with clean data) for ResNet-18, ViT-Tiny, and Vim-Tiny on CIFAR-10 and ImageNet-100. We then perturb each input by a small $L_\infty$ step along the sign of the input gradient:
\begin{equation}
\begin{aligned}
x^{+}=& \mathrm{clip}\!\left(x + \epsilon\,\mathrm{sign}(\nabla_x \mathcal{L}_c(x))\right),\\
x^{-}=& \mathrm{clip}\!\left(x - \epsilon\,\mathrm{sign}(\nabla_x \mathcal{L}_c(x))\right),
\end{aligned}
\end{equation}
where $\mathrm{clip}(\cdot)$ enforces valid pixel ranges.
We then recompute $\bar c_{\text{img}}$ on the perturbed inputs (i.e., using $(x^{\pm},\,\nabla_{x^{\pm}}\mathcal L_c)$)
and evaluate accuracy under the same perturbations.

As illustrated in Fig.~\ref{fig:Cifar10_cosine_sim}, we observe a consistent bidirectional response:

\begin{enumerate}[label=(\roman*)]
\item \textbf{Gradient-aligned perturbations increase coupling and degrade accuracy.}
As $\epsilon$ increases, +FGSM typically raises $\bar c_{\text{img}}$ while reducing accuracy,
consistent with pushing samples toward locally more loss-sensitive regions.

\item \textbf{Anti-aligned perturbations reduce coupling (when $\epsilon$ is small).}
For sufficiently small $\epsilon$, --FGSM often lowers $\bar c_{\text{img}}$ and can preserve
(or slightly improve) accuracy, reflecting a move toward locally flatter loss regions.
For larger $\epsilon$, the perturbation may leave the semantic manifold and accuracy can still drop;
our claim here is about the \emph{directional} effect on the coupling statistic.
\end{enumerate}

In the NUP, proximity to the constraint is governed by the effective conjugate volume
$V_c(u)=\Delta \hat m_u^\star\,\Delta \hat p_u$ and the slack ratio
$S_c(u)=V_c(u)/\kappa\ge 1$ (Eq.~\eqref{eq:slack_formalism_repl}).
Here, the CC-Probe $\bar c_{\text{img}}$ is used as a \emph{proxy} for the input--gradient coupling channel entering $\rho_c(u)$ via the loss-induced statistics; it is not itself an exact estimator of $V_c(u)$ or $S_c(u)$.
Accordingly, Fig.~\ref{fig:Cifar10_cosine_sim} is interpreted as an empirical probe of boundary stress:
+FGSM increases loss and empirically raises $\bar c_{\text{img}}$, which is consistent with stronger correlation pressure and higher boundary stress,
whereas sufficiently small --FGSM steps reduce the measured coupling and are consistent with a relaxation toward a less stressed regime.

\subsection{Exp.~3: ConjMask improves robustness under CE-based first-order attacks}
\label{sec:exp3_results}

\begin{table*}[!t]
\centering
\caption{\textbf{Robust accuracy (\%)} under $L_\infty$ attacks at $\epsilon=8/255$ on CIFAR-10. Attacks: PGD-20, Auto-PGD with CE loss (APGD-CE), Auto-PGD with DLR loss (APGD-DLR). We use APGD-CE/APGD-DLR as objective-specific stress tests (components of AutoAttack), rather than claiming evaluation under the full AutoAttack suite. Timing is reported as provided (seconds $\times 200$), otherwise ``--''. Results illustrate NUP-predicted, objective-specific shifts; interventions are demonstrations (not tuned for standardized leaderboard performance). TRADES-PGD is shown only as a reference.}
\label{tab:merged_robustness_results_eps8}
\setlength{\tabcolsep}{6pt}
\renewcommand{\arraystretch}{1.15}
\small
\begin{tabular}{llccccc}
\toprule
\textbf{Model} & \textbf{Method} & \textbf{Clean} & \textbf{PGD-20} & \textbf{APGD-CE} & \textbf{APGD-DLR} & \shortstack{\textbf{Timing} \\ \textbf{(Sec. $\times$ 200)}} \\
\midrule
\multirow{4}{*}{ResNet-18} & Base              & 91.52 & 0.01  & 0.02  & 0.10  & -- \\
& TRADES-PGD            & 74.57          & 44.14 & 43.92 & 42.27 & $\sim$ 100 \\
& ConjMask (Exp.~3) & 85.27       & 77.55 & 77.02 & 1.28  & $\sim$ 20 \\
& ConjMask+LogitReg (Exp.~4) & 87.64       & 62.71 & 36.92 & 43.29 & $\sim$ 30 \\
\midrule
\multirow{4}{*}{ViT-Tiny} & Base              & 84.10 & 0.01  & 0.01  & 0.02  & -- \\
& TRADES-PGD            & 71.95          & 32.95 & 32.46 & 35.51 & $\sim$ 120 \\
& ConjMask (Exp.~3) & 71.49       & 39.19 & 29.49 & 13.96  & $\sim$ 45 \\
& ConjMask+LogitReg (Exp.~4) & 71.24       & 43.15 & 14.59 & 15.91 & $\sim$ 60 \\
\midrule
\multirow{4}{*}{EfficientNet-B0} & Base              & 88.14 & 0.24  & 0.17  & 0.00  & -- \\
& TRADES-PGD            & 56.88 & 30.65 & 30.48 & 28.05 &  $\sim$ 260 \\
& ConjMask (Exp.~3) & 83.91 & 69.10 & 68.28 & 0.20    & $\sim$ 50\\
& ConjMask+LogitReg (Exp.~4) & 81.18 & 48.48 & 21.89 & 28.36 & $\sim$ 65 \\
\bottomrule
\end{tabular}
\vspace{2pt}
\end{table*}

Following Proposition~\ref{prop:exp34_mask}, we study whether a training-time suppression of highly conjugated input components can improve adversarial robustness \emph{without adversarial training}. Here, ConjMask is an objective-specific test of NUP, rather than a solution of comprehensive adversarial robustness.

We start from a separately trained \emph{clean CIFAR-10 checkpoint} for the same architecture. For each mini-batch, we first compute a forward pass and mark samples as \emph{triggered} if either the current prediction is incorrect, or the prediction confidence is below a fixed threshold (default $0.2$). For each triggered image $X$ (flattened as $x$) we compute the input gradient $P=\nabla_X \mathcal{L}$ (flattened as $p$). In the released implementation, this probing step intentionally switches the model to \texttt{eval} mode; moreover, this mode is kept for the subsequent forward passes of the same optimization step. Since the goal of ConjMask is to reduce the CC-Probe $c_{\text{img}}=|\cos(x,p)|$ (Eq.~\eqref{eq:c_img}), we focus on the components that contribute most to this coupling statistic.

To do so, we build a per-pixel importance score by taking the elementwise product between \emph{unit-normalized} image and gradient (done \emph{per channel} in the implementation):
\begin{equation}
\label{eq:exp3_score}
s_{c,j}=\big|\tilde x_{c,j}\,\tilde p_{c,j}\big|,\qquad
\tilde x_c=\frac{x_c}{\|x_c\|_2+\epsilon},\ \tilde p_c=\frac{p_c}{\|p_c\|_2+\epsilon}.
\end{equation}
We then select the \textbf{top-$k$ pixels per channel} according to $s_{c,j}$ and first form a binary mask.
In the released implementation, this mask is subsequently smoothed with a Gaussian kernel to obtain a soft mask
\(M\in[0,1]^{C\times H\times W}\), and the triggered input is replaced by \emph{soft stochastic interpolation}
between the original image and a per-channel Gaussian replacement sampled from CIFAR-10 statistics:
\[
X_{\mathrm{mask}} = X\odot(1-M) + R\odot M .
\]
Thus, ConjMask is a conjugation-guided \emph{soft stochastic replacement}.

We report robust accuracy under $L_\infty$ attacks at $\epsilon=8/255$, using PGD-20 and Auto-PGD variants (CE and DLR losses), together with standard clean accuracy in Table~\ref{tab:merged_robustness_results_eps8}. The following observations can be made:

\begin{enumerate}[label=(\roman*)]
\item \textbf{ConjMask induces a large regime shift under CE-based first-order attacks.}
On CIFAR-10 at $\epsilon=8/255$ under PGD-20 and APGD-CE, clean baselines largely collapse, while ConjMask (Exp.~3) substantially improves robustness on CNNs and without any test-time masking. Importantly, the robustness profile is \emph{objective-dependent}: the same models can remain vulnerable under APGD-DLR, which we interpret as evidence that ConjMask primarily manipulates the CE-gradient coupling channel predicted by the NUP, rather than constituting a universally strong defense across objectives.

\item \textbf{CE robustness can remain objective-dependent.}
The same models that are strong on PGD/APGD-CE can still degrade sharply under APGD-DLR: for ConjMask (Exp.~3), APGD-DLR robust accuracy is $1.28\%$ on ResNet-18, $13.96\%$ on ViT-Tiny, and $0.20\%$ on EfficientNet-B0 (Table~\ref{tab:merged_robustness_results_eps8}).
This ``strong on CE, weaker on DLR'' pattern highlights objective-specific geometry channels, which is consistent with the NUP view that manipulating one coupling channel may not cover all attack objectives.

\item \textbf{ViT benefits less from ConjMask than CNNs.}
For ViT, Exp.~3 is stronger than the clean baseline under CE-based attacks, but it does not surpass TRADES-PGD. We attribute this gap to architectural differences: compared with CNNs, ViTs may exhibit weaker feature compression and preserve more non-pixel-level information pathways, which can reduce the effectiveness of a pixel-level masking intervention.
\end{enumerate}

Importantly, all masking is used only during fine-tuning; at test time, the deployed model always receives the original clean input, with no masking, no stochastic preprocessing, and no gradient-obfuscation mechanism.

\subsection{Exp.~4: LogitReg complements ConjMask beyond the gradient channel}
\label{sec:exp4_results}

Exp.~3 demonstrates that conjugation-guided masking can produce strong resistance to standard gradient attacks, but its near-zero APGD-DLR accuracy indicates a remaining vulnerability that is not explained by the CE-gradient channel alone.

In Exp.~4, we keep \emph{the same ConjMask masking procedure} as in Exp.~3 and add an output-side regularization term, denoted by \emph{LogitReg}, together with an optional auxiliary-view consistency term. The purpose is to complement the CE-gradient coupling channel targeted by ConjMask with score-space stabilization, so that robustness is not restricted to CE-like attack directions.

Concretely,  for Exp.~4 the objective is
\begin{equation}
\label{eq:exp4_main_total}
\mathcal{L}_{\text{Exp4}}
=
\mathcal{L}_{\text{mask}}
+
\mathcal{L}_{\text{logit}}
+
\mathcal{L}_{\text{cons}},
\end{equation}
where \(\mathcal{L}_{\text{mask}}\) is the same masked-input cross-entropy term as in Exp.~3:
\begin{equation}
\label{eq:exp4_main_mask}
\mathcal{L}_{\text{mask}}
=
\mathrm{CE}\!\big(f_\theta(X_{\mathrm{mask}}), y\big).
\end{equation}

The released script uses a shared logit-side stabilizer across ResNet-18, ViT-Tiny, and EfficientNet-B0, namely a centered-logit variance penalty. Let
\begin{equation}
\label{eq:exp4_main_varc}
\bar z = z - \frac{1}{K}\sum_{j=1}^{K} z_j,
\qquad
\mathrm{VarC}(z)=\frac{1}{K}\sum_{j=1}^{K}\bar z_j^2.
\end{equation}
Then
\begin{equation}
\label{eq:exp4_main_logit}
\mathcal{L}_{\text{logit}}
=
\alpha_{\mathrm{logit}}\cdot \frac{1}{2}
\Big(
\mathrm{VarC}(f_\theta(X))
+
\mathrm{VarC}(f_\theta(X_{\mathrm{mask}}))
\Big),
\end{equation}
where \(\alpha_{\mathrm{logit}}=0\) at epoch \(0\), and \(\alpha_{\mathrm{logit}}=\texttt{penalty\_weight}\) thereafter.

When the consistency branch is activated, the auxiliary view is a per-sample mean-baseline image:
\begin{equation}
\label{eq:exp4_main_aux}
X_{\mathrm{aux}}=\bar X \mathbf{1},
\qquad
\bar X=\frac{1}{CHW}\sum_{c,h,w} X_{c,h,w}.
\end{equation}
The consistency term is
\begin{equation}
\label{eq:exp4_main_cons}
\mathcal{L}_{\text{cons}}
=
0.8\,
\mathrm{KL}\!\Big(
\log\mathrm{softmax}\!\big(\tfrac{f_\theta(X_{\mathrm{mask}})}{T}\big)
\,\big\|\,
\mathrm{softmax}\!\big(\tfrac{f_\theta(X_{\mathrm{aux}})}{T}\big)
\Big),
\end{equation}
with \(T=1\). In the released implementation, this branch is enabled for epochs greater than 5.

We evaluate under the same $L_\infty$ setting as Exp.~3 ($\epsilon=8/255$) using PGD-20, APGD-CE, and APGD-DLR. The following observations can be made:

\begin{enumerate}[label=(\roman*)]
\item \textbf{Lightweight stabilizers restore DLR robustness with practical efficiency.}
Adding shared logit regularization and shared auxiliary-view consistency on top of the same ConjMask procedure substantially improves APGD-DLR robustness where Exp.~3 remains weak. On ResNet-18, APGD-DLR increases from $1.28\%$ (Exp.~3) to $43.29\%$ (Exp.~4), close to TRADES-PGD’s $42.27\%$, while maintaining high clean accuracy ($87.64\%$) and requiring no adversarial training, hence lower practical cost. The resulting robustness profile is more balanced than Exp.~3, with non-trivial performance retained under PGD-20 and APGD-CE ($62.71\%$ and $36.92\%$, respectively).

\item \textbf{The gain is architecture-dependent in outcome, reflecting CE--DLR direction mismatch with partial overlap.}
For ViT-Tiny and EfficientNet-B0, Exp.~4 trades part of the PGD/APGD-CE robustness of Exp.~3 for improved APGD-DLR robustness (ViT-Tiny: $13.96\%\to15.91\%$; EfficientNet-B0: $0.20\%\to28.36\%$), but the improvement remains less pronounced than on ResNet-18 and is accompanied by a noticeable drop in PGD-20/APGD-CE.

\end{enumerate}

Overall, the shared LogitReg augmentation can deliver non-trivial robustness improvements with much less training time than adversarial training.

\subsection{Exp.~5: Prefill-only hallucination risk prediction on Benchmark-500}
\label{sec:exp5}

Here, we use ``hallucination'' in an \emph{operational} sense: a response is labeled as hallucinated if it contains confident but unsupported/incorrect reasoning or claims, as judged by external verifiers. In math reasoning, this includes fabricated intermediate steps, invalid transformations, or incorrect final answers presented with high confidence. We emphasize that this is a task-specific reliability label used for evaluating the proposed prefill-only risk signal, rather than a universal definition of hallucination.

We intentionally restrict to a strict \emph{prefill-only, single-backward} setting: all risk scores are computed before generating any answer tokens. Under this constraint, we do not aim to compete with post-hoc hallucination detectors that rely on sampling multiple generations, retrieval, tool use, or LLM-as-a-judge at test time; instead we test whether a pre-hoc geometric signal exists as predicted by the NUP.

We test Proposition~\ref{prop:exp56_llm} by testing whether \emph{low} prompt-level conjugate correlation
signals hallucination risk in a trained LLM.
For each of 500 math problems, we compute the prefill-stage probe on the prompt embeddings:
we obtain the embedding tensor $x$ and its input gradient $p=\nabla_x \mathcal L(x)$ from a single backward pass,
and form $c_{\text{prompt}}$ (Eq.~\eqref{eq:c_prompt}), which is computed \emph{before any answer tokens are generated}.

We then generate a single model response only for evaluation, and use a panel of external LLM judges to assign a
hallucination vote probability $P_{\text{vote}}\in[0,1]$ for that response.

\paragraph{Risk scores}
We evaluated four sample-level probes as hallucination risk scores (higher = more likely to hallucinate):
\begin{itemize}
    \item \textbf{Risk-Cos} (ours): $- c_\text{prompt}$;
    \item \textbf{Risk-Entropy}: mean predictive entropy;
    \item \textbf{Risk-Loss}: mean negative log-likelihood (shifted NLL as defined in Sec.~\ref{sec:exp:probe});
    \item \textbf{Risk-Margin}: negative mean logit margin.
\end{itemize}
Note that \textbf{Risk-Loss} here reflects prompt-level self-supervised NLL, not factual correctness of the final answer;
thus it can behave very differently from a hallucination indicator.

\paragraph{Evaluation}
We report AUROC with bootstrap resampling ($N_{\text{boot}}=2000$) for distinguishing strict-consensus hallucinations
($P_{\text{vote}}=1.0$) from strict-consensus clean responses ($P_{\text{vote}}=0.0$),
and also provide a relaxed-consensus analysis ($P_{\text{vote}}\ge 0.8$) as a robustness check.
Results are summarized in Table~\ref{tab:exp5_auc}. We can observe that:
\begin{enumerate}[label=(\roman*)]
\item \textbf{Standard uncertainty metrics fail.}
Risk-Entropy is near chance, while Risk-Loss is often anti-correlated with hallucination labels,
highlighting that hallucinations can be fluent and low-entropy rather than ``uncertain''.

\item \textbf{Risk-Cos is effective as a prefill-stage geometric signal.}
Risk-Cos achieves AUROC $\approx 0.69$ under both strict and relaxed consensus settings.
This supports the slack-regime interpretation of Proposition~\ref{prop:exp56_llm}:
hallucinations correspond to anomalously low prompt--gradient alignment, indicating weak conditioning
and a larger feasible continuation space prior to decoding.
\end{enumerate}

\begin{table}[t]
\centering
\caption{\textbf{AUROC of prefill-stage hallucination risk signals on Benchmark-500.}
All metrics are computed \emph{without generating any answer tokens} (prefill only).
Risk-Cos (ours) and Risk-Margin are informative, while prompt-level entropy and NLL are weak under this constraint.}
\label{tab:exp5_auc}
\resizebox{\columnwidth}{!}{%
\begin{tabular}{l c c c}
\toprule
\textbf{Metric} & \textbf{AUROC Mean} & \textbf{95\% CI} & \textbf{Interpretation} \\
\midrule
\multicolumn{4}{l}{\textit{Strict Consensus ($P_{\text{vote}}=1.0$ vs.\ Clean)}} \\
\textbf{Risk-Cos (Ours)} & \textbf{0.6939} & [0.58, 0.79] & Informative \\
Risk-Margin & 0.6850 & [0.57, 0.79] & Informative \\
Risk-Entropy & 0.5323 & [0.44, 0.63] & Near chance \\
Risk-Loss & 0.3508 & [0.25, 0.46] & Anti-correlated \\
\midrule
\multicolumn{4}{l}{\textit{Relaxed Consensus ($P_{\text{vote}} \ge 0.8$)}} \\
\textbf{Risk-Cos (Ours)} & \textbf{0.6957} & [0.61, 0.78] & Robust \\
Risk-Margin & 0.6730 & [0.59, 0.75] & Robust \\
Risk-Entropy & 0.5020 & [0.42, 0.59] & Near chance \\
Risk-Loss & 0.3260 & [0.25, 0.41] & Anti-correlated \\
\bottomrule
\end{tabular}%
}
\end{table}

\paragraph{Mechanistic interpretation under Proposition~\ref{prop:exp56_llm}}
Under the \emph{prefill-only} constraint, both \textbf{Risk-Entropy} and \textbf{Risk-Loss} are functions of the
model's next-token distribution over the \emph{prompt} itself; thus they primarily quantify the predictability of the
\emph{prompt wording} under the model rather than the correctness of the (future) solution.
Their weak/negative correlation indicates that ``easy-to-model'' prompts (low $\mathrm{NLL}_{\text{prompt}}$ or low entropy)
can still admit confident but incorrect continuations, i.e., prompt predictability does not control downstream factuality.

In contrast, \textbf{Risk-Cos} targets the \emph{conjugate coupling channel} posited in Proposition~\ref{prop:exp56_llm}.
Since $c_{\text{prompt}}$ serves as a
computable surrogate for the input--gradient covariance channel that enters $\rho_c(u)$ in the NUP, a \emph{larger} coupling magnitude (larger $|\rho_c(u)|$) compresses the effective conjugate volume through the factor
$\sqrt{1-\rho_c(u)^2}$, yielding smaller slack $S_c(u)$ (stronger conditioning), whereas a \emph{smaller} coupling corresponds to
a higher-slack regime (weaker conditioning).
Thus, low $c_{\text{prompt}}$ (high \textbf{Risk-Cos}) indicates that the prompt content is weakly coupled to the loss-sensitive directions, consistent with Proposition~\ref{prop:exp56_llm}'s \emph{under-conditioning} hypothesis:
a high-slack state admits a larger set of feasible continuations prior to decoding, increasing the likelihood of
prior-driven drift that manifests as hallucination.

\subsection{Exp.~6: Prefill-stage Prompt Selection}
\label{sec:exp6}

We evaluate whether a conjugate-correlation score can be used as a \emph{prefill-only} quality filter to select a prompt \emph{before} generating any answer tokens.

\paragraph{Exp. Setup}
We use \textbf{Perturbation-100}, consisting of 100 problems with 5 semantically equivalent (paraphrased) prompt variants per problem. For each problem, we compute a \emph{prefill risk score} for all five candidate prompts \emph{without decoding} any answer tokens. The selection rule itself is therefore prefill-only: for each metric, the prompt with the lowest predicted risk is designated as the selected variant. For offline evaluation, we then generate deterministic answers for \emph{all} five prompt variants and score them with the same judge panel as in Exp.~5. The resulting judged answers are used only to identify the \emph{judge-best set} for each problem (allowing ties among highest-scoring variants) and to compute the Top-1 / Mean-Regret metrics reported below. As a sanity check on the reliability of the judge panel, we also report inter-judge agreement via \textbf{Fleiss' Kappa}.

\paragraph{Prefill-only risk scores}
Each candidate prompt is assigned one scalar risk value (higher $\rightarrow$ worse) computed \emph{before} decoding:
\textbf{Risk-Cos} (ours), \textbf{Risk-Margin}, \textbf{Risk-Entropy}, and \textbf{Risk-Loss (NLL)} (definitions follow Sec.~\ref{sec:exp:probe}).
All reported selection decisions in this experiment use only these prefill scores.

\paragraph{Metrics}
We focus on two selection-oriented metrics that directly reflect the practical utility of a prefill-only prompt filter:

\begin{itemize}[leftmargin=1.2em]
    \item \textbf{Top-1 Hit Rate} ($\uparrow$): the fraction of problems for which the selected prompt belongs to the judge-best set.
    \item \textbf{Mean Regret} ($\downarrow$): the average quality gap between the selected prompt and the best judge score attained within the same five-prompt cluster.
\end{itemize}

In addition, we report \textbf{Fleiss' Kappa} on the judges' binary hallucination annotations as a sanity check on inter-judge consistency.

\begin{table}[t]
\centering
\small
\setlength{\tabcolsep}{8pt}
\caption{Prefill-only prompt selection results on \textbf{Perturbation-100}. For each five-prompt cluster, the prompt with the lowest predicted prefill risk is selected before decoding. \textbf{Risk-Cos} achieves the highest Top-1 Hit Rate and the lowest Mean Regret among all compared methods.}
\label{tab:exp6_results}
\begin{tabular}{lcc}
\toprule
\textbf{Prefill Risk Score} & \textbf{Top-1 Hit Rate} $\uparrow$ & \textbf{Mean Regret} $\downarrow$ \\
\midrule
\textbf{Risk-Cos (ours)} & \textbf{0.83} & \textbf{0.433} \\
Risk-Margin              & 0.81          & 0.591          \\
Risk-Loss (NLL)          & 0.78          & 0.615          \\
Risk-Entropy             & 0.77          & 0.636          \\
\bottomrule
\end{tabular}
\end{table}

\paragraph{Results}
Table~\ref{tab:exp6_results} shows that \textbf{Risk-Cos} achieves the highest \textbf{Top-1 Hit Rate} (\textbf{0.83}) and the lowest \textbf{Mean Regret} (\textbf{0.433}) among all compared prefill-only risk scores. This indicates that, under the prefill-only constraint, Risk-Cos is the strongest criterion on the two primary selection metrics reported in the main text.

\textbf{Risk-Margin} attains a similarly high \textbf{Top-1 Hit Rate} (0.81), but its \textbf{Mean Regret} is larger (0.591). This gap suggests that although Margin often selects a competitive prompt, its failures are more costly when it misses the judge-best variant. By comparison, \textbf{Risk-Loss (NLL)} and \textbf{Risk-Entropy} perform worse on these two primary metrics, with lower hit rates (0.78 and 0.77, respectively) and higher regret (0.615 and 0.636, respectively). 

\paragraph{Judge reliability}
The judge panel exhibits strong agreement on the binary hallucination annotation, with \textbf{Fleiss' Kappa = 0.857}. This serves as a sanity check that the judges are broadly consistent on that binary label and supports the stability of the offline evaluation pipeline. We emphasize, however, that the prompt-ranking score used in Exp.~6 is an aggregate judge score that also depends on additional structured judge fields (see Supplementary Material), so Fleiss' Kappa should not be interpreted as a direct reliability coefficient for that full aggregate score.

Overall, the results of Exp.~6 suggest that \textbf{Risk-Cos} can serve as a useful, decoding-free criterion for selecting among semantically equivalent prompt formulations. MMore importantly, this finding provides empirical evidence consistent with the NUP account developed in this paper: within the prefill stage, stronger prompt--gradient coupling is associated with better-constrained continuations, whereas weaker coupling reflects the under-conditioning regime predicted by Proposition~\ref{prop:exp56_llm}.

\section{Discussion}
\label{sec:discussion}

The primary theoretical value of the NUP is that it provides a principled constraint on what a learned system can simultaneously achieve when its behavior is examined through the joint geometry of inputs and loss sensitivities. In the Robertson--Schr\"odinger form (Theorem~\ref{thm:rs_form_of_nup}), the pair $(\hat x_u,\hat p_u)$ obeys a nontrivial constraint: the boundary-emphasized phase-plane area cannot collapse below a constant, equivalently $\Delta \hat m_u^\star\,\Delta \hat p_u \ge \tfrac12$ (Lemma~\ref{lem:mixed_axis_nup}). At a high level, this resembles a limit statement: within the present operator formulation, there exists a nonzero ``minimum uncertainty budget'' that prevents simultaneously making boundary ambiguity arbitrarily small and boundary sensitivity uniformly low.

\subsection{A conjugate limit underlying boundary behaviors}
Interpreting the operator moments as second-order geometry of the loss-weighted $(x_u,p_u)$ cloud (Sec.~\ref{subsec:geometry_precision_robustness}) turns the inequality into a geometric statement: the effective area $\sqrt{\det \Sigma(u)}=\Delta \hat m_u^\star\,\Delta \hat p_u$ is lower-bounded (Eq.~\eqref{eq:area_bound_formalism_repl}). When the system operates near this limit, improvements in one axis necessarily incur costs in the conjugate axis. This offers a principled way to interpret why optimizing one desideratum (e.g., squeezing boundary errors to raise accuracy) can systematically amplify another failure mode (e.g., sensitivity exploited by adversarial perturbations).

\subsection{Unified interpretation across vision and language}
Within this limit-based view, adversarial fragility and hallucination become two opposite ways of mismanaging the same uncertainty budget. Table~\ref{tab:regime_summary_intro} summarizes how NUP manifests 
differently across modalities.

\begin{table*}[t]
\centering
\caption{\textbf{Two regimes from the NUP.} The NUP constrains an effective conjugate volume \(V=\Delta x\,\Delta p\,\sqrt{1-\rho^2}\ge \kappa\), where \(\rho\) is the \emph{operator} correlation between \(\hat x_u\) and \(\hat p_u\) in the RS inequality. Empirically, we diagnose the \emph{input--gradient coupling channel} entering \(\rho\) using the CC-Probe: in vision we use \(|\cos(x,p)|\) (inputs are normalized), while in LLM prompting we use the centered \(|\cos(\bar x,\bar p)|\).}
\label{tab:regime_summary_intro}
\footnotesize
\setlength{\tabcolsep}{7pt}
\renewcommand{\arraystretch}{1.28}

\begin{tabular}{p{0.20\textwidth} p{0.38\textwidth} p{0.38\textwidth}}
\toprule
 & \textbf{Saturation / Boundary-stress (Vision)} 
 & \textbf{Slack / Under-conditioning (LLM)} \\
\midrule
\textbf{NUP mechanism} 
& \(\ |\rho|\uparrow \Rightarrow \sqrt{1-\rho^2}\downarrow \Rightarrow V\downarrow\ \) (tight, near-bound)
& \(\ |\rho|\downarrow\ (\to 0) \Rightarrow \sqrt{1-\rho^2}\uparrow \Rightarrow V\uparrow\ \) (loose, high slack) \\[1pt]

\textbf{Probe signal} 
& high \(|\cos(x,p)|\)
& low \(|\cos(\bar x,\bar p)|\) \\[1pt]

\textbf{Observed symptom} 
& misclassification \& adversarial fragility
& higher hallucination risk \\[1pt]

\textbf{Actionable interventions} 
& ConjMask + LogitReg
& Prefill Risk Scoring + Prompt Screening/Selection \\
\bottomrule
\end{tabular}
\end{table*}

In discriminative vision, training can push hard samples toward a boundary-stress configuration: the ambiguity thickness becomes small while sensitivity dispersion becomes large, with strong input--gradient coupling concentrating on a stressed tail (Exp.~1--2; Proposition~\ref{prop:exp1_sep}--\ref{prop:exp2_causal}).
In LLM prompting under a prefill-only constraint, the dominant pathology is instead \emph{under-conditioning}: unusually weak prompt--gradient coupling indicates that the prompt does not effectively constrain loss-sensitive directions, leaving high slack and enabling prior-driven drift (Exp.~5--6; Proposition~\ref{prop:exp56_llm}). 

\subsection{From a bound to practical observables and control channels}
A second theoretical contribution is the bridge from the operator inequality to computable quantities. Under the loss-induced state, the symmetrized operator covariance reduces to a scalar covariance of real gradients (Theorem~\ref{thm:cov_reduction}), and random directional probing links projected correlation to cosine (Lemma~\ref{lem:cos_dir_corr}). This justifies the CC-Probe as a single-backward proxy for the coupling channel that modulates proximity to the uncertainty limit. Once coupling is made observable, it can also be used as a practical control target: ConjMask directly suppresses dominant per-component couplings $|x_i p_i|$ and yields strong robustness under PGD-20 and APGD-CE without adversarial training (Exp.~3; Proposition~\ref{prop:exp34_mask}). The CE--DLR gap further indicates that robustness is multi-channel; LogitReg complements ConjMask by stabilizing score-space geometry and broadening robustness under APGD-DLR (Exp.~4).

\subsection{Limitations and future directions}

Our analysis is most informative when there exists a nontrivial boundary-relevant subset that the loss-phase construction can emphasize. 
In near ``pure fitting'' regimes---e.g., when the loss is essentially small on the relevant support, or the input--output mapping behaves close to bijective with little effective compression~\cite{Nikolaou2025LanguageMA}---the weighting $w_c(x)\propto \mathcal L_c(x)^2$ provides little contrast and distinct loss-induced states $\psi_c$ are not meaningfully separated; consequently, the notion of a stressed boundary layer becomes ill-posed and the CC-Probe inclines to lose discriminative power. In addition, the probe is loss-dependent (CE in vision; shifted prompt NLL in language), and Exp.~3 indicates robustness can differ across attack losses (CE vs.\ DLR), motivating multi-loss evaluation (e.g., APGD-CE and APGD-DLR) and defenses that cover multiple geometry channels. Finally, CC-Probe is a proxy rather than a direct estimator of $\rho_c(u)$ or slack $S_c(u)$, and prefill-only LLM use still requires one backward pass on prompt embeddings. 

The NUP framework also suggests a cautionary perspective for future LLM alignment. Currently, LLM hallucinations are predominantly driven by a high-slack (under-conditioned) regime, as detected by the CC-Probe. However, if future training paradigms aggressively reduce this slack to suppress hallucinations---forcing prompts into tight, deterministic couplings with specific continuations---the NUP inequality \(\Delta \hat m_u^\star\,\Delta \hat p_u \ge \tfrac12\) dictates that the system may be pushed toward the opposite extreme. In doing so, we risk inducing the same boundary-stress regime currently plaguing vision models, which would manifest in LLMs as extreme adversarial fragility (e.g., imperceptible prompt perturbations or synonymous substitutions triggering catastrophic output shifts). Ultimately, the NUP suggests that boundary uncertainty may not be fully eliminable within the present formulation; in practice, reducing one form of fragility may increase pressure on another. Sustainable model design might therefore navigate this fundamental trade-off, seeking an intermediate ``Goldilocks'' zone rather than minimizing a single failure axis.

\section{Conclusion}
\label{sec:conclusion}

In this study, we introduced the Neural Uncertainty Principle (NUP), a geometric constraint-based perspective in which input projections and their loss gradients behave as conjugate observables under a loss-induced state. NUP unifies adversarial vulnerability in vision and hallucination in LLMs as opposite extremes on the same conjugate plane—saturation versus slack. 
Based on this principle, we derive a practical single-backward probing method, CC-Probe, which enables the diagnosis of cross-modal boundary anomalies and supports targeted interventions: ConjMask and LogitReg for enhancing robustness in vision tasks, and prefill-stage risk scoring for hallucination detection and prompt selection in language models.
By reframing two isolated reliability challenges as a single conjugate trade-off, the NUP provides a unified perspective for advancing robust and faithful AI.

% =========================
% Reproducibility Checklist
% =========================
\section*{Reproducibility and Artifact Release}
\label{sec:repro}
Training/eval scripts, seeds, logs, and prompt lists are released at \url{https://doi.org/10.57760/sciencedb.29613}. Hardware budgets (GPU-days), wall-clock, memory footprints, exact hyperparameters, and attack settings are included in the Supplementary Material.

\section*{Acknowledgments}
During manuscript preparation, the authors used ChatGPT (OpenAI) for language editing, cross-section consistency checking, and suggestions aimed at improving the clarity and presentation of mathematical exposition. All technical content, including the underlying proofs, derivations, experimental design, implementation, and conclusions, was developed and verified by the authors, who take full responsibility for the final manuscript.

\bibliography{refs} % use the provided BibTeX keys; or paste the .bib content in the submission

% \section{Biography Section}
 
% \vspace{11pt}

\begin{IEEEbiography}[{\includegraphics[width=1in,height=1.25in,clip,keepaspectratio]{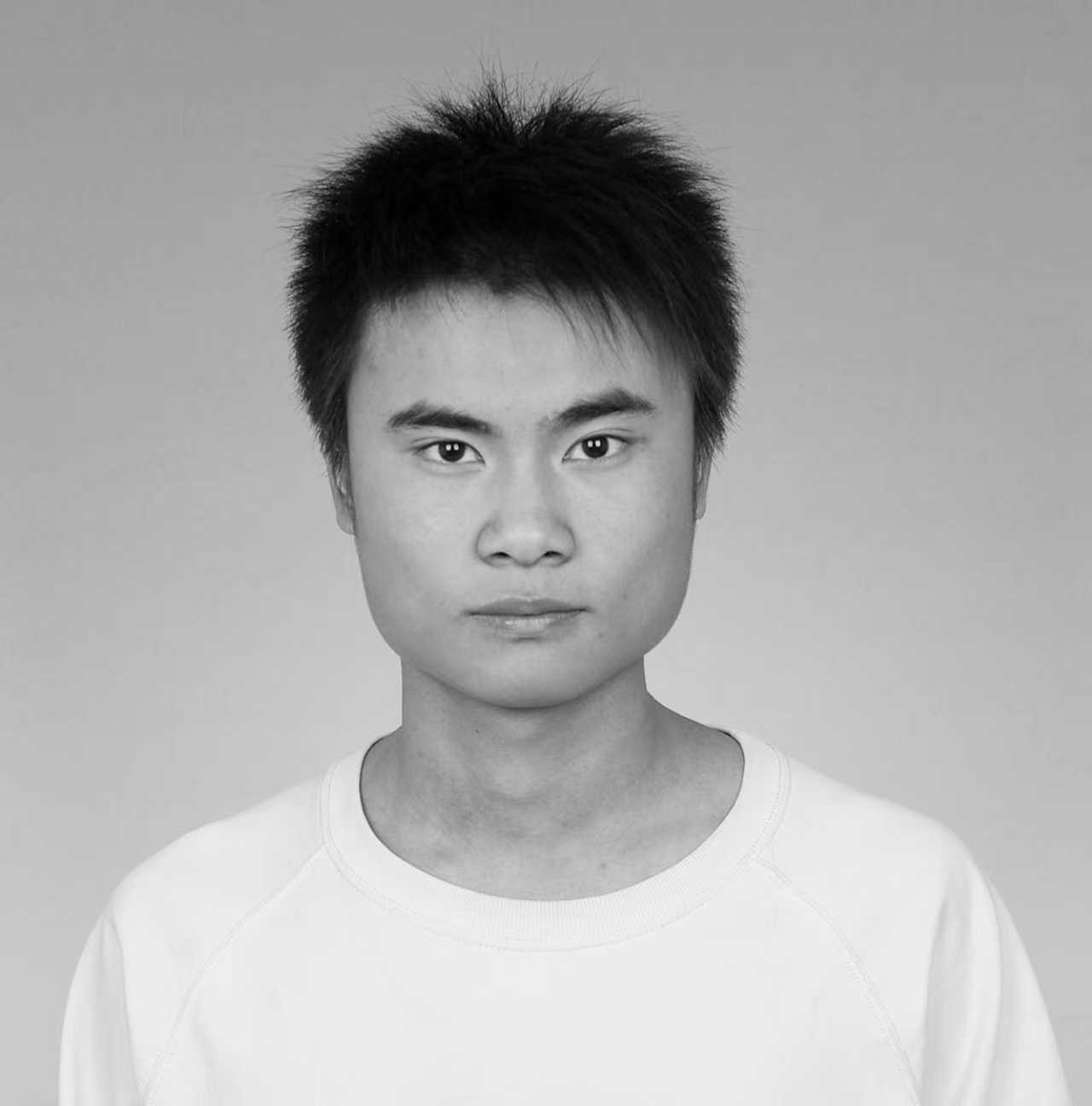}}]{Dong-Xiao Zhang}
Dong-Xiao Zhang received the B.S. degree in Measurement and Control Technology and Instrumentation from the Beijing Institute of Technology in 2013, and the Ph.D. degree in Optical Engineering from the Army Engineering University in 2020. He is currently an Associate Research Fellow with the Northwest Institute of Nuclear Technology, Xi’an, China, where he previously served as an Assistant Research Fellow (2020–2025). His research interests include deep reinforcement learning and decision-making intelligence.
\end{IEEEbiography}

\begin{IEEEbiography}[{\includegraphics[width=1in,height=1.25in,clip,keepaspectratio]{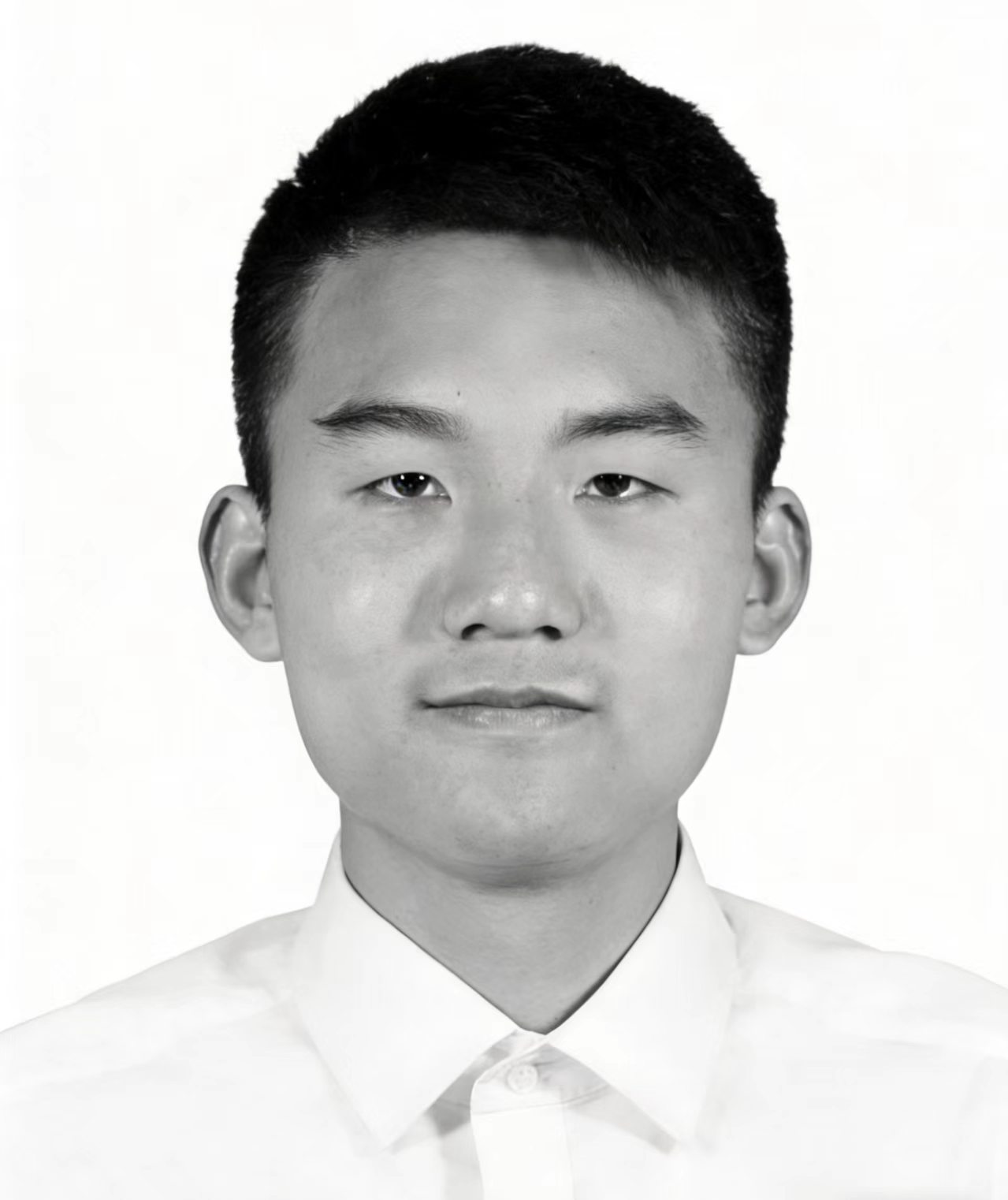}}]{Hu Lou}
Hu Lou received the B.Eng. degree in mechanical and electronic engineering from Jiangsu University of Science and Technology, Zhenjiang, China, in 2020, and the master’s degree in electronic information from Zhejiang University, Hangzhou, China, in 2023. He is currently an Assistant Research Fellow with the Northwest Institute of Nuclear Technology, Xi’an, China, where he previously served as a Research Assistant (2023–2025). His research interests include artificial intelligence and intelligent equipment.
\end{IEEEbiography}

\begin{IEEEbiography}[{\includegraphics[width=1in,height=1.25in,clip,keepaspectratio]{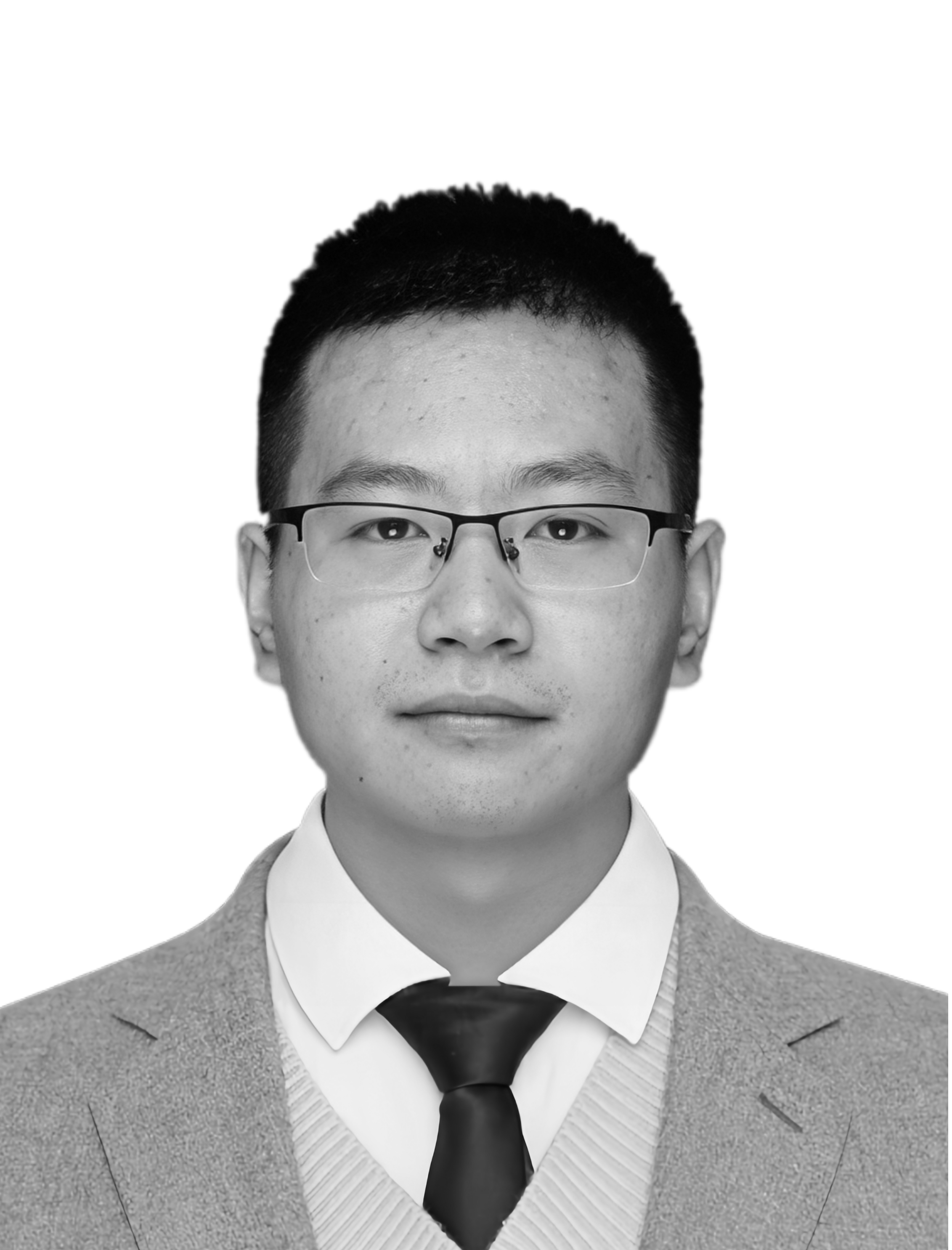}}]{Jun-Jie Zhang}
Jun-Jie Zhang received the B.S. degree in physics from the Second Artillery Engineering University, Xi'an, China, in 2014, and the Ph.D. degree in nuclear science and technology from the Rocket Force University of Engineering, Xi'an, China, in 2020, as a jointly trained Ph.D. student with the University of Science and Technology of China (USTC). He is currently an Associate Research Fellow with the Northwest Institute of Nuclear Technology, Xi'an, China, where he previously served as an Assistant Research Fellow (2020--2024). His research interests include physics-inspired artificial intelligence (physics for AI) and the physics of AI reliability, with emphasis on adversarial robustness and hallucinations in large language models. He received the Eighth China Youth Science and Technology Innovation Award in 2013.
\end{IEEEbiography}

\begin{IEEEbiography}[{\includegraphics[width=1in,height=1.25in,clip,keepaspectratio]{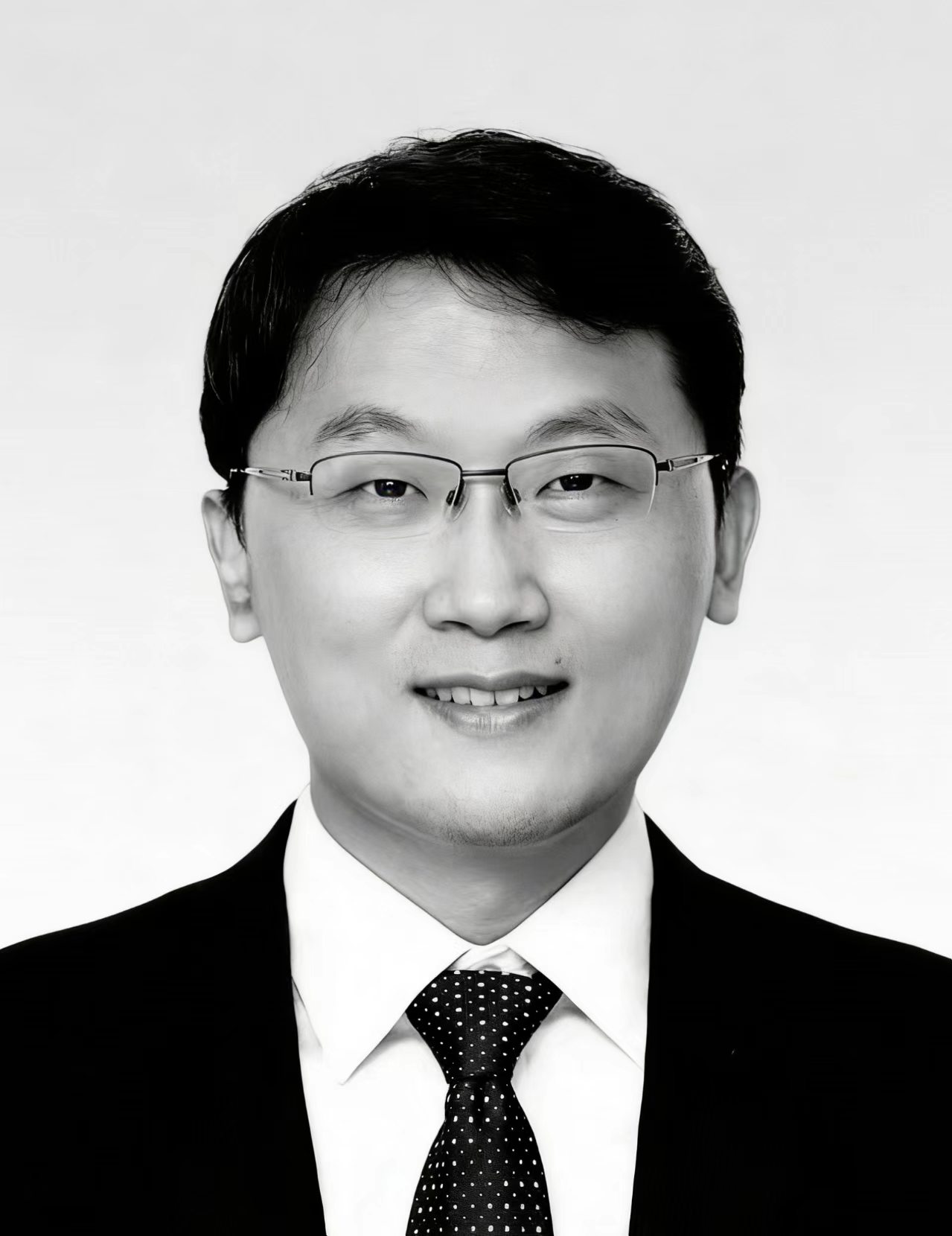}}]{Jun Zhu}
Jun Zhu received the B.S. and Ph.D. degrees from Tsinghua University, Beijing, China, in 2005 and 2009, respectively. He was a postdoctoral fellow in the Machine Learning Department at Carnegie Mellon University, Pittsburgh, PA, USA, from 2009 to 2011. He is currently a Professor at Tsinghua University, where he also serves as the Associate Dean of the Institute for Artificial Intelligence. His research interests primarily focus on developing statistical machine learning methods, including Bayesian methods, deep learning, and generative models, to understand scientific and engineering data.
\end{IEEEbiography}

\begin{IEEEbiography}[{\includegraphics[width=1in,height=1.25in,clip,keepaspectratio]{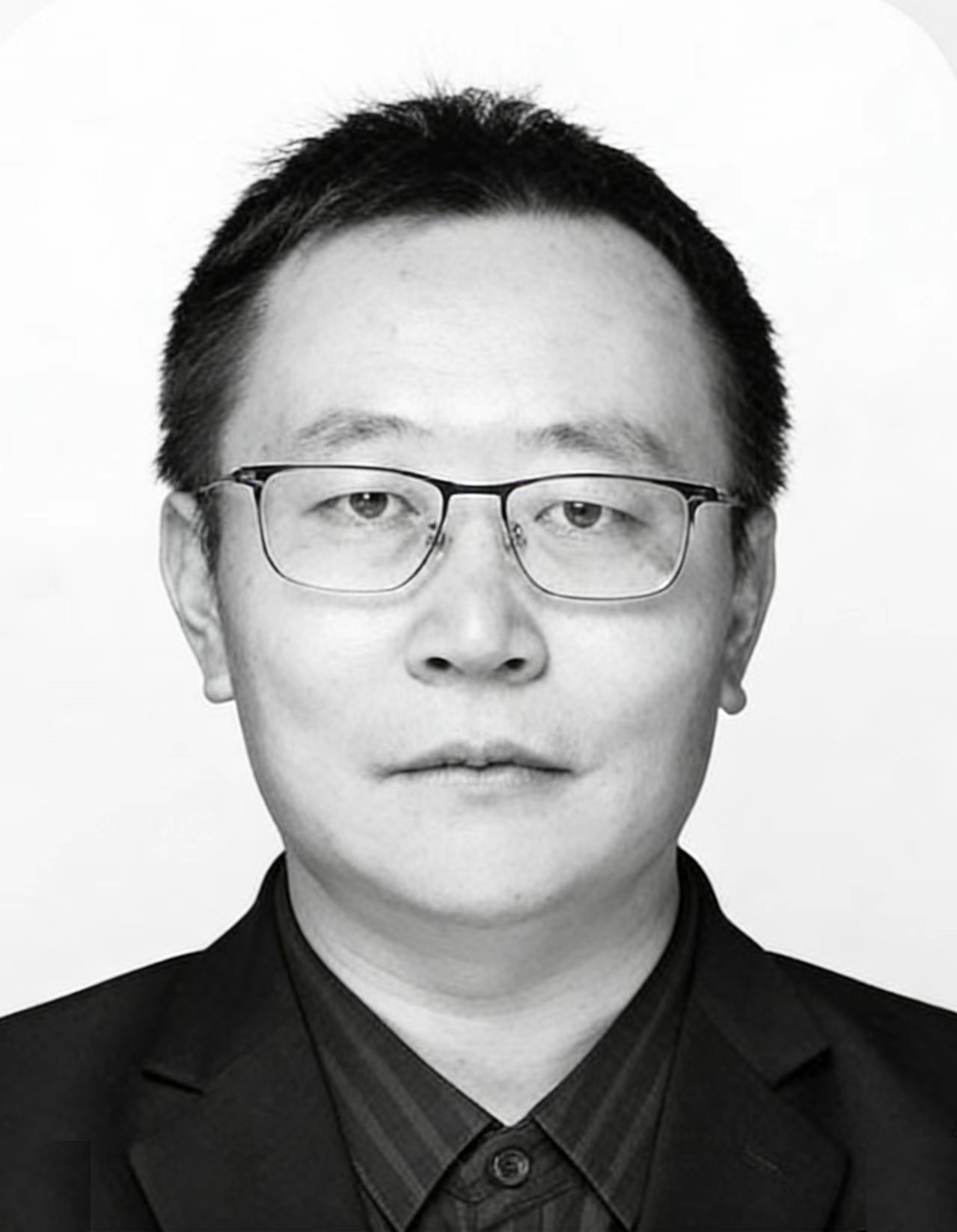}}]{Deyu Meng}
Deyu Meng received the B.S and Ph.D. degrees from Xi’an Jiaotong University, Xi’an, China, in 2001 and 2008, respectively. He was a Visiting Scholar with Carnegie Mellon University, Pittsburgh, PA, USA, from 2012 to 2014. He is currently a professor with the School of Mathematics and Statistics, Xi’an Jiaotong University. His research interests include machine learning, computer vision, and artificial intelligence, with a focus on foundational methodologies such as probabilistic modeling, and interpretable neural networks.
\end{IEEEbiography}

% \vfill

\end{document}

% --- supplement: Supplementary_Information.tex ---

\title{Supplementary Material for Neural Uncertainty Principle: A Unified View of Adversarial Fragility and LLM Hallucination}

\author{Dong-Xiao~Zhang,~Hu~Lou,~Jun-Jie~Zhang,~Jun~Zhu,~Deyu~Meng%
\thanks{Dong-Xiao Zhang, Hu Lou, and Jun-Jie Zhang contributed equally to this work.}%
\thanks{Corresponding authors: Jun-Jie Zhang and Deyu Meng (e-mail: zjacob@mail.ustc.edu.cn; dymeng@mail.xjtu.edu.cn).}%
\thanks{Dong-Xiao Zhang, Hu Lou, and Jun-Jie Zhang are with the Northwest Institute of Nuclear Technology, Xi'an, Shaanxi 710024, China.}%
\thanks{Jun Zhu is with Department of Computer Science and Technology, Institute for AI, BNRist Center, THBI Lab, Tsinghua-Bosch Joint Center for ML, Tsinghua University, Beijing 10084, China (e-mail: dcszj@mail.tsinghua.edu.cn).}%
\thanks{Deyu Meng is with the School of Mathematics and Statistics and the Ministry of Education Key Lab of Intelligent Networks and Network Security, Xi'an Jiaotong University, Xi'an, Shaanxi 710049, China.}%
}

\maketitle

\begin{abstract}
This supplementary material is designed as a guided companion to the main paper, with a compact roadmap to the theory, interpretation, and experimental details behind the Neural Uncertainty Principle (NUP).

Section~\ref{sec:neural_uncertainty_relation} serves as the main entry point. It introduces the loss-induced state, the directional operator pair, the Robertson--Schr\"odinger form of the NUP, the computable bridges to ordinary input-gradient statistics, and the resulting testable predictions for Exp.~1--6.

To help the reader interpret the formalism correctly, Sections~\ref{supp:foundations}--\ref{supp:analogy} clarify how the supplement is organized, why the CC-Probe is representation-dependent, and which parts of the quantum formalism are used only as mathematical structure rather than as physical assumptions.

The full derivations are collected in Section~\ref{app:supp_theory}, including the domain assumptions, the general Robertson--Schr\"odinger inequality, its cosine/correlation form, the specialization to the neural directional pair, and the identities that connect operator moments to the real gradient field and motivate the CC-Probe.

Section~\ref{app:protocal_detail} functions as the protocol hub for the empirical part of the paper. It summarizes the setup of Exp.~1--6, provides the detailed vision and language protocols, and clarifies the implementation role of ConjMask and LogitReg in the released code. Finally, Section~\ref{sec:lcr_verification} reports an appendix-only additional verification experiment based on Latent Conjugate Regularization (LCR), included as an auxiliary causal robustness check outside the main Exp.~1--6 pipeline.

Overall, the supplement is organized to support four reading needs: a concise theory-facing overview, clarification of key interpretive assumptions, complete derivations, and experiment-by-experiment implementation detail.
\end{abstract}

\begin{IEEEkeywords}
Neural Uncertainty Principle, Conjugate Correlation, Adversarial Robustness, LLM Hallucination, Explainable AI.
\end{IEEEkeywords}

\newpage

\section{Formalism of the Neural Uncertainty Principle}
\label{sec:neural_uncertainty_relation}

We provide the formalism of NUP via a loss-weighted operator viewpoint that makes \emph{input--gradient coupling} an explicit state variable, yielding a unified diagnostic signal across the saturation (vision) and slack (LLM) regimes:
\emph{boundary stress}, where overly strong coupling leads to instability,
and \emph{under-conditioning}, where weak coupling leaves the model insufficiently grounded.

\subsection{Neural objects, loss-induced density, and operator statistics}
\label{subsec:nur_operators}

\paragraph{Models, predictions, and losses}
Let $f_\theta$ denote a neural model with parameters $\theta$.
Given an input representation $x$, the model outputs an object $f_\theta(x)$.

When the task involves selecting an outcome from a discrete label set $\mathcal Y$,
we denote the model's predictive distribution by
\[
\pi_\theta(y\mid x), \qquad y\in\mathcal Y,
\]
where $y$ is a \emph{label variable} ranging over $\mathcal Y$.
For a $K$-class classifier, the network outputs logits $z_\theta(x)\in\mathbb R^K$ and
\[
\pi_\theta(y\mid x)=\mathrm{softmax}(z_\theta(x))_y,\qquad y\in\{1,\dots,K\}.
\]
For language models, $f_\theta(x)$ can be the next-token logits and $\pi_\theta(\cdot\mid x)$
is the corresponding vocabulary softmax distribution.

We use $\mathcal L(\cdot)$ for a scalar loss.
Given a \emph{target/condition} index $c$ (e.g., the ground-truth label in classification,
or the target token/sequence in language modeling), we define the per-sample loss as
\begin{equation}
\label{eq:Lc_def_main}
\mathcal{L}_c(x)\;:=\;\mathcal{L}\!\big(f_\theta(x),c\big).
\end{equation}

\paragraph{Input--gradient pair (coordinate vs.\ sensitivity)}
The central objects of our framework are not parameters $\theta$ but the input-level pair
\[
x \in \mathbb{R}^d
\quad\text{and}\quad
p(x)\in\mathbb{R}^d,
\]
where
\begin{equation}
\label{eq:grad_field_def_main}
p(x)\;:=\;\nabla_x \mathcal{L}_c(x)
\end{equation}
is the gradient of the loss with respect to the input.
This gradient is a concrete measure of \emph{sensitivity}: it describes how the loss changes under a small input perturbation.
In discriminative vision models, $p(x)$ captures how sharp the local decision boundary is around $x$.
In language models for inference screening, $x$ denotes the (input-layer) prompt-embedding tensor, and $p(x)$ is
the gradient of the loss with respect to these input embeddings, capturing prompt sensitivity prior to decoding.

\paragraph{Loss-induced density and state expectations}
We construct a loss-induced density that emphasizes sharp decision boundaries between concepts.
Define the normalized loss-phase state
\begin{equation}
\label{eq:loss_phase_state_main}
\psi_c(x)
\;:=\;
A_c(x)\exp\!\big(i\alpha\,\mathcal{L}_c(x)\big),
\end{equation}
where
\begin{equation}
\label{eq:Ac_bc_main}
A_c(x):=\frac{|\mathcal{L}_c(x)|}{\sqrt{\beta_c}},
\qquad
\beta_c:=\int_{\mathcal{X}}\mathcal{L}_c(x)^2\,dx,
\end{equation}
so that $|\psi_c(x)|^2=A_c(x)^2=\mathcal{L}_c(x)^2/\beta_c$ weights samples with larger loss%
\footnote{
We use the Hilbert space $\mathcal H := L^2(\mathcal X,dx)$ with $dx$ a reference (Lebesgue) measure on a continuous ambient input space $\mathcal X\subseteq\mathbb R^d$.
The $dx$-integrals are an analytic idealization that enables integration by parts; in implementations they are replaced by empirical averages over a minibatch (or a class-conditioned subset when $c$ is fixed).
Importantly, the \emph{CC-Probe} used in experiments does \emph{not} require explicitly forming $\psi_c$ or $\beta_c$, nor evaluating $\partial_u\log A_c$; these quantities are only used to connect the operator form to standard input gradients.
We assume $\beta_c\in(0,\infty)$ so that $\psi_c\in L^2(\mathcal X)$ and $\|\psi_c\|_2=1$.
When $\partial_u\log A_c$ appears, we assume $|\mathcal L_c(x)|>0$ almost everywhere under $|\psi_c|^2dx$; numerically we use $\log(|\mathcal L_c(x)|+\epsilon_A)$ with a tiny $\epsilon_A>0$.}. Here $\alpha>0$ is a scalar phase-scale parameter that only rescales the phase-gradient drift term (Eq.~\eqref{eq:app_pu_on_psi}).
Since our empirical CC-Probe is a cosine similarity (scale-invariant), one may set $\alpha=1$ without loss of generality; we keep $\alpha$ to make the drift--amplitude decomposition explicit.
Intuitively, this construction focuses our statistics on the ``frontier'' where classification errors and brittle behavior manifest.

Fix condition $c$ and the state $\psi_c$ from \eqref{eq:loss_phase_state_main}.
For scalar functions $g(x)$, define the loss-induced expectation
\[
\mathbb{E}_c[g]:=\int_{\mathcal{X}} g(x)\,|\psi_c(x)|^2\,dx.
\]
For operators $\hat{O}$ on $\mathcal H$, define the state expectation
\[
\langle \hat{O}\rangle_c := \langle \psi_c,\hat{O}\psi_c\rangle
=\int_{\mathcal{X}}\psi_c(x)^\ast (\hat{O}\psi_c)(x)\,dx.
\]
For multiplication operators (e.g., $\hat{x}_u$), these coincide:
$\langle \hat{x}_u\rangle_c=\mathbb{E}_c[x_u]$.

\subsection{The neural uncertainty relation and its cosine form}
\label{subsec:nur_relation}

\paragraph{Directional reduction and interpretability}
Inputs $x$ and gradients $p(x)$ are both vectors in $\mathbb{R}^d$.
To avoid ambiguous ``vector-valued'' uncertainty statements, we analyze one-dimensional projections.
Let $u\in\mathbb{R}^d$ with $\|u\|_2=1$ denote a direction that selects a scalar coordinate of interest.
Define the projected variables
\begin{equation}
\label{eq:direction_defs_main}
\begin{aligned}
x_u:=&u^\top x,
\\
p_u(x):=&u^\top p(x)=u^\top\nabla_x\mathcal{L}_c(x)=\partial_u\mathcal{L}_c(x).
\end{aligned}
\end{equation}
In practice, $u$ may represent a salient feature direction, an embedding direction, or a random probe direction.

\paragraph{Projected operators and commutator}
For a fixed unit direction $u\in\mathbb R^d$,
we consider the canonical pair of directional operators
\begin{equation}
\label{eq:operators_u_main}
(\hat{x}_u g)(x):=(u^\top x)\,g(x),
\qquad
(\hat{p}_u g)(x):=-i\,\partial_u g(x),
\end{equation}
where $\partial_u:=u^\top\nabla_x$.

These operators are taken on a common dense domain $\mathcal D\subset L^2(\mathcal X)$ on which $\hat p_u$ is self-adjoint (or essentially self-adjoint) and for which boundary terms vanish so that integration by parts is valid.
Then for any sufficiently smooth test function $g(x)$, we have\footnote{We use the standard abbreviations
$[\hat{A},\hat{B}]:=\hat{A}\hat{B}-\hat{B}\hat{A}$ (commutator) and
$\{\hat{A},\hat{B}\}:=\hat{A}\hat{B}+\hat{B}\hat{A}$ (anticommutator).}
\begin{align}
[\hat x_u,\hat p_u] g
&= \hat x_u(-i\partial_u g) - (-i\partial_u)((u^\top x) g) \nonumber \\
&= -i (u^\top x)\partial_u g
   + i\big[(u^\top x)\partial_u g + g\big] \nonumber \\
&= i g.
\end{align}
Therefore the commutator holds on $\mathcal D$,
\begin{equation}
\label{eq:commutator_main}
[\hat x_u,\hat p_u]=i\mathbb I,
\end{equation}
where $\mathbb I$ denotes the identity operator. 

Consequently, for any normalized state $\psi_c\in\mathcal D$,
\begin{equation}
\big|\langle\psi_c,[\hat x_u,\hat p_u]\psi_c\rangle\big|
=\big|\langle\psi_c,i\mathbb I\,\psi_c\rangle\big|
=1.
\end{equation}

\textit{Remark.}
If $\mathcal X$ is bounded and boundary terms do not vanish for the chosen self-adjoint extension of $-i\partial_u$,
the \emph{formal} commutator \eqref{eq:commutator_main} still holds on $\mathcal D$,
but the \emph{state expectation} $\langle[\hat x_u,\hat p_u]\rangle_c$ may deviate from $i$.
All subsequent cosine/volume forms remain valid with the commutator scale
$\kappa:=\tfrac12\big|\langle[\hat x_u,\hat p_u]\rangle_c\big|$. 

The operator $\hat p_u=-i\partial_u$ acts on functions in $L^2(\mathcal X)$ and should not be confused with the real scalar field $p_u(x)=\partial_u\mathcal L_c(x)$ in Eq.~\eqref{eq:direction_defs_main}. The link is created by the loss-phase choice in Eq.~\eqref{eq:loss_phase_state_main}, which makes $\hat p_u\psi_c$ contain a real drift term proportional to $p_u(x)$ (Eq.~\eqref{eq:app_pu_on_psi}).

\paragraph{Centered variables, dispersions, and operator covariance}
For any unit direction $u$, define the centered scalar coordinate
$\Delta x_u:=x_u-\mathbb{E}_c[x_u]$ and the centered operators
$\Delta\hat{x}_u:=\hat{x}_u-\langle \hat{x}_u\rangle_c$ and
$\Delta\hat{p}_u:=\hat{p}_u-\langle \hat{p}_u\rangle_c$.
Here $\Delta x_u$ denotes the centered scalar random variable, whereas
$\Delta\hat x_c(u)=\sqrt{\mathbb{E}_c[(\Delta x_u)^2]}$ denotes the corresponding dispersion (standard deviation).
The (loss-weighted) dispersions are
\begin{equation}
\label{eq:disp_defs_main}
\begin{aligned}
(\Delta \hat{x}_c(u))^2
&:=\mathbb{E}_c[(\Delta x_u)^2], \\
(\Delta \hat{p}_c(u))^2
&:=\big\|\big(\hat{p}_u-\langle \hat{p}_u\rangle_c\big)\psi_c\big\|_{L^2(\mathcal X)}^2,
\end{aligned}
\end{equation}
and the symmetrized (operator) covariance is
\begin{equation}
\label{eq:cov_op_main}
\mathrm{Cov}_c(\hat{x}_u,\hat{p}_u)
:=\frac12\Big\langle \Delta\hat{x}_u\Delta\hat{p}_u+\Delta\hat{p}_u\Delta\hat{x}_u\Big\rangle_c.
\end{equation}

\noindent\textit{Notation alignment with the main text.}
When the condition $c$ is fixed, we occasionally abbreviate
$\Delta \hat{x}_c(u)$ and $\Delta \hat{p}_c(u)$ as
$\Delta \hat{x}_u$ and $\Delta \hat{p}_u$,
respectively, to match the notation used in the main text.

\textit{Intuitively,}
because $\mathbb E_c[\cdot]$ is taken under the loss-induced density $|\psi_c(x)|^2\propto \mathcal L_c(x)^2$,
$(\Delta \hat{x}_c(u))^2$ measures how widely the \emph{high-loss (boundary-relevant)} mass
is spread along the coordinate $x_u$.
Similarly, $(\Delta \hat{p}_c(u))^2$ measures the dispersion of the corresponding sensitivity operator along $u$,
and $\mathrm{Cov}_c(\hat x_u,\hat p_u)$ quantifies how these two dispersions are coupled under the same loss weighting.
Here and below, for a scalar function $g(x)$ we write
$\mathrm{Var}_c(g):=\mathbb E_c[(g-\mathbb E_c[g])^2]$.

\begin{theorem}[Neural Uncertainty Relation]
\label{thm:nur_rs}
For the canonical pair $(\hat x_u,\hat p_u)$ and the state $\psi_c$, assume
$\psi_c\in\mathrm{Dom}(\hat x_u)\cap\mathrm{Dom}(\hat p_u)$ with finite dispersions
$0<\Delta\hat x_c(u)<\infty$ and $0<\Delta\hat p_c(u)<\infty$.
Then
\begin{equation}
\label{eq:nur_rs}
(\Delta \hat{x}_c(u))^2(\Delta \hat{p}_c(u))^2 \ge
\frac14\big|\langle\psi_c,[\hat{x}_u,\hat{p}_u]\psi_c\rangle\big|^2
+\mathrm{Cov}_c(\hat{x}_u,\hat{p}_u)^2,
\end{equation}
where $\mathrm{Cov}_c(\hat{x}_u,\hat{p}_u):=
\frac12\langle\psi_c,\{\Delta\hat x_u,\Delta\hat p_u\}\psi_c\rangle$
is the symmetrized covariance.
\end{theorem}

\noindent\textit{Remark.}
This is exactly the Robertson--Schr\"odinger form used in the main text,
but written here with explicit domain assumptions and the notation
$\Delta\hat x_c(u),\Delta\hat p_c(u)$.
When the condition $c$ is fixed, we abbreviate them as
$\Delta\hat x_u,\Delta\hat p_u$ to match the main text.

\paragraph{Cosine (correlation) form and the effective feasible volume}
Define the operator correlation coefficient
\begin{equation}
\label{eq:rho_def_main}
\rho_c(u)
:=
\frac{\mathrm{Cov}_c(\hat{x}_u,\hat{p}_u)}{\Delta \hat{x}_c(u)\,\Delta \hat{p}_c(u)}
\in[-1,1].
\end{equation}
Then \eqref{eq:nur_rs} is equivalent to
\begin{equation}
\label{eq:nur_cosine}
(\Delta \hat{x}_c(u))^2(\Delta \hat{p}_c(u))^2\bigl(1-\rho_c(u)^2\bigr)
\;\ge\;
\frac14\big|\langle[\hat{x}_u,\hat{p}_u]\rangle_c\big|^2,
\end{equation}
and with \eqref{eq:commutator_main},
\begin{equation}
\label{eq:nur_cosine_canonical}
\Delta \hat{x}_c(u)\,\Delta \hat{p}_c(u)\;\ge\;\frac{1}{2\sqrt{1-\rho_c(u)^2}}.
\end{equation}

We interpret the left-hand side of \eqref{eq:nur_cosine} as an \emph{effective feasible conjugate volume}:
\begin{equation}
\label{eq:V_def_main}
\begin{aligned}
V_c(u)
&:=
\Delta \hat{x}_c(u)\,\Delta \hat{p}_c(u)\,\sqrt{1-\rho_c(u)^2},
\\
\kappa&:=\frac12\big|\langle[\hat{x}_u,\hat{p}_u]\rangle_c\big|.
\end{aligned}
\end{equation}
so \eqref{eq:nur_cosine} becomes the hard constraint $V_c(u)\ge \kappa$.

To connect Eq.~\eqref{eq:nur_cosine} to the mixed-axis form used in the main text, define
\[
\hat m_u(\lambda):=\hat x_u+\lambda \hat p_u,
\qquad
\Delta \hat m_u^\star:=\min_{\lambda}\Delta \hat m_u(\lambda).
\]
A direct minimization gives
\begin{equation}
\label{eq:mixed_axis_identity_si}
(\Delta \hat m_u^\star)^2
=
(\Delta \hat{x}_c(u))^2\bigl(1-\rho_c(u)^2\bigr).
\end{equation}

\begin{lemma}[Mixed-axis form of NUP]
\label{lem:mixed_axis_nup_si}
With $\Delta \hat m_u^\star$ defined by Eq.~\eqref{eq:mixed_axis_identity_si},
Theorem~\ref{thm:nur_rs} implies
\begin{equation}
\label{eq:mixed_axis_nup_si}
\boxed{\ \Delta \hat m_u^\star\,\Delta \hat p_c(u)\ge \kappa\ }.
\end{equation}
\end{lemma}

\noindent\textit{Under the canonical commutator in Eq.~\eqref{eq:commutator_main}, this reduces to }
\[
\Delta \hat m_u^\star\,\Delta \hat p_c(u)\ge \tfrac12,
\]
\textit{which is exactly the main-text mixed-axis NUP after the notational identification
$\Delta \hat p_c(u)\leftrightarrow \Delta \hat p_u$.}

A convenient summary statistic is the slack ratio\footnote{Equality in \eqref{eq:nur_rs} (hence $S_c(u)=1$) occurs when the Cauchy--Schwarz step is tight, i.e., when $\Delta\hat x_u\psi_c$ and $\Delta\hat p_u\psi_c$ are linearly dependent.
This does not force $|\rho_c(u)|$ to be large: both $\rho_c(u)=0$ and $\rho_c(u)\neq 0$ can be compatible with saturation.
What $|\rho_c(u)|$ controls is the effective volume factor $\sqrt{1-\rho_c(u)^2}$ in \eqref{eq:V_def_main}: for fixed dispersions, larger $|\rho_c(u)|$ reduces $V_c(u)$ and therefore moves the system closer to violating the bound, motivating alignment observables as boundary-stress indicators.}
\begin{equation}
\label{eq:slack_ratio_main}
S_c(u):=\frac{V_c(u)}{\kappa}\ge 1,
\end{equation}
which distinguishes near-saturation ($S_c(u)\approx 1$) from slack-dominated operation ($S_c(u)\gg 1$). 

\subsection{Making the operator statistics computable: drift, variance, and covariance bridges}
\label{subsec:nup_bridges}

We provide exact identities that connect the operator quantities
$\Delta \hat p_c(u)$, $\mathrm{Cov}_c(\hat x_u,\hat p_u)$, and $\rho_c(u)$
to ordinary loss-weighted statistics of the real input-gradient $p(x)=\nabla_x\mathcal L_c(x)$.
These bridges are used to motivate computable proxies such as the CC-Probe.

\paragraph{Loss-phase drift identity (link to the real gradient projection)}
Recall from Eq.~\eqref{eq:direction_defs_main} that $p_u(x)=\partial_u\mathcal L_c(x)$.
A direct differentiation of $\psi_c(x)=A_c(x)e^{i\alpha\mathcal{L}_c(x)}$ yields
\begin{equation}
\label{eq:app_pu_on_psi}
\hat{p}_u\psi_c
=-i\partial_u\psi_c
=\Big(\alpha A_c(x)\,p_u(x) - i\,\partial_u A_c(x)\Big)e^{i\alpha\mathcal{L}_c(x)}.
\end{equation}
This identity makes explicit why the factor $i$ in $\hat{p}_u=-i\partial_u$ is convenient: it exposes a \emph{real} drift term proportional to $p_u(x)$, plus an orthogonal amplitude term via $\partial_u A_c$.

\paragraph{Variance decomposition and conservative proxies}
Using \eqref{eq:app_pu_on_psi} and $|\psi_c|^2=A_c^2$, one obtains
\begin{equation}
\label{eq:app_var_decomp}
\begin{aligned}
\|\hat{p}_u\psi_c\|_{L^2(\mathcal X)}^2
=&
\alpha^2\int_{\mathcal X}A_c(x)^2\,p_u(x)^2\,dx
+\int_{\mathcal X}(\partial_u A_c(x))^2\,dx\\
=&
\alpha^2\,\mathbb E_c[p_u(x)^2]+\mathbb E_c\!\big[(\partial_u\log A_c(x))^2\big].
\end{aligned}
\end{equation}
Likewise, for the centered operator one has
\begin{equation}
\label{eq:app_centered_var_decomp}
\begin{aligned}
(\Delta \hat{p}_c(u))^2
= &
\alpha^2\,\mathrm{Var}_c\!\big(p_u(x)\big)\\
&+\mathbb{E}_c\!\Big[\big(\partial_u \log A_c(x)-\mathbb{E}_c[\partial_u \log A_c]\big)^2\Big],
\end{aligned}
\end{equation}
where $\mathrm{Var}_c(p_u):=\mathbb{E}_c\!\big[(p_u-\mathbb{E}_c[p_u])^2\big]$
\footnote{Under the same boundary/decay assumptions, so that $\int_{\mathcal X}\partial_u(A_c(x)^2)\,dx=0$, we have
$\mathbb E_c[\partial_u\log A_c]=\int_{\mathcal X}A_c(x)^2\,\partial_u\log A_c(x)\,dx=\int_{\mathcal X}A_c(x)\,\partial_u A_c(x)\,dx
=\frac12\int_{\mathcal X}\partial_u(A_c(x)^2)\,dx
=0.$}.
The second term in \eqref{eq:app_centered_var_decomp} is nonnegative, implying the conservative bound
\begin{equation}
\label{eq:disp_bridge_main}
\begin{aligned}
(\Delta \hat{p}_c(u))^2
=&
\alpha^2\,\mathrm{Var}_c\!\big(p_u(x)\big)
+\mathrm{Var}_c\!\big(\partial_u\log A_c(x)\big)\\
\;\ge\;&
\alpha^2\,\mathrm{Var}_c\!\big(p_u(x)\big).
\end{aligned}
\end{equation}

\paragraph{Operator-to-scalar covariance reduction}
\label{subsec:op_cov_reduction}
For scalar functions $g(x),h(x)$ define the loss-weighted scalar covariance
\begin{equation}
\label{eq:sc_cov_def}
\begin{aligned}
\mathrm{Cov}^{\mathrm{sc}}_c(g,h)
:=&
\mathbb E_c\!\big[(g-\mathbb E_c[g])(h-\mathbb E_c[h])\big],
\\
\mathrm{Var}_c(g):=&\mathrm{Cov}^{\mathrm{sc}}_c(g,g),
\end{aligned}
\end{equation}
where $\mathbb E_c[\cdot]=\int_{\mathcal X}(\cdot)\,|\psi_c(x)|^2\,dx$.

\begin{theorem}[Covariance reduction under the loss-induced state]
\label{thm:cov_reduction_si}
Let $\psi_c(x)=A_c(x)e^{i\alpha\mathcal L_c(x)}$ with
$A_c(x)^2=|\psi_c(x)|^2=w_c(x)$ as defined in Eq.~\eqref{eq:loss_phase_state_main}.
Assume that boundary terms vanish for integration by parts along direction $u$, and let
$\hat x_u$ and $\hat p_u=-i\partial_u$ be defined as in Eq.~\eqref{eq:operators_u_main}
on a common dense domain $\mathcal D$.
Then the symmetrized operator covariance satisfies the exact identity
\begin{equation}
\label{eq:op_cov_equals_scalar_cov}
\mathrm{Cov}_c(\hat x_u,\hat p_u)
=
\alpha\,\mathrm{Cov}^{\mathrm{sc}}_c(x_u,p_u).
\end{equation}
\end{theorem}

\begin{proof}
Write $\Delta x_u:=x_u-\mathbb{E}_c[x_u]$ and recall the centered operators
$\Delta\hat x_u:=\hat x_u-\langle\hat x_u\rangle_c\mathbb{I}$ and
$\Delta\hat p_u:=\hat p_u-\langle\hat p_u\rangle_c\mathbb{I}$.
Using \eqref{eq:app_pu_on_psi},
\[
\hat p_u\psi_c
=\Big(\alpha A_c(x)\,p_u(x)-i\,\partial_u A_c(x)\Big)e^{i\alpha\mathcal L_c(x)}.
\]
A direct substitution yields
\[
\langle \Delta \hat x_u\psi_c,\Delta \hat p_u\psi_c\rangle
=
\alpha\,\mathbb E_c[\Delta x_u\,p_u]
\;-\; i\,\mathbb E_c[\Delta x_u\,\partial_u\log A_c].
\]
By definition, the symmetrized covariance is the real part:
\begin{align*}
\mathrm{Cov}_c(\hat x_u,\hat p_u)
&=\mathrm{Re}\,\langle \Delta \hat x_u\psi_c,\Delta \hat p_u\psi_c\rangle\\
&=\alpha\,\mathbb E_c[\Delta x_u\,\Delta p_u]\\
&=\alpha\,\mathrm{Cov}^{\mathrm{sc}}_c(x_u,p_u),
\end{align*}
which proves the claim.
\end{proof}

\noindent\textit{Remark.}
Theorem~\ref{thm:cov_reduction_si} is an \emph{exact} bridge for the covariance term in the RS inequality.
The dispersion $\Delta\hat p_c(u)$, however, also contains the nonnegative amplitude-variation term
$\mathrm{Var}_c(\partial_u\log A_c)$ (Eq.~\eqref{eq:disp_bridge_main}),
which attenuates the operator correlation $\rho_c(u)$ relative to the purely scalar correlation.
This will be made explicit by Eq.~\eqref{eq:rho_attenuation}.

\paragraph{Attenuation bound for the correlation factor}
Define the loss-weighted scalar correlation (when the denominator is nonzero)
\[
\mathrm{Corr}^{\mathrm{sc}}_c(g,h)
:=
\frac{\mathrm{Cov}^{\mathrm{sc}}_c(g,h)}{\sqrt{\mathrm{Var}_c(g)\,\mathrm{Var}_c(h)}}.
\]
Combining Theorem~\ref{thm:cov_reduction_si} with the first line of \eqref{eq:disp_defs_main} and \eqref{eq:disp_bridge_main} gives
\begin{equation}
\label{eq:rho_attenuation}
\begin{aligned}
\rho_c(u)
=&
\frac{\alpha\,\mathrm{Cov}^{\mathrm{sc}}_c(x_u,p_u)}{\Delta\hat x_c(u)\,\Delta\hat p_c(u)}\\
=&
\frac{\mathrm{Corr}^{\mathrm{sc}}_c(x_u,p_u)}
{\sqrt{1+\mathrm{Var}_c(\partial_u\log A_c)/(\alpha^2\mathrm{Var}_c(p_u))}},
\end{aligned}
\end{equation}
and in particular $|\rho_c(u)|\le \big|\mathrm{Corr}^{\mathrm{sc}}_c(x_u,p_u)\big|$.

\noindent
\textit{Remark.} A technical subtlety is the domain/self-adjointness of the momentum operator: \(\hat p_u=-i\partial_u\) is (essentially) self-adjoint only under appropriate boundary/decay conditions. The phase choice in \eqref{eq:loss_phase_state_main} does \emph{not} make \(\hat p_u\) self-adjoint; rather, it ensures that applying \(\hat p_u\) to \(\psi_c\) exposes a \emph{real drift term} proportional to the real gradient projection $p_u(x)$, plus an orthogonal amplitude term via $\partial_u A_c$ in \eqref{eq:app_pu_on_psi}. This is what enables the operator-to-scalar covariance reduction and motivates a real-valued cosine proxy.

\paragraph{The empirical CC-Probe}
The RS cosine factor $\sqrt{1-\rho_c(u)^2}$ in \eqref{eq:nur_cosine} depends on the operator correlation $\rho_c(u)$.
Equation~\eqref{eq:rho_attenuation} shows that $\rho_c(u)$ is governed by the loss-weighted \emph{scalar} correlation $\mathrm{Corr}^{\mathrm{sc}}_c(x_u,p_u)$, possibly attenuated by the nonnegative amplitude term $\mathrm{Var}_c(\partial_u\log A_c)$.
By Theorem~\ref{thm:cov_reduction_si}, the covariance channel entering $\rho_c(u)$ reduces exactly to the \emph{real} input--gradient coupling $\mathrm{Cov}^{\mathrm{sc}}_c(x_u,p_u)$.
We now introduce a cheap \emph{single-sample} proxy for this coupling channel.

Define population-centered vectors under the loss-induced density:
\begin{equation}
\begin{aligned}
\bar x:=&x-\mu_c,\qquad \bar p:=p(x)-\nu_c,
\end{aligned}
\end{equation}
where 
\begin{equation}
\mu_c:=\mathbb E_c[x], \qquad \nu_c:=\mathbb E_c[p(x)].
\end{equation}
Then for any unit $u$, the centered projections satisfy
\[
x_u-\mathbb E_c[x_u]=u^\top\bar x,\qquad
p_u-\mathbb E_c[p_u]=u^\top\bar p,
\]
and therefore the scalar covariance appearing in Theorem~\ref{thm:cov_reduction_si} admits the simple form
\begin{equation}
\label{eq:cov_dir_centered}
\mathrm{Cov}^{\mathrm{sc}}_c(x_u,p_u)
=\mathbb E_c\!\big[(u^\top\bar x)(u^\top\bar p)\big].
\end{equation}
That is, for each direction $u$, the covariance channel is the loss-weighted average (over samples) of the directional products $(u^\top\bar x)(u^\top\bar p)$.

For a \emph{fixed sample} (i.e., fixed $\bar x,\bar p$), draw a random direction $u\sim\mathrm{Unif}(\mathbb S^{d-1})$ and define
$X_u:=u^\top\bar x$ and $P_u:=u^\top\bar p$.
Using isotropy $\mathbb E_u[uu^\top]=\tfrac{1}{d}\mathbb I$, we obtain
\begin{equation}
\label{eq:dot_as_dir_average}
\begin{aligned}
\mathbb E_u[X_uP_u]=&\frac{1}{d}\,\bar x^\top \bar p,\\
\mathbb E_u[X_u^2]=&\frac{1}{d}\|\bar x\|_2^2,\\
\mathbb E_u[P_u^2]=&\frac{1}{d}\|\bar p\|_2^2.
\end{aligned}
\end{equation}
Consequently, the Pearson correlation over random directions equals the centered cosine similarity, yielding a canonical per-sample alignment score.

\begin{lemma}[Centered directional-probe correlation and cosine]
\label{lem:cc_probe_identity}
Fix any nonzero centered vectors $\bar x,\bar p\in\mathbb R^d$.
Let $u\sim\mathrm{Unif}(\mathbb S^{d-1})$ and define
$X_u:=u^\top\bar x$ and $P_u:=u^\top\bar p$.
Then
\begin{equation}
\label{eq:cc_probe_identity}
\mathrm{Corr}_u(X_u,P_u)
=
\frac{\bar x^\top \bar p}{\|\bar x\|_2\,\|\bar p\|_2}
=
\cos(\bar x,\bar p).
\end{equation}
\end{lemma}

\noindent\textit{Remark.}
This is the centered generalization of the main-text identity
$\mathrm{Corr}_u(x_u,p_u)=\cos(x,p)$.
The centering here is theoretical; in the actual language experiments we implement a
\emph{per-sample mean-centered} version by subtracting the scalar means of the flattened
prompt and gradient vectors (see Eq.~\eqref{eq:exp5_cprompt_code_abs}),
while in vision we use the uncentered practical image-space form below because the inputs
are already normalized.

\noindent
\textit{Definition (CC-Probe).}
Motivated by Lemma~\ref{lem:cc_probe_identity}, we define the centered CC-Probe as
\begin{equation}
\label{eq:cc_probe_def}
c_{\text{centered}}(x)
:=
\big|\cos(\bar x,\bar p)\big|
=
\frac{\big|\bar x^\top\bar p\big|}{\|\bar x\|_2\,\|\bar p\|_2}.
\end{equation}
The modality-specific instantiations used in the experiments are:
(i) the image-space score
\begin{equation}
\label{eq:cc_probe_img_si}
c_{\text{img}}(x):=\big|\cos(x,p(x))\big|
=\frac{\big|x^\top p(x)\big|}{\|x\|_2\,\|p(x)\|_2}
\end{equation}
for vision, and
(ii) the prompt-side per-sample mean-centered score in
Eq.~\eqref{eq:exp5_cprompt_code_abs} for language.
Both scores are scale-free and require only a single backward pass.

Equation~\eqref{eq:cov_dir_centered} shows that, for any fixed direction $u$, the covariance channel entering $\rho_c(u)$ is the loss-weighted average of $(u^\top\bar x)(u^\top\bar p)$ over samples.
Samples for which this directional product is large across \emph{many} directions will therefore exert a larger influence on typical directional covariances, hence on the operator covariance term via Theorem~\ref{thm:cov_reduction_si}.
Equation~\eqref{eq:dot_as_dir_average} makes this intuition explicit: for a fixed sample, the \emph{expected} directional product over random $u$ is proportional to $\bar x^\top\bar p$, and normalizing by $\|\bar x\|_2\|\bar p\|_2$ yields the direction-agnostic centered CC-Probe $c_{\text{centered}}(x)$.

\noindent
Finally, we emphasize that neither $c_{\text{centered}}(x)$ nor its modality-specific
instantiations are \emph{equal} to $\rho_c(u)$ for a fixed $u$; rather, they are practical
single-sample surrogates for the coupling (covariance) channel that controls the RS bound.

\subsection{Testable predictions}
\label{subsec:nur_regimes}

The NUP motivates a simple idea: \emph{abnormal input--gradient coupling} is a reliable marker of failure modes. We test this idea with the CC-Probe.

\begin{proposition}[Exp.~1 prediction: a ``high-cosine tail'' marks hard/fragile vision samples]
\label{prop:exp1_sep}
As training improves clean accuracy, correctly classified images should show \emph{lower} CC-Probe $c_{\text{img}}$, while misclassified/hard images remain at \emph{higher} $c_{\text{img}}$ (a persistent high-cosine tail). Adversarial errors should concentrate on this high-cosine tail.
\end{proposition}

\begin{proposition}[Exp.~2 prediction: $\pm$FGSM changes CC-Probe in the expected direction]
\label{prop:exp2_causal}
For a fixed vision model, a small gradient-aligned perturbation (+FGSM) should \emph{increase} $c_{\text{img}}$ and reduce accuracy, while a sufficiently small anti-aligned perturbation (--FGSM) should \emph{decrease} $c_{\text{img}}$ and can preserve (or slightly improve) accuracy.
\end{proposition}

\begin{proposition}[Exp.~3--4 prediction: training that suppresses strong $x\!\cdot\!p$ coupling becomes more robust]
\label{prop:exp34_mask}
If vulnerability is driven by a few input components with large coupling scores
(implemented in practice as the channel-wise normalized interaction
$|\tilde x_{c,j}\tilde p_{c,j}|$),
then masking these dominant components during training (ConjMask) should improve robustness to standard gradient attacks (e.g., PGD/APGD-CE) without adversarial training (Exp.~3). If this robustness is loss-dependent, adding a logit stabilizer (LogitReg) should restore robustness under stronger loss-optimized attacks (e.g., APGD-DLR) (Exp.~4).
\end{proposition}

\begin{proposition}[Exp.~5--6 prediction: in LLM prefill, low CC-Probe means under-conditioning and higher hallucination risk]
\label{prop:exp56_llm}
Using only prefill (no decoding), prompts with unusually \emph{low} $c_{\text{prompt}}$ should have higher hallucination risk; therefore $-c_{\text{prompt}}$ should predict hallucination above chance (Exp.~5). Among paraphrased prompts, choosing the one with \emph{higher} $c_{\text{prompt}}$ should more often select the judge-preferred prompt (Exp.~6).
\end{proposition}

\section{Functional-Analytic Foundations and Proof Roadmap}
\label{supp:foundations}

To avoid duplicating the same operator-theoretic material under two different notational schemes, we place the full functional-analytic setup, the Robertson--Schr\"odinger proof, and the drift/variance/covariance bridge derivations in Section~\ref{app:supp_theory}. The remainder of this supplement focuses on representation dependence, the quantum--neural analogy, and detailed experimental protocols.

% =========================================================================
% SECTION S.3: REPRESENTATION (THE EXPANDED SECTION)
% =========================================================================
\section{Representation Dependence and Coordinate Invariance}
\label{supp:representation}

A fundamental question arises regarding the universality of the CC-Probe: \emph{If we change the representation of the input (e.g., from pixels to a latent space), how does the probe value change?} This section uses a differential geometric perspective to clarify the coordinate dependence of the proposed uncertainty relations.

\subsection{Manifolds, Coordinates, and the Physical Analogy}
In physics, a quantum state $|\psi\rangle$ is an abstract vector in Hilbert space, invariant under the choice of basis. However, the \emph{wavefunction} $\psi(x) = \langle x | \psi \rangle$ and the uncertainty product $\Delta x \Delta p$ depend explicitly on the choice of the observable $\hat x$ (the coordinate basis).

Similarly, in deep learning, a sample possesses an abstract semantic content. However, the neural network processes a specific \textbf{representation} $x \in \mathbb R^d$ (pixels, tokens, embeddings). The Neural Uncertainty Principle (NUP) defines the conjugate constraint \emph{relative to the geometry of this representation}.

\subsection{Coordinate Transformation Laws}
Consider a differentiable, invertible change of coordinates $z = \Phi(x)$, where $z$ represents a feature layer or an alternative parameterization. Let $J(x) = \frac{\partial \Phi}{\partial x}$ be the Jacobian matrix.

\paragraph{Transformation of Vectors (Contravariant)}
A small displacement vector $\delta x$ transforms via the Jacobian:
\begin{equation}
    \delta z \approx J(x) \, \delta x.
\end{equation}
Geometrically, $\delta x$ belongs to the tangent space $T_x \mathcal X$.

\paragraph{Transformation of Gradients (Covariant)}
The gradient $p(x) = \nabla_x \mathcal L$ is a one-form (covector) belonging to the cotangent space $T^*_x \mathcal X$. By the chain rule, it transforms via the inverse transpose of the Jacobian:
\begin{equation}
    p_x(x) = J(x)^\top p_z(z).
\end{equation}
Crucially, vectors and covectors transform inversely. This ensures that their duality pairing (the directional derivative) is scalar-invariant locally:
\begin{equation}
    \delta x^\top p_x(x) = \delta x^\top (J^\top p_z) = (J \delta x)^\top p_z = \delta z^\top p_z(z).
\end{equation}

\subsection{Why the CC-Probe varies with Representation}
The CC-Probe $c(x) = |\cos(\bar x, \bar p)|$ involves the \textbf{norms} of the vectors, not just their inner product. The Euclidean norm is not invariant under non-orthogonal transformations.
Let us analyze the cosine in the $z$-space (assuming local linearity for the centered variables $\bar z \approx J \bar x$):
\begin{align}
    \cos(\bar z, \bar p_z) &= \frac{\bar z^\top \bar p_z}{\|\bar z\|_2 \|\bar p_z\|_2} \nonumber \\
    &\approx \frac{(J \bar x)^\top (J^{-\top} \bar p_x)}{\|J \bar x\|_2 \|J^{-\top} \bar p_x\|_2} \nonumber \\
    &= \frac{\bar x^\top \bar p_x}{\sqrt{\bar x^\top J^\top J \bar x} \sqrt{\bar p_x^\top (J J^\top)^{-1} \bar p_x}}.
\end{align}
Here, $G = J^\top J$ acts as the local \textbf{metric tensor}. The "aligned" probe in $z$-space is effectively measuring the alignment in $x$-space, but distorted by the condition number of the Jacobian $J$.

\paragraph*{Interpretation}
This coordinate dependence is not a flaw; it is the \textbf{intended functionality}.
\begin{enumerate}
    \item \textbf{Layer-Specific Diagnostics:} We are interested in whether the \emph{current layer's} geometry allows for easy modification of the loss (high sensitivity) via small perturbations in the \emph{current layer's} coordinates.
    \item \textbf{Slack vs. Saturation:} A representation $z$ where $J$ is highly anisotropic (ill-conditioned) might stretch the input space such that the uncertainty bound is far from saturation (Slack regime), whereas the pixel space $x$ might be near saturation.
    \item \textbf{Conclusion:} When applying the CC-Probe, one must explicitly define the "conjugate pair" of interest. In our experiments, we study the \textbf{Pixel-Gradient} pair for vision (robustness) and the \textbf{Embedding-Gradient} pair for LLMs (hallucination), treating them as distinct physical systems defined by their respective coordinate geometries.
\end{enumerate}

% =========================================================================
% SECTION S.4: TABLE 1 DISCUSSION
% =========================================================================
\section{The Quantum--Neural Analogy}
\label{supp:analogy}

Table~\ref{tab:qm_analogy} provides a detailed side-by-side comparison of the structural isomorphism between the formulation of Quantum Mechanics (QM) and our proposed Neural Uncertainty Principle (NUP). In this section, we elaborate on the non-trivial aspects of this mathematical correspondence and clarify that no physical quantum effects are assumed.

\begin{table}[h]
\centering
\caption{Quantum physics vs.\ neural networks. Hats $\hat{\cdot}$ denote operators; no physical quantum effect is assumed. The structural isomorphism allows the translation of the Robertson--Schr\"odinger inequality to the neural domain.}
\label{tab:qm_analogy}
{%
\scriptsize
\setlength{\tabcolsep}{4pt} % 稍微增加列间距
\renewcommand{\arraystretch}{1.3} % 稍微增加行高以提高可读性
\setcellgapes{2pt}\makegapedcells

\noindent\makebox[\textwidth][c]{%
% 在单栏模式下，建议将宽度设置为 0.75\textwidth 或 0.8\textwidth 以防字体过大
\resizebox{\textwidth}{!}{%
\begin{tabular}{l|p{0.41\textwidth}|p{0.41\textwidth}}
\toprule
\textbf{Role} & \textbf{Quantum physics} & \textbf{Neural networks}\\
\midrule

Model
& \makecell[l]{$\psi,\ \hat H$}
& \makecell[l]{$f_\theta,\ \mathcal L_c(x)$}\\

State
& \makecell[l]{$\psi(x)$}
& \makecell[l]{$\psi_c(x)=A_c(x)e^{i\alpha\mathcal L_c(x)}$}\\

Density
& \makecell[l]{$|\psi(x)|^2$}
& \makecell[l]{$|\psi_c(x)|^2=A_c(x)^2$}\\

Conjugate pair
& \makecell[l]{$(\hat x_u g)(x)=(u^\top x)\,g(x)$\\
$(\hat p_u g)(x)=-i\,\partial_u g(x)$}
& \makecell[l]{$(\hat x_u g)(x)=(u^\top x)\,g(x)$\\
$(\hat p_u g)(x)=-i\,\partial_u g(x)$}\\

Commutator scale
& \makecell[l]{$[\hat x_u,\hat p_u]=i\mathbb I$\\
$\kappa=\tfrac12$}
& \makecell[l]{$[\hat x_u,\hat p_u]=i\mathbb I$ \\
$\kappa=\tfrac12\big|\langle[\hat x_u,\hat p_u]\rangle_c\big|$}\\

Correlation
& \makecell[l]{$\rho(u)=\dfrac{\mathrm{Cov}(\hat x_u,\hat p_u)}{\Delta\hat x(u)\,\Delta\hat p(u)}$}
& \makecell[l]{$\rho_c(u)=\dfrac{\mathrm{Cov}_c(\hat x_u,\hat p_u)}{\Delta\hat x_c(u)\,\Delta\hat p_c(u)}$}\\

Uncertainty relation
& \makecell[l]{$\Delta\hat x(u)\,\Delta\hat p(u)\,\sqrt{1-\rho(u)^2}$\\
$\ge\ \kappa$}
& \makecell[l]{$\Delta\hat x_c(u)\,\Delta\hat p_c(u)\,\sqrt{1-\rho_c(u)^2}$\\
$\ge\ \kappa$}\\

Neural sample probe
& \makecell[l]{(not applicable)}
& \makecell[l]{$\bar x=x-\mu_c,\ \bar p=p(x)-\nu_c$\\
$|\cos(\bar x,\bar p)|=\dfrac{|\bar x^\top\bar p|}{\|\bar x\|_2\,\|\bar p\|_2}$}\\

\bottomrule
\end{tabular}%
}%
}%
} % end local group
\end{table}

\paragraph{Mathematical tool, not physical hypothesis}
We emphasize that the analogy in Table~\ref{tab:qm_analogy} is purely mathematical. We do not assume any physical quantum effect in neural networks; rather, we use the Hilbert-space formalism as a rigorous signal-processing language for the loss manifold.

\paragraph{Engineered phase and sensitivity extraction}
In quantum mechanics, the phase $S(x)$ of a wavefunction $\psi = R e^{iS/\hbar}$ generates the momentum field $\nabla S$. In our framework, we engineer the phase to be $\alpha \mathcal L_c(x)$, so that the operator $\hat p_u=-i\partial_u$ exposes the loss-gradient signal $\nabla \mathcal L_c$ as a real sensitivity channel. This is the key mathematical device behind the quantum--neural analogy.

\paragraph{Conjugate incompatibility}
The NUP implies a fundamental incompatibility between localizing input features and localizing gradient information. A state with very small $\Delta \hat x$ must have large $\Delta \hat p$, while a state with very small $\Delta \hat p$ must be spread out in $\Delta \hat x$. The CC-Probe measures how this trade-off manifests in trained networks and helps distinguish near-bound (saturation) from high-slack regimes.

% ============================================================
% Supplementary Material (Appendix):
% Operator foundations + computable bridges for NUP / CC-Probe
% ============================================================

\section{Operator foundations and computable bridges}
\label{app:supp_theory}

Here we provide a self-contained derivation of the operator-theoretic statements used in the main text, with particular emphasis on \emph{computability}: how operator quantities reduce to loss-weighted statistics of the real input-gradient $p(x)=\nabla_x\mathcal L_c(x)$.
We proceed in three steps:
(i) recall the general Robertson--Schr\"odinger (RS) inequality and its cosine/correlation form,
(ii) specialise to the neural directional pair $(\hat x_u,\hat p_u)$,
and (iii) derive exact bridges (drift / variance / covariance reductions) that motivate empirical proxies such as the CC-Probe.
Throughout, hats $\hat{\cdot}$ denote operators and $\mathbb I$ denotes the identity operator.

\subsection{Functional-analytic setup and standing assumptions}
\label{app:fa_setup}

\paragraph{State space}
Let $\mathcal X\subseteq\mathbb R^d$ denote the continuous input space in which the coordinate $x$ lives (e.g., a bounded subset of $\mathbb R^d$ or all of $\mathbb R^d$).
We work in the complex Hilbert space $\mathcal H:=L^2(\mathcal X)$ with inner product
\[
\langle f,g\rangle := \int_{\mathcal X} f(x)^\ast g(x)\,dx,
\qquad
\|f\|:=\sqrt{\langle f,f\rangle}.
\]

\paragraph{Domains and boundary conditions}
All differential operators are understood on a common dense domain $\mathcal D\subset\mathcal H$ such that integration by parts is valid and boundary terms vanish.
Typical choices include $\mathcal D=C_c^\infty(\mathcal X)$ (compact support), periodic boundary conditions on a torus, or sufficiently fast decay at infinity when $\mathcal X=\mathbb R^d$.
Whenever we invoke self-adjointness (or essential self-adjointness) of $-i\partial_u$, it is with respect to such a domain/self-adjoint extension.

\paragraph{Commutator expectation on bounded domains}
On bounded $\mathcal X$, some self-adjoint extensions of $-i\partial_u$ may introduce boundary terms; in that case the \emph{formal} commutator $[\hat x_u,\hat p_u]=i\mathbb I$ still holds on $\mathcal D$, but the \emph{state expectation} $\langle [\hat x_u,\hat p_u]\rangle$ may deviate from $1$.
Our cosine-form uncertainty relation remains valid with the commutator scale $\kappa:=\tfrac12|\langle[\hat x_u,\hat p_u]\rangle|$; see \S\ref{app:cosine_rs}.

\subsection{Variance of observables and covariance notation}
\label{app:operators}

Let $\hat A$ be a self-adjoint operator on $\mathcal H$ with dense domain containing $\psi$.
For a normalized state $\psi\in\mathcal H$ with $\|\psi\|=1$, define the expectation and centered operator
\[
\langle \hat A\rangle_\psi := \langle \psi,\hat A\psi\rangle,
\qquad
\Delta_\psi \hat A := \hat A - \langle\hat A\rangle_\psi\,\mathbb I,
\]
and the variance/dispersion
\[
(\Delta_\psi \hat A)^2 := \langle \psi,(\Delta_\psi\hat A)^2\psi\rangle
= \|\Delta_\psi \hat A\,\psi\|^2.
\]
For two observables $\hat A,\hat B$, define commutator and anticommutator
\[
[\hat A,\hat B]:=\hat A\hat B-\hat B\hat A,
\qquad
\{\hat A,\hat B\}:=\hat A\hat B+\hat B\hat A,
\]
and the symmetrized (operator) covariance
\begin{equation}
\label{eq:cov_def_general_app}
\mathrm{Cov}_\psi(\hat A,\hat B)
:=\frac12\,\langle \psi,\{\Delta_\psi\hat A,\Delta_\psi\hat B\}\psi\rangle.
\end{equation}

\subsection{Robertson--Schr\"odinger uncertainty inequality}
\label{app:rs_derivation}

\begin{theorem}[Robertson--Schr\"odinger uncertainty inequality]
\label{thm:rs_general_app}
Let $\hat A$ and $\hat B$ be self-adjoint operators on $\mathcal H$ and $\psi\in\mathcal H$ a normalized state with finite dispersions
$0<\Delta_\psi \hat A<\infty$ and $0<\Delta_\psi \hat B<\infty$.
Then
\begin{equation}
\label{eq:rs_general_app}
(\Delta_\psi \hat A)^2 (\Delta_\psi \hat B)^2
\;\ge\;
\frac{1}{4}\,\big|\langle\psi,[\hat A,\hat B]\psi\rangle\big|^2
\;+\;
\mathrm{Cov}_\psi(\hat A,\hat B)^2.
\end{equation}
\end{theorem}

\begin{proof}
Let $\xi:=\Delta_\psi\hat A\,\psi$ and $\eta:=\Delta_\psi\hat B\,\psi$.
Then $\|\xi\|^2=(\Delta_\psi\hat A)^2$ and $\|\eta\|^2=(\Delta_\psi\hat B)^2$.
By Cauchy--Schwarz,
\[
\|\xi\|^2\|\eta\|^2 \ge |\langle \xi,\eta\rangle|^2
= \big|\langle \psi,\Delta_\psi\hat A\,\Delta_\psi\hat B\,\psi\rangle\big|^2.
\]
Decompose $\Delta_\psi\hat A\,\Delta_\psi\hat B$ into symmetric and antisymmetric parts:
\[
\Delta_\psi\hat A\,\Delta_\psi\hat B
=
\frac12[\Delta_\psi\hat A,\Delta_\psi\hat B]
+\frac12\{\Delta_\psi\hat A,\Delta_\psi\hat B\}.
\]
Since constants commute with operators, $[\Delta_\psi\hat A,\Delta_\psi\hat B]=[\hat A,\hat B]$.
Define
\begin{equation}
\begin{aligned}
C:=&\frac12\langle\psi,\{\Delta_\psi\hat A,\Delta_\psi\hat B\}\psi\rangle
=\mathrm{Cov}_\psi(\hat A,\hat B),
\\
iD:=&\frac12\langle\psi,[\hat A,\hat B]\psi\rangle,
\end{aligned}
\end{equation}
where $C\in\mathbb R$ (self-adjoint) and $D\in\mathbb R$ (skew-adjoint gives a purely imaginary expectation).
Then $\langle \xi,\eta\rangle=C+iD$, hence
\[
|\langle \xi,\eta\rangle|^2 = C^2 + D^2
= \mathrm{Cov}_\psi(\hat A,\hat B)^2
+\frac14\big|\langle\psi,[\hat A,\hat B]\psi\rangle\big|^2.
\]
Combining with Cauchy--Schwarz gives \eqref{eq:rs_general_app}.
\end{proof}

By discarding the covariance term in \eqref{eq:rs_general_app} one obtains the familiar Heisenberg-type bound
\begin{equation}
\label{eq:heisenberg_general_app}
(\Delta_\psi \hat A)^2(\Delta_\psi \hat B)^2
\;\ge\;
\frac14\big|\langle\psi,[\hat A,\hat B]\psi\rangle\big|^2.
\end{equation}
In \S\ref{app:nup_specialization_supp} we instantiate this with $(\hat A,\hat B)=(\hat x_u,\hat p_u)$ and $\psi=\psi_c$.

\subsection{Cosine/correlation form}
\label{app:cosine_rs}

Define the operator correlation coefficient (when dispersions are nonzero)
\begin{equation}
\label{eq:rho_general_def_app}
\rho_\psi(\hat A,\hat B)
:=
\frac{\mathrm{Cov}_\psi(\hat A,\hat B)}{\Delta_\psi\hat A\,\Delta_\psi\hat B}
\in[-1,1].
\end{equation}
Substituting $\mathrm{Cov}_\psi(\hat A,\hat B)=\rho_\psi(\hat A,\hat B)\Delta_\psi\hat A\,\Delta_\psi\hat B$
into \eqref{eq:rs_general_app} yields the equivalent cosine form
\begin{equation}
\label{eq:rs_cosine_form_app}
(\Delta_\psi\hat A)^2(\Delta_\psi\hat B)^2\bigl(1-\rho_\psi(\hat A,\hat B)^2\bigr)
\;\ge\;
\frac14\,\big|\langle\psi,[\hat A,\hat B]\psi\rangle\big|^2.
\end{equation}
Geometrically, letting $\xi:=\Delta_\psi\hat A\,\psi$ and $\eta:=\Delta_\psi\hat B\,\psi$,
\[
\rho_\psi(\hat A,\hat B)=\frac{\mathrm{Re}\langle\xi,\eta\rangle}{\|\xi\|\,\|\eta\|},
\]
i.e., the cosine of the angle between $\xi$ and $\eta$ in $\mathcal H$ (taking real part).

\subsection{Specialisation to the neural directional pair}
\label{app:nup_specialization_supp}

\paragraph{Neural loss and input-gradient}
Fix a condition/target index $c$ and a scalar loss $\mathcal L_c(x)$ (see main text).
Define the input-gradient field
\[
p(x):=\nabla_x\mathcal L_c(x)\in\mathbb R^d.
\]
For a unit direction $u\in\mathbb S^{d-1}$ define the projections
\begin{equation}
\label{eq:direction_defs_app}
\begin{aligned}
x_u:=&u^\top x,
\\
p_u(x):=&u^\top p(x)=\partial_u\mathcal L_c(x),
\\
\partial_u:=&u^\top\nabla_x.
\end{aligned}
\end{equation}

\paragraph{Loss-phase state and loss-induced expectation}
Define the loss-phase state
\begin{equation}
\label{eq:loss_phase_state_app}
\begin{aligned}
\psi_c(x):=&A_c(x)\exp\!\big(i\alpha\,\mathcal L_c(x)\big),
\\
A_c(x):=&\frac{|\mathcal L_c(x)|}{\sqrt{\beta_c}},
\\
\beta_c:=&\int_{\mathcal X}\mathcal L_c(x)^2\,dx,
\end{aligned}
\end{equation}
so that $\|\psi_c\|_2=1$ and $|\psi_c(x)|^2=A_c(x)^2$.
For scalar functions $g(x)$ define the loss-induced expectation
\[
\mathbb E_c[g]:=\int_{\mathcal X} g(x)\,|\psi_c(x)|^2\,dx,
\]
and for operators $\hat O$ define $\langle\hat O\rangle_c:=\langle\psi_c,\hat O\psi_c\rangle$.

\noindent\textit{Remark.}
The phase $\exp(i\alpha\mathcal L_c)$ is the simplest choice whose directional derivative exposes the real gradient projection $p_u=\partial_u\mathcal L_c$ through Lemma~\ref{lem:drift_identity}.
Moreover, choosing $A_c\propto|\mathcal L_c|$ makes $|\psi_c|^2\propto \mathcal L_c^2$, emphasizing high-loss (boundary-relevant) regions.
Other monotone loss-weightings and phase choices are possible, but the linear phase keeps the drift coefficient constant and yields the clean covariance reduction in Lemma~\ref{lem:op_cov_reduction_app}.

\paragraph{Directional operators}
Define the canonical directional pair on $\mathcal D$:
\begin{equation}
\label{eq:dir_ops_app}
(\hat x_u g)(x):=(u^\top x)\,g(x),
\qquad
(\hat p_u g)(x):=-i\,\partial_u g(x).
\end{equation}
Under the standing assumptions of \S\ref{app:fa_setup}, the formal commutator holds on $\mathcal D$:
\begin{equation}
\label{eq:commutator_app}
[\hat x_u,\hat p_u]=i\mathbb I.
\end{equation}

Indeed, for any $g\in\mathcal D$,
\begin{equation}
\begin{aligned}
([\hat x_u,\hat p_u]g)(x)
=&(u^\top x)(-i\partial_u g)+i\partial_u\big((u^\top x)g\big)\\
=&i(\partial_u(u^\top x))g\\
=&i g,
\end{aligned}
\end{equation}
hence $[\hat x_u,\hat p_u]=i\mathbb I$ on $\mathcal D$.

\paragraph{Neural dispersions and covariance}
Define centered operators $\Delta\hat x_u:=\hat x_u-\langle\hat x_u\rangle_c\mathbb I$ and
$\Delta\hat p_u:=\hat p_u-\langle\hat p_u\rangle_c\mathbb I$ and the dispersions
\begin{equation}
\label{eq:disp_defs_app}
\begin{aligned}
(\Delta\hat x_c(u))^2 :=& \|\Delta\hat x_u\,\psi_c\|^2
=\mathbb E_c\big[(x_u-\mathbb E_c[x_u])^2\big],
\\
(\Delta\hat p_c(u))^2 :=& \|\Delta\hat p_u\,\psi_c\|^2.
\end{aligned}
\end{equation}
The symmetrized operator covariance is
\begin{equation}
\label{eq:cov_op_app}
\mathrm{Cov}_c(\hat x_u,\hat p_u)
:=\frac12\langle\psi_c,\{\Delta\hat x_u,\Delta\hat p_u\}\psi_c\rangle.
\end{equation}

\paragraph{Neural RS uncertainty relation (operator form)}
Applying Theorem~\ref{thm:rs_general_app} with $(\hat A,\hat B)=(\hat x_u,\hat p_u)$ and $\psi=\psi_c$ gives
\begin{equation}
\label{eq:nur_rs_app}
(\Delta\hat x_c(u))^2(\Delta\hat p_c(u))^2
\;\ge\;
\frac14\big|\langle\psi_c,[\hat x_u,\hat p_u]\psi_c\rangle\big|^2
+\mathrm{Cov}_c(\hat x_u,\hat p_u)^2,
\end{equation}
and the cosine form
\begin{equation}
\label{eq:nur_cosine_app}
\begin{aligned}
(\Delta\hat x_c(u))^2(\Delta\hat p_c(u))^2\bigl(1-\rho_c(u)^2\bigr)
\;\ge\;
\frac14\big|\langle[\hat x_u,\hat p_u]\rangle_c\big|^2,
\\
\rho_c(u):=\frac{\mathrm{Cov}_c(\hat x_u,\hat p_u)}{\Delta\hat x_c(u)\,\Delta\hat p_c(u)}.
\end{aligned}
\end{equation}
If $\langle[\hat x_u,\hat p_u]\rangle_c=1$ then the canonical bound follows immediately.

\subsection{Making the operator statistics computable}
\label{app:bridges}

This subsection derives exact identities that connect $(\Delta\hat p_c(u))^2$ and $\mathrm{Cov}_c(\hat x_u,\hat p_u)$
to loss-weighted statistics of the \emph{real} projected gradient $p_u(x)=\partial_u\mathcal L_c(x)$.
These bridges justify using real-valued alignment observables as proxies.

\subsubsection{Loss-phase drift identity}
\label{app:drift_identity}

\begin{lemma}[Loss-phase drift identity]
\label{lem:drift_identity}
Let $\psi_c(x)=A_c(x)e^{i\alpha\mathcal L_c(x)}$ and $\hat p_u=-i\partial_u$ as in \eqref{eq:dir_ops_app}.
Then
\begin{equation}
\label{eq:pu_on_psi_app}
\hat p_u\psi_c
=-i\partial_u\psi_c
=\Big(\alpha A_c(x)\,p_u(x)-i\,\partial_u A_c(x)\Big)e^{i\alpha\mathcal L_c(x)}.
\end{equation}
\end{lemma}

\begin{proof}
Differentiate $\psi_c=A_c e^{i\alpha\mathcal L_c}$:
\[
\partial_u\psi_c=(\partial_u A_c)e^{i\alpha\mathcal L_c}+A_c\cdot i\alpha(\partial_u\mathcal L_c)e^{i\alpha\mathcal L_c}.
\]
Multiplying by $-i$ gives \eqref{eq:pu_on_psi_app} with $p_u=\partial_u\mathcal L_c$.
\end{proof}

\noindent
\textit{Interpretation.}
The choice $\hat p_u=-i\partial_u$ is convenient because it exposes a \emph{real} drift term $\alpha A_c p_u$ plus an orthogonal amplitude term $-i\,\partial_uA_c$.

\subsubsection{Variance decomposition for $\hat p_u$}
\label{app:variance_decomp}

\begin{proposition}[Variance decomposition and a conservative lower bound]
\label{prop:variance_decomp_app}
Under the standing assumptions of \S\ref{app:fa_setup}, the following identities hold:
\begin{align}
\label{eq:p_norm_app}
\|\hat p_u\psi_c\|^2
&=
\alpha^2\,\mathbb E_c[p_u(x)^2]+\mathbb E_c\!\big[(\partial_u\log A_c(x))^2\big],
\\
\label{eq:centered_p_var_app}
(\Delta\hat p_c(u))^2
&=
\alpha^2\,\mathrm{Var}_c(p_u)
+\mathrm{Var}_c\!\big(\partial_u\log A_c\big)
\;\ge\;
\alpha^2\,\mathrm{Var}_c(p_u),
\end{align}
where $\mathrm{Var}_c(g):=\mathbb E_c[(g-\mathbb E_c[g])^2]$ and $\mathbb E_c[\partial_u\log A_c]=0$ under vanishing boundary terms.
\end{proposition}

\begin{proof}
From Lemma~\ref{lem:drift_identity}, with $|\psi_c|^2=A_c^2$,
\begin{equation}
\begin{aligned}
\|\hat p_u\psi_c\|^2
=&\int_{\mathcal X}\Big(\alpha^2A_c^2p_u^2+(\partial_uA_c)^2\Big)\,dx\\
=&\alpha^2\mathbb E_c[p_u^2]+\mathbb E_c[(\partial_u\log A_c)^2].
\end{aligned}
\end{equation}
Centering yields
\begin{equation}
\begin{aligned}
(\Delta\hat p_c(u))^2=&\|\hat p_u\psi_c\|^2-|\langle\hat p_u\rangle_c|^2\\
=&\alpha^2\mathrm{Var}_c(p_u)+\mathrm{Var}_c(\partial_u\log A_c),
\end{aligned}
\end{equation}
where the second term is nonnegative.
Finally, under vanishing boundary terms,
\begin{equation}
\begin{aligned}
\mathbb E_c[\partial_u\log A_c]
=&\int A_c^2\,\partial_u\log A_c\,dx\\
=&\int A_c\,\partial_uA_c\,dx\\
=&\frac12\int \partial_u(A_c^2)\,dx\\
=&0.
\end{aligned}
\end{equation}
\end{proof}

\subsubsection{Operator-to-scalar covariance reduction}
\label{app:cov_reduction}

For scalar functions $g,h$, define the loss-weighted scalar covariance
\begin{equation}
\label{eq:sc_cov_def_app}
\mathrm{Cov}^{\mathrm{sc}}_c(g,h)
:=\mathbb E_c\!\big[(g-\mathbb E_c[g])(h-\mathbb E_c[h])\big],
\qquad
\mathrm{Var}_c(g):=\mathrm{Cov}^{\mathrm{sc}}_c(g,g).
\end{equation}

\begin{lemma}[Operator covariance reduces to scalar covariance]
\label{lem:op_cov_reduction_app}
Let $\psi_c(x)=A_c(x)e^{i\alpha\mathcal L_c(x)}$ with real $A_c(x)\ge0$, and let $\hat x_u,\hat p_u$ be defined by \eqref{eq:dir_ops_app} on a common dense domain $\mathcal D$ such that integration by parts is valid.
Then
\begin{equation}
\label{eq:op_cov_equals_scalar_cov_app}
\mathrm{Cov}_c(\hat x_u,\hat p_u)
=\alpha\,\mathrm{Cov}^{\mathrm{sc}}_c(x_u,p_u).
\end{equation}
\end{lemma}

\begin{proof}
Let $\Delta x_u:=x_u-\mathbb E_c[x_u]$ and recall $\Delta\hat x_u=\hat x_u-\langle\hat x_u\rangle_c\mathbb I$, $\Delta\hat p_u=\hat p_u-\langle\hat p_u\rangle_c\mathbb I$.
Using Lemma~\ref{lem:drift_identity},
\[
\hat p_u\psi_c=\Big(\alpha A_c\,p_u-i\,\partial_uA_c\Big)e^{i\alpha\mathcal L_c}.
\]
A direct substitution gives
\[
\langle \Delta\hat x_u\psi_c,\Delta\hat p_u\psi_c\rangle
=
\alpha\,\mathbb E_c[\Delta x_u\,p_u]
-i\,\mathbb E_c[\Delta x_u\,\partial_u\log A_c].
\]
By definition \eqref{eq:cov_op_app}, the symmetrized covariance is the real part:
\begin{equation}
\begin{aligned}
\mathrm{Cov}_c(\hat x_u,\hat p_u)
=&\mathrm{Re}\langle \Delta\hat x_u\psi_c,\Delta\hat p_u\psi_c\rangle\\
=&\alpha\,\mathbb E_c[\Delta x_u\,p_u]\\
=&\alpha\,\mathrm{Cov}^{\mathrm{sc}}_c(x_u,p_u).
\end{aligned}
\end{equation}
\end{proof}

\subsubsection{Attenuation bound for the correlation factor}
\label{app:attenuation}

Define the loss-weighted scalar correlation (when denominators are nonzero)
\[
\mathrm{Corr}^{\mathrm{sc}}_c(g,h)
:=
\frac{\mathrm{Cov}^{\mathrm{sc}}_c(g,h)}{\sqrt{\mathrm{Var}_c(g)\,\mathrm{Var}_c(h)}}.
\]

\begin{corollary}[Attenuation bound]
\label{cor:attenuation_app}
Combining Lemma~\ref{lem:op_cov_reduction_app} with Proposition~\ref{prop:variance_decomp_app} yields
\begin{equation}
\label{eq:rho_attenuation_app}
\begin{aligned}
\rho_c(u)
=&
\frac{\mathrm{Corr}^{\mathrm{sc}}_c(x_u,p_u)}
{\sqrt{1+\mathrm{Var}_c(\partial_u\log A_c)/(\alpha^2\mathrm{Var}_c(p_u))}},
\\
|\rho_c(u)|\le &\big|\mathrm{Corr}^{\mathrm{sc}}_c(x_u,p_u)\big|.
\end{aligned}
\end{equation}
\end{corollary}

\noindent
\textit{Remark.}
The phase choice in $\psi_c=A_c e^{i\alpha\mathcal L_c}$ does \emph{not} make $\hat p_u=-i\partial_u$ self-adjoint.
Self-adjointness is ensured by the domain/boundary assumptions (\S\ref{app:fa_setup}).
The role of the phase is instead computational: it ensures $\hat p_u\psi_c$ decomposes into a real drift term proportional to $p_u(x)$ and an orthogonal amplitude term, enabling the exact reductions above.

\subsection{From operator correlation to a per-sample CC-Probe}
\label{app:cc_probe_link}

The cosine factor in \eqref{eq:nur_cosine_app} depends on $\rho_c(u)$, hence on $\mathrm{Cov}_c(\hat x_u,\hat p_u)$.
By Lemma~\ref{lem:op_cov_reduction_app}, this operator covariance is exactly $\alpha$ times a loss-weighted scalar covariance, suggesting a computable alignment proxy.

\paragraph{Centered vectors and random-direction identity}
Define loss-weighted means (componentwise)
\[
\mu_c:=\mathbb E_c[x]\in\mathbb R^d,
\qquad
\nu_c:=\mathbb E_c[p(x)]\in\mathbb R^d,
\].

Let $u\sim\mathrm{Unif}(\mathbb S^{d-1})$ and define $X_u:=u^\top x$, $P_u:=u^\top p$.
Using isotropy $\mathbb E_u[uu^\top]=\tfrac1d\mathbb I$,
\begin{equation}
\label{eq:dir_avg_identity_app}
\begin{aligned}
\mathbb E_u[X_uP_u]=&\frac1d\,x^\top p,
\\
\mathbb E_u[X_u^2]=&\frac1d\|x\|_2^2,
\\
\mathbb E_u[P_u^2]=&\frac1d\|p\|_2^2.
\end{aligned}
\end{equation}
Since $\mathbb E_u[u]=0$, we have $\mathbb E_u[X_u]=\mathbb E_u[P_u]=0$, hence
\begin{equation}
\label{eq:cos_as_dir_corr_app}
\begin{aligned}
\mathrm{Corr}_u(X_u,P_u)
=&
\frac{\mathbb E_u[X_uP_u]}{\sqrt{\mathbb E_u[X_u^2]\mathbb E_u[P_u^2]}}\\
=&
\frac{ x^\top p}{\| x\|_2\,\| p\|_2},\\
\big|\cos( x,p)\big|=&\big|\mathrm{Corr}_u(X_u,P_u)\big|.
\end{aligned}
\end{equation}

\paragraph{Interpretation as a per-sample probe}
Equation \eqref{eq:cos_as_dir_corr_app} shows that $|\cos(x,p)|$ is the magnitude of the random-direction correlation between projected coordinate and projected sensitivity for that sample.
Aggregated under the loss-induced expectation $\mathbb E_c[\cdot]$, these centered couplings control $\mathrm{Cov}^{\mathrm{sc}}_c(x_u,p_u)$, and hence (via \eqref{eq:op_cov_equals_scalar_cov_app}) the operator covariance term in $\rho_c(u)$.
This motivates $|\cos(x,p)|$ as a per-sample CC-Probe.

\section{Detailed Experimental Protocols}
\label{app:protocal_detail}

We provide a comprehensive summary of the core experimental protocols for Exp.~1 through Exp.~6 in Table~\ref{tab:all_exp_protocols}, covering dataset versions, specific model architecture adaptations, optimizer choices, and key training hyperparameters.

\begin{table*}[t!]
\centering
\caption{\textbf{Experimental Protocols (Exp.~1--Exp.~6).} This table summarizes the protocols across diagnostic (Exp.~1--2), mitigation (Exp.~3--4), and LLM application (Exp.~5--6) experiments. Standard Random Crop and Horizontal Flip are used in vision training unless otherwise specified. \(k\)=kernel size, \(s\)=stride; ``Stem'' denotes the first convolutional layer. The \textbf{LR / WD} column reports learning rate and weight decay. Unless otherwise noted, experiments follow the released-script seeds (Exp.~2 uses the parser default seed \(=0\); Exps.~1,3--6 use seed \(=42\)). Exp.~3 and Exp.~4 are summarized to match the released \texttt{mask\_train.py} implementation; launcher artifacts should be interpreted with care.}
\label{tab:all_exp_protocols}
\scriptsize
\setlength{\tabcolsep}{2.6pt}
\renewcommand{\arraystretch}{1.28}
\resizebox{\textwidth}{!}{%
\begin{tabular}{l|l|l|p{0.22\textwidth}|l|c|c|p{0.23\textwidth}}
\toprule
\textbf{Exp.} & \textbf{Task} & \textbf{Dataset} & \textbf{Architecture \& Adaptations} & \textbf{Optimizer} & \textbf{\makecell{Epochs / \\ Batch Size}} & \textbf{\makecell{LR / \\ WD}} & \textbf{Key Configuration} \\
\midrule

\multirow{2}{*}{\textbf{1}} &
\multirow{2}{*}{\shortstack[l]{\textbf{Diagnostic:}\\Correlation\\Dynamics}} &
\makecell[l]{CIFAR-10 \\ ($32{\times}32$)} &
\textbf{ResNet-18, DenseNet-121:} Stem \(7{\times}7,s{=}2 \rightarrow 3{\times}3,s{=}1\). \newline
\textbf{EffNet-B0:} stem stride \(2 \rightarrow 1\), first MBConv dw-stride \(\rightarrow 1\). \newline
\textbf{ViT-Tiny:} patch size \(16{\times}16 \rightarrow 4{\times}4\). &
AdamW + CosineAnnealingLR &
\makecell{200 \\ (300 for EffNet-B0) \\ 512} &
\makecell{\(1 \times 10^{-3}\) \\ \(1 \times 10^{-4}\)} &
Tracks \(\bar{c}_{\text{img}}\) every 10 epochs on held-out data. Compares clean-correct vs.\ clean-incorrect samples using gradients w.r.t.\ normalized inputs. \\
\cmidrule{3-8}
& &
\makecell[l]{Tiny-ImageNet-200 \\ ($64{\times}64$)} &
\textbf{ResNet-50:} stem \(7{\times}7 \rightarrow 3{\times}3\), remove MaxPool. \newline
\textbf{DenseNet-121:} stem \(7{\times}7 \rightarrow 3{\times}3\), remove pool0. \newline
\textbf{EffNet-B4, Swin-Tiny:} ImageNet-pretrained; the released Exp.~1 notebooks instantiate the active dataloaders at native \(64{\times}64\) resolution (not an active \(224{\times}224\) pipeline). &
AdamW + CosineAnnealingLR &
\makecell{200 \\ 256} &
\makecell{\(1 \times 10^{-3}\) \\ \makecell[l]{\(1\times10^{-4}\) (most)\\ \(5\times10^{-2}\) (Swin)}} &
Validates cross-architecture and cross-dataset consistency. Same probe schedule (\(K{=}10\)) on the official validation split. \\
\midrule

\multirow{2}{*}{\textbf{2}} &
\multirow{2}{*}{\shortstack[l]{\textbf{Diagnostic:}\\Signed-FGSM\\Intervention (IFA)}} &
\makecell[l]{CIFAR-10 \\ ($32{\times}32$)} &
\textbf{ResNet-18:} modified stem, FC \(\rightarrow 10\). \newline
\textbf{ViT-Tiny:} \texttt{vit\_tiny\_patch4\_32}. \newline
\textbf{Vim-Tiny:} \texttt{vim\_tiny\_patch4\_32}. &
AdamW + CosineAnnealingLR &
\makecell{201 \\ 64} &
\makecell{\(1 \times 10^{-3}\) \\ \(1 \times 10^{-4}\)} &
FGSM attack with \(\epsilon_{\text{train}}=8/255\); final runs use no mixup. Evaluation sweeps signed FGSM magnitudes and reports accuracy plus cosine statistics by clean correctness. \\
\cmidrule{3-8}
& &
\makecell[l]{ImageNet-100 \\ ($224{\times}224$)} &
\textbf{ResNet-18:} modified stem, FC \(\rightarrow 100\). \newline
\textbf{ViT-Tiny:} \texttt{vit\_tiny\_patch16\_224}. \newline
\textbf{Vim-Tiny:} \textbf{Vim-Tiny (patch16, 224)}. &
AdamW + CosineAnnealingLR &
\makecell{201 \\ 128} &
\makecell{\(5 \times 10^{-4}\) \\ \(1 \times 10^{-4}\)} &
Same FGSM attack setting as CIFAR-10. Validation-time signed-FGSM sweeps evaluate intervention effects across perturbation directions and magnitudes. \\
\midrule

\textbf{3} &
\shortstack[l]{\textbf{Robustness:}\\Mask-Only\\Training} &
\makecell[l]{CIFAR-10 \\ ($32{\times}32$)} &
\textbf{ResNet-18:} modified stem, FC \(\rightarrow 10\). \newline
\textbf{ViT-Tiny:} \texttt{vit\_tiny\_patch4\_32}. \newline
\textbf{EfficientNet-B0:} stem stride \(2 \rightarrow 1\), first MBConv dw-stride \(\rightarrow 1\). &
AdamW + warmup cosine &
\makecell{201 \\ 64} &
\makecell[l]{\textbf{ResNet-18 / EffNet-B0:} \(1 \times 10^{-3}\) / \(1 \times 10^{-4}\) \\ \textbf{ViT-Tiny:} \(5 \times 10^{-4}\) / \(5 \times 10^{-2}\)} &
Initialized from clean checkpoints. Uses \texttt{mask\_pelnaty\_ablation} with \texttt{--no\_logits\_loss} and \texttt{--no\_random\_loss}, i.e., \(\mathcal{L}=\mathcal{L}_{\text{mask}}\) only. The released run commands use \texttt{ratio\_base}=0.25 for ResNet-18 / EfficientNet-B0 and \texttt{ratio\_base}=0.85 for ViT-Tiny. \\
\midrule

\textbf{4} &
\shortstack[l]{\textbf{Robustness:}\\Mask + LogitReg} &
\makecell[l]{CIFAR-10 \\ ($32{\times}32$)} &
Same architectures as Exp.~3: \textbf{ViT-Tiny}, \textbf{ResNet-18}, and \textbf{EfficientNet-B0}. &
AdamW + warmup cosine &
\makecell{201 \\ 64} &
\makecell[l]{\textbf{ResNet-18 / EffNet-B0:} \(1 \times 10^{-3}\) / \(1 \times 10^{-4}\) \\ \textbf{ViT-Tiny:} \(5 \times 10^{-4}\) / \(5 \times 10^{-2}\)} &
Initialized from clean checkpoints. Uses \texttt{mask\_pelnaty\_ablation} with both \texttt{logits\_loss} and \texttt{random\_loss}. All runs share the same masking operator as Exp.~3. Use the centered-logit variance penalty on clean and masked logits, together with KL consistency to a per-sample mean-baseline auxiliary view. The consistency branch is enabled after epoch \(5\). The released run commands use \texttt{ratio\_base}=0.25 for ResNet-18 / EfficientNet-B0 and \texttt{ratio\_base}=0.85 for ViT-Tiny. \\
\midrule

\textbf{5} &
\shortstack[l]{\textbf{LLM Risk:}\\Hallucination\\Detection} &
\makecell[l]{Benchmark-500 \\ (Math/Reasoning)} &
\textbf{Model:} DeepSeek-Coder-7B-Instruct-v1.5. \newline
\textbf{Judge:} 5-LLM panel (Claude, DeepSeek, Gemini, GPT, Grok). &
N/A & N/A & N/A &
Decoding-free prefill probe based on prompt embedding--gradient cosine. Evaluated with AUROC against hallucination labels; compared with entropy, NLL, and margin baselines. \\
\midrule

\textbf{6} &
\shortstack[l]{\textbf{LLM App:}\\Prompt\\Selection} &
\makecell[l]{Perturbation-100 \\ (5 Paraphrases)} &
\textbf{Model:} DeepSeek-Coder-7B-Instruct-v1.5. \newline
\textbf{Data:} 100 problems, each with 5 semantic variants. &
N/A & N/A & N/A &
Evaluates four prefill-only risk scores (Risk-Cos, Risk-Margin, Risk-Entropy, and Risk-Loss) as independent selection rules before decoding. Reports Top-1 hit rate, mean regret, Best-vs-Rest AUROC, and Spearman rank correlation. \\
\bottomrule
\end{tabular}
}
\end{table*}

\subsection{Exp.~1: CC-Probe Separates Correct and Incorrect Samples During Training}
\label{app:exp1_protocol}

\paragraph{Objective and hypothesis}
Exp.~1 quantifies how the conjugate correlation
\(
\bar c_\text{img}
\)
evolves during standard training, and whether it separates correctly classified vs.\ misclassified samples.
Our working hypothesis is: as training improves generalization, correctly classified samples concentrate at low coupling
(\(
\bar c_\text{img}
\to 0\)), while misclassified samples exhibit a persistent high-correlation tail.

\paragraph{Datasets and evaluation split}
We use CIFAR-10 (\(32\times 32\)) with the standard train/test split.
We use Tiny-ImageNet-200 (\(64\times 64\), 200 classes) with the official training set and the official validation set.
For Tiny-ImageNet-200, validation labels are constructed by parsing \texttt{val\_annotations.txt} to build deterministic
\((\text{image}, y)\) pairs.

\paragraph{Input preprocessing and augmentation}
The probe uses the same tensor \(x\) that is fed to the model. In particular, \(\nabla_x\mathcal{L}\) is taken
\emph{w.r.t. the normalized input tensor} (after mean/std normalization).
\begin{itemize}[leftmargin=*]
    \item \textbf{CIFAR-10:} training uses RandomCrop(32, padding=4) + RandomHorizontalFlip; testing uses only normalization.
    \item \textbf{Tiny-ImageNet-200:} in the released Exp.~1 notebooks, the active dataloaders are instantiated at native \(64\times64\) resolution and then normalized. ResNet-50 and DenseNet-121 use native-resolution dataloaders throughout.
    \item \textbf{Implementation note for EfficientNet-B4 / Swin-Tiny:} the released notebooks also contain a later alternative \(224\times224\) transform block for experimentation, but the datasets/loaders used by the reported runs are created earlier and are not rebuilt afterward. Therefore, the reported Exp.~1 results for EfficientNet-B4 and Swin-Tiny use native \(64\times64\) dataloader inputs, while still initializing the models from ImageNet-pretrained weights.
\end{itemize}

\paragraph{Architectures and adaptations}
We evaluate representative CNN/Transformer backbones.
\begin{itemize}[leftmargin=*]
    \item \textbf{CIFAR-10:}
    ResNet-18: stem conv \(7{\times}7, s{=}2 \to 3{\times}3, s{=}1\).
    DenseNet-121: \texttt{features.conv0} \(7{\times}7, s{=}2 \to 3{\times}3, s{=}1\).
    EfficientNet-B0: \texttt{conv\_stem} stride \(2\to 1\) and first MBConv depthwise stride \(2\to 1\).
    ViT-Tiny: patch size \(16{\times}16 \to 4{\times}4\) with \texttt{img\_size}=32.
    \item \textbf{Tiny-ImageNet-200:}
    ResNet-50: stem \(7{\times}7 \to 3{\times}3\) and remove maxpool.
    DenseNet-121: stem \(7{\times}7 \to 3{\times}3\) and remove \texttt{pool0}.
    EfficientNet-B4 and Swin-Tiny are initialized with ImageNet-pretrained weights; other backbones are trained from scratch.
\end{itemize}

\paragraph{Training hyperparameters and reproducibility}
All experiments use AdamW and a cosine annealing learning-rate schedule.
We fix random seed = 42 and enable deterministic CuDNN for reproducibility.
\begin{itemize}[leftmargin=*]
    \item \textbf{CIFAR-10:} epochs = 200 (EffNet-B0: 300), batch size = 512, learning rate \(=10^{-3}\),
    weight decay \(=10^{-4}\).
    \item \textbf{Tiny-ImageNet-200:} epochs = 200, batch size = 256, learning rate \(=10^{-3}\).
    Weight decay \(=10^{-4}\) for ResNet-50 / DenseNet-121 / EfficientNet-B4, and \(=5\times 10^{-2}\) for Swin-Tiny.
\end{itemize}

\paragraph{Probe definition (per-sample absolute cosine)}
Given a held-out mini-batch \((x,y)\), we compute the loss \(\mathcal{L}(f_\theta(x),y)\) and obtain the input gradient
\(g=\nabla_x\mathcal{L}\).
Flattening each sample to vectors in \(\mathbb{R}^d\), we compute:
\begin{equation}
\label{eq:exp1_probe_def}
c_i
=
\left|
\frac{\langle x_i, g_i\rangle}{\|x_i\|_2\|g_i\|_2 + \varepsilon}
\right|,
\qquad
\varepsilon=10^{-8}.
\end{equation}

\paragraph{Evaluation schedule and reported statistics}
We run the probe every \(K=10\) epochs (epochs 10, 20, \dots) on the held-out set.
At each checkpoint, we report:
(i) Top-1 accuracy,
(ii) \(\bar c_\text{img}=\mathbb{E}[c_i]\) for correctly classified samples,
(iii) \(\bar c_\text{img}\) for incorrectly classified samples.
These curves visualize how the ``stress tail'' (high \(c\)) concentrates on misclassified samples as training proceeds.

\begin{algorithm}[t]
\caption{Training-Time Conjugate Correlation Probe (Exp.~1)}
\label{alg:exp1_probe}
\small
\begin{algorithmic}[1]
\REQUIRE Model \(f_\theta\), train loader \(\mathcal{D}_{\mathrm{train}}\), held-out loader \(\mathcal{D}_{\mathrm{hold}}\),
AdamW optimizer, cosine scheduler, probe interval \(K=10\), \(\varepsilon=10^{-8}\).
\FOR{epoch \(=1\) to \(E\)}
    \STATE \textbf{Train:} one epoch on \(\mathcal{D}_{\mathrm{train}}\) using cross-entropy loss.
    \STATE Step cosine scheduler.
    \IF{epoch mod \(K = 0\)}
        \STATE Set \(f_\theta\) to eval; initialize sums/counters for corr/incorr.
        \FOR{mini-batch \((x,y)\) in \(\mathcal{D}_{\mathrm{hold}}\)}
            \STATE \(\texttt{x.requires\_grad}\leftarrow\texttt{True}\) (clone to avoid side-effects).
            \STATE Forward: \(z=f_\theta(x)\), \(\hat{y}=\arg\max(z)\).
            \STATE Loss: \(\ell=\mathrm{CE}(z,y)\); backprop to get \(g=\nabla_x\ell\).
            \STATE Flatten: \(x\leftarrow\mathrm{view}(x,B,-1)\), \(g\leftarrow\mathrm{view}(g,B,-1)\).
            \STATE Normalize: \(\tilde{x}=x/(\|x\|_2+\varepsilon)\), \(\tilde{g}=g/(\|g\|_2+\varepsilon)\).
            \STATE Compute \(c = \big|\sum_j \tilde{x}_j\odot\tilde{g}_j\big|\) (per sample).
            \STATE If \(\hat{y}=y\), accumulate into \(S_{\mathrm{corr}}\); else into \(S_{\mathrm{incorr}}\).
        \ENDFOR
        \STATE Output Top-1, \(\bar c_\text{img-corr}\), \(\bar c_\text{img-incorr}\).
    \ENDIF
\ENDFOR
\end{algorithmic}
\end{algorithm}

\paragraph{Analysis purpose}
Exp.~1 provides an empirical basis for using \(\bar c_\text{img}\) as a reliability observable:
(i) it confirms that high conjugate coupling concentrates on failure states (misclassification),
(ii) it motivates later causal manipulation (Exp.~2) and mitigation mechanisms (Exp.~3--4) by treating high \(\bar c_\text{img}\)
as a measurable ``boundary stress'' marker.

\subsection{Protocol for Exp.~2: Stepping Along/Against the Gradient Moves the CC-Probe as Predicted}
\label{app:exp2_details_revised}

\paragraph{Goal (causal test of NUP)}
Exp.~2 performs a controlled, inference-only intervention to test the directional prediction of the NUP:
moving an input \emph{toward} the decision boundary (loss ascent) should increase conjugate coupling and degrade reliability,
while moving \emph{away} (loss descent) should relieve coupling and preserve/improve reliability.
We quantify this by measuring both accuracy and the conjugate correlation observable on perturbed inputs.

\paragraph{Models and checkpoints}
We evaluate the dedicated Exp.~2 checkpoints released with the \texttt{exp2/Train} and \texttt{exp2/Test} code. The released training code contains a one-step FGSM-augmentation branch with \(\epsilon_{\text{train}}=8/255\), but the evaluation notebooks load the \texttt{withATmix00} checkpoints whose training configuration uses \texttt{at\_mixprob}=0. In the released \texttt{train\_one\_epoch\_with\_AT} implementation, adversarial samples are injected only when the attack branch is active and a Bernoulli draw satisfies the mixing probability; therefore, for the reported \texttt{withATmix00} checkpoints, no adversarial samples are actually mixed into training. Unless otherwise stated, we report results from the final checkpoint of each model. Accordingly, Exp.~2 should be understood as an inference-time signed-FGSM intervention on effectively clean-trained checkpoints, rather than as an adversarial-training study. Different architectures share the same signed-FGSM evaluation pipeline; only the model constructor and checkpoint path differ.

\paragraph{Datasets and evaluation split}
We run Exp.~2 on the standard test/validation split of CIFAR-10 ($32\times32$) and ImageNet-100 ($224\times224$).
All perturbations are applied to the \emph{actual input tensor} received by the model at inference time (i.e., the output of the dataloader).
We use a single $L_\infty$ step and ensure the perturbed input remains in the valid input range via clipping.

\paragraph{Bidirectional FGSM intervention}
For each clean input $x$ with label $y$, define the one-step FGSM direction
\[
d(x) = \mathrm{sign}\!\left(\nabla_x \mathcal{L}(f_\theta(x), y)\right).
\]
We generate a perturbed input by
\begin{equation}
\label{eq:exp2_bifgsm}
x_{\epsilon,\eta} = \mathrm{clip}\big(x + \eta \cdot \epsilon \cdot d(x)\big),\qquad \eta\in\{+1,-1\}.
\end{equation}
Here $\eta=+1$ is the standard adversarial step (loss ascent), while $\eta=-1$ is the anti-aligned ``relief'' step (loss descent).

\paragraph{Perturbation grids}
We evaluate a grid of $\epsilon$ values:
\begin{itemize}[nosep, leftmargin=*]
    \item \textbf{CIFAR-10 (+FGSM):} $\epsilon \in \{0.1,0.2,0.3,0.4,0.5\}/255$.
    \item \textbf{CIFAR-10 (-FGSM):} $\epsilon \in \{0.01,0.1,0.2,0.4,1.0\}/255$ (implemented as negative \texttt{eps}).
    \item \textbf{ImageNet-100 (+FGSM):} $\epsilon \in \{0.01,0.1,0.2,0.4,1.0\}/255$.
    \item \textbf{ImageNet-100 (-FGSM):} $\epsilon \in \{0.01,0.1,0.2,0.4,1.0\}/255$ (implemented as negative \texttt{eps}).
\end{itemize}
The two directional grids are not strictly symmetric: CIFAR-10 includes an additional larger relief step (\(1.0/255\)) and ImageNet-100 uses the same magnitude grid for both directions. Accordingly, we interpret Exp.~2 primarily as a directional probe, rather than as a perfectly paired equal-\(\epsilon\) comparison at every step size.

\paragraph{Probe signal on perturbed inputs}
For each perturbed input $x_{\epsilon,\eta}$, we \emph{recompute} the gradient at the perturbed point:
\[
p_{\epsilon,\eta} := \nabla_{x_{\epsilon,\eta}} \mathcal{L}(f_\theta(x_{\epsilon,\eta}), y).
\]
We flatten $x_{\epsilon,\eta}$ and $p_{\epsilon,\eta}$ into vectors and compute the cosine similarity
\begin{equation}
\label{eq:exp2_cos}
c_{\epsilon,\eta}(x) = \left|\cos\!\left(x_{\epsilon,\eta},\,p_{\epsilon,\eta}\right)\right|
= \frac{\left|x_{\epsilon,\eta}^\top p_{\epsilon,\eta}\right|}{\|x_{\epsilon,\eta}\|_2\,\|p_{\epsilon,\eta}\|_2}.
\end{equation}

\paragraph{Evaluation metrics}
For each $(\epsilon,\eta)$, we report:
\begin{itemize}[nosep, leftmargin=*]
    \item \textbf{Clean Top-1 accuracy} $\mathrm{Acc}_{\mathrm{clean}}$: model predictions on $x$.
    \item \textbf{FGSM Top-1 accuracy} $\mathrm{Acc}_{\mathrm{FGSM}}(\epsilon,\eta)$: predictions on $x_{\epsilon,\eta}$.
    \item \textbf{Expected coupling} $\bar c_{\epsilon,\eta}=\mathbb{E}[c_{\epsilon,\eta}(x)]$ over the evaluation set.
\end{itemize}

\begin{algorithm}[t]
\caption{Bidirectional FGSM Evaluation with Conjugate Correlation Probe (Exp.~2)}
\label{alg:exp2_bifgsm}
\small
\begin{algorithmic}[1]
\REQUIRE Frozen model $f_\theta$, loss $\mathcal{L}$, eval set $\mathcal{D}$, perturbation grid $\mathcal{E}$, directions $\eta\in\{+1,-1\}$
\FOR{each $\eta \in \{+1,-1\}$}
  \FOR{each $\epsilon \in \mathcal{E}$}
    \STATE Initialize accumulators for $\mathrm{Acc}_{\mathrm{clean}}$, $\mathrm{Acc}_{\mathrm{FGSM}}(\epsilon,\eta)$, and $c_{\epsilon,\eta}$ (overall and conditional)
    \FOR{each minibatch $(x,y)$ in $\mathcal{D}$}
      \STATE Compute clean logits $z=f_\theta(x)$ and clean prediction $\hat y=\arg\max z$
      \STATE Mark clean-correct mask: $m = [\hat y==y]$
      \STATE Generate perturbed input $x_{\epsilon,\eta}$ via FGSM (implemented by \texttt{eps} with sign $\eta$) and clip to valid range
      \STATE Recompute gradient at perturbed input: $p_{\epsilon,\eta}=\nabla_{x_{\epsilon,\eta}}\mathcal{L}(f_\theta(x_{\epsilon,\eta}),y)$
      \STATE Compute $c_{\epsilon,\eta}(x)$ by Eq.~\eqref{eq:exp2_cos} (flatten then cosine)
      \STATE Update clean accuracy, FGSM accuracy, and conditional coupling means using mask $m$
    \ENDFOR
    \STATE Report $(\mathrm{Acc}_{\mathrm{clean}}, \mathrm{Acc}_{\mathrm{FGSM}}(\epsilon,\eta), \bar c_{\epsilon,\eta}=\mathbb{E}[c_{\epsilon,\eta}])$
  \ENDFOR
\ENDFOR
\end{algorithmic}
\end{algorithm}

\paragraph{Notes on implementation}
For the reported checkpoints, the adversarial-mixing probability is \texttt{at\_mixprob}=0, so the retained FGSM branch in the training scripts is inactive and the reported models are effectively clean-trained. Evaluation starts from the original dataloader inputs before applying the signed FGSM perturbation above. We use the same batch sizes as in the evaluation notebooks: CIFAR-10 uses 64 for all three models, while ImageNet-100 uses 128 for all three models (ResNet-18, ViT-Tiny, and Vim-Tiny). The released ImageNet-100 training notebooks use learning rate \(5\times 10^{-4}\) and weight decay \(1\times 10^{-4}\); the CIFAR-10 training notebooks use \(1\times 10^{-3}\) and \(1\times 10^{-4}\), respectively. Each \((\epsilon,\eta)\) requires a single forward-backward pass per batch.

\subsection{Protocol for Exp.~3: ConjMask yields resistance to gradient-based attacks}
\label{app:exp3_details}

\paragraph{Goal and training setup}
Exp.~3 evaluates whether \emph{conjugation-guided masking} alone can improve robustness to gradient-based attacks without using adversarial training. In the actual implementation, the reported results correspond to fine-tuning a separately trained clean CIFAR-10 checkpoint with \texttt{--exp mask\_pelnaty\_ablation --no\_logits\_loss --no\_random\_loss}, so that the training objective reduces to the masking branch only. The initialization checkpoint is \emph{not} reused from Exp.~1; instead, it is produced by the released launcher notebooks through a separate clean stage (\texttt{--exp clean}, 201 epochs, batch size 256) and then passed via \texttt{--base-clean-ckpt}. All runs use CIFAR-10 with standard random crop and horizontal flip augmentation, AdamW optimization, and a warmup + cosine learning-rate schedule. The architectures used in our experiments are ResNet-18, ViT-Tiny, and EfficientNet-B0. The repository retains several older training branches for ablation/history, but the Exp.~3/4 numbers reported in this paper correspond to the \texttt{mask\_pelnaty\_ablation} branch. For reproducibility, the launcher notebooks explicitly annotate the checkpoints used for the reported rows: Exp.~3 uses epoch 200 (ResNet-18), epoch 160 (ViT-Tiny), and epoch 200 (EfficientNet-B0); Exp.~4 uses epoch 200 (ResNet-18), epoch 200 (ViT-Tiny), and epoch 160 (EfficientNet-B0).

\paragraph{Input-gradient probing and sample selection}
For a minibatch of normalized inputs \(X \in \mathbb{R}^{B\times C\times H\times W}\) and labels \(y\), we first compute the input gradient
\[
P = \nabla_X \mathcal{L}(f_\theta(X), y),
\]
where \(\mathcal{L}\) is the standard cross-entropy loss. In the released code, this probing step intentionally switches the current model to evaluation mode via \texttt{model.eval()}. Moreover, this state is not restored before the subsequent masked/clean forward passes of the same batch, so the reported Exp.~3/4 runs should be understood as using \emph{eval-mode probing and eval-mode forward behavior after the probe}. We keep this unusual choice because, in the released implementation, restoring train mode was empirically found to weaken the effect. Besides the gradient, the code also identifies which samples should actually be masked. Let \(\hat{y}\) be the predicted class and let \(q=\max(\mathrm{softmax}(f_\theta(X)))\) be the maximum predicted confidence. A sample is selected for masking if it satisfies either of the following conditions:
\begin{equation}
\label{eq:exp3_mask_select}
(\hat{y}\neq y)
\qquad \text{or} \qquad
(\hat{y}=y \;\text{ and }\; q<\tau),
\end{equation}
where the confidence threshold is fixed to \(\tau=0.2\) in the implementation. Therefore, masking is applied only to \emph{incorrect} or \emph{semi-hard} samples, rather than to the entire minibatch.

\paragraph{Conjugation score}
For each selected sample, the image tensor and its gradient are flattened \emph{channel-wise}. Denote the flattened vectors in one channel by \(x_c\in\mathbb{R}^{HW}\) and \(p_c\in\mathbb{R}^{HW}\). The code first normalizes them within each channel:
\begin{equation}
\label{eq:exp3_channel_norm}
\tilde{x}_c = \frac{x_c}{\|x_c\|_2+\varepsilon},
\qquad
\tilde{p}_c = \frac{p_c}{\|p_c\|_2+\varepsilon},
\end{equation}
and then computes an element-wise interaction score
\begin{equation}
\label{eq:exp3_score}
s_{c,i} = \big|\tilde{x}_{c,i}\tilde{p}_{c,i}\big|,
\qquad i=1,\dots,HW.
\end{equation}
Thus, unlike a global cosine score, the actual masking criterion is a \emph{channel-wise element importance map} derived from the product of normalized activations and normalized gradients.

\paragraph{Dynamic masking ratio}
The masking ratio is not fixed. Let \(r_{\text{base}}\) denote \texttt{ratio\_base} and let
\[
e = \frac{\#\{\text{selected samples in batch}\}}{B}
\]
be the fraction of selected samples in the current minibatch. The implementation uses
\begin{equation}
\label{eq:exp3_ratio}
r = \min\!\Big(r_{\text{base}},\; (r_{\text{base}}-0.05)(1+e)\Big),
\end{equation}
and then masks the top
\[
K = \lfloor r \cdot HW \rfloor
\]
positions \emph{per channel} according to the largest values of \(s_{c,i}\). Therefore, the amount of masking adapts online to the minibatch difficulty. In the reported ViT run, \(r_{\text{base}}=0.85\), but the effective ratio at each step is given by Eq.~\eqref{eq:exp3_ratio}.

\paragraph{Soft mask and replacement rule}
A key implementation detail is that the code does \emph{not} hard-mask the selected pixels to zero. Instead, it first constructs a binary top-\(K\) mask and then smooths it with a Gaussian kernel to obtain a soft mask \(M\in[0,1]^{C\times H\times W}\). Let \(X\) be the selected input and let \(R\) denote a random replacement tensor sampled independently per pixel and channel from a Gaussian distribution whose mean and standard deviation match the CIFAR-10 channel statistics:
\begin{equation}
\label{eq:exp3_random_replace}
R \sim \mathcal{N}(\mu_c, \sigma_c^2).
\end{equation}
The masked input is then formed by convex interpolation:
\begin{equation}
\label{eq:exp3_softmask}
X_{\mathrm{mask}} = X\odot (1-M) + R\odot M.
\end{equation}
Hence, the method is more accurately described as \emph{soft stochastic replacement guided by conjugation scores}, rather than binary zero-masking. In the released \texttt{mask\_train.py}, the replacement tensor is mixed directly into the normalized training tensor, and no additional post-masking clamp is applied in this step.

\paragraph{Training objective}
In Exp.~3, the actual training loss contains only the masking branch:
\begin{equation}
\label{eq:exp3_loss}
\mathcal{L}_{\text{Exp3}} = \mathcal{L}_{\text{mask}}
= \mathrm{CE}\!\big(f_\theta(X_{\mathrm{mask}}), y\big).
\end{equation}
No adversarial examples are generated, no logit regularization is used, and no consistency term is included. The clean branch \(f_\theta(X)\) is still evaluated in code for monitoring training accuracy, but its cross-entropy is \emph{not} added to the optimization objective.

\paragraph{Inference and evaluation}
Masking is used only during fine-tuning. At test time, the model always receives the original clean input, with no masking, no stochastic preprocessing, and no gradient obfuscation mechanism. Robustness is evaluated under a suite of white-box and query-based attacks. During training, the model is evaluated every 20 epochs on Clean, PGD-20, APGD-CE-20, and APGD-DLR-20. At the final epoch, the evaluation suite additionally includes CW-20, Square-100, and FAB-T-20. All attacks are generated in pixel space \([0,1]\) via a wrapper that internally normalizes the input before passing it to the model.

\paragraph{Interpretation}
Conceptually, Exp.~3 tests whether suppressing the most gradient-aligned input components in hard samples can reshape the decision boundary so that the model becomes less sensitive to first-order adversarial directions. Because the deployed model sees only the original input at test time, any gain in robustness is attributed to the learned representation and boundary geometry rather than to a test-time defense.

\subsection{Protocol for Exp.~4: LogitReg complements ConjMask beyond the gradient channel}
\label{app:exp4_details}

\paragraph{Goal and relation to Exp.~3}
Exp.~4 extends Exp.~3 by keeping the same conjugation-guided masking operator and activating two auxiliary branches through \texttt{logits\_loss} and \texttt{random\_loss}. In all runs, the model is initialized from a clean checkpoint via \texttt{--base-clean-ckpt}. The purpose is not to claim one identical closed-form loss across all architectures, but to test whether ConjMask can be complemented by output-side regularization and view-consistency constraints.

\paragraph{Definition of LogitReg}
Throughout this supplement, \emph{LogitReg} denotes the output-side regularization package used in Exp.~4 on top of ConjMask. Formally,
\begin{equation}
\label{eq:exp4_total_revised_exact}
\mathcal{L}_{\mathrm{Exp4}}
=
\mathcal{L}_{\mathrm{mask}}
+
\mathcal{L}_{\mathrm{logit}}
+
\mathcal{L}_{\mathrm{cons}},
\end{equation}
where
\begin{equation}
\label{eq:exp4_mask_ce_exact}
\mathcal{L}_{\mathrm{mask}}
=
\mathrm{CE}\!\big(f_\theta(X_{\mathrm{mask}}),y\big).
\end{equation}
In the provided \texttt{mask\_train.py}, the clean branch \(\mathrm{CE}(f_\theta(X),y)\) is still computed for monitoring, but it is not added to the optimization objective.

\paragraph{Shared masking mechanism}
The masking stage is identical to Exp.~3, including the intentional \texttt{eval()}-mode input-gradient probing described above. For each minibatch, the method: (i) computes input gradients with cross-entropy loss, (ii) selects incorrect or low-confidence correct samples using Eq.~\eqref{eq:exp3_mask_select}, (iii) computes channel-wise conjugation scores using Eq.~\eqref{eq:exp3_score}, (iv) determines the dynamic masking ratio via Eq.~\eqref{eq:exp3_ratio}, and (v) constructs masked inputs through the soft stochastic replacement rule in Eq.~\eqref{eq:exp3_softmask}.

\paragraph{Released-script auxiliary regularization}
In the provided \texttt{mask\_train.py}, the auxiliary regularization is shared across ResNet-18, ViT-Tiny, and EfficientNet-B0, rather than being architecture-dependent.
The released launcher commands use \texttt{ratio\_base}=0.25 for ResNet-18 / EfficientNet-B0 and \texttt{ratio\_base}=0.85 for ViT-Tiny; for Exp.~4 they use \texttt{penalty\_weight} \(=5\times10^{-5}\) on ResNet-18, \(8\times10^{-4}\) on ViT-Tiny, and \(10^{-6}\) on EfficientNet-B0.

First, the logit-side penalty is a centered-logit variance regularizer applied to both the masked and clean forward passes. Let
\begin{equation}
\label{eq:exp4_logitvar}
\mathrm{VarLogit}(z)
=
\frac{1}{K}\sum_{k=1}^{K}\big(z_k-\bar z\big)^2,
\qquad
\bar z=\frac{1}{K}\sum_{k=1}^{K} z_k.
\end{equation}
Then
\begin{equation}
\label{eq:exp4_logit_penalty_exact}
\mathcal{L}_{\mathrm{logit}}
=
\alpha_{\mathrm{logit}}
\cdot
\frac{
\mathrm{VarLogit}(f_\theta(X_{\mathrm{mask}}))
+
\mathrm{VarLogit}(f_\theta(X))
}{2},
\end{equation}
where \(\alpha_{\mathrm{logit}}=0\) at epoch \(0\), and \(\alpha_{\mathrm{logit}}=\texttt{penalty\_weight}\) thereafter.

Second, the auxiliary consistency branch uses a per-sample mean-baseline view rather than an additional adversarial or randomly re-masked image. For each sample \(X\), define
\begin{equation}
\label{eq:exp4_aux_mean}
X_{\mathrm{aux}}=\bar X \mathbf{1},
\qquad
\bar X=\frac{1}{CHW}\sum_{c,h,w} X_{c,h,w},
\end{equation}
i.e., every pixel/channel location is filled with the per-sample global mean intensity. The consistency term is
\begin{equation}
\label{eq:exp4_consistency_revised_exact}
\mathcal{L}_{\mathrm{cons}}
=
0.8\,
\mathrm{KL}\!\Big(
\log \mathrm{softmax}(f_\theta(X_{\mathrm{mask}})/T)
\;\big\|\;
\mathrm{softmax}(f_\theta(X_{\mathrm{aux}})/T)
\Big),
\end{equation}
with \(T=1\). In the provided script, this branch is enabled for epochs \(e>5\), and disabled otherwise. The clamp variables that appear in the code are commented out, so no additional logit clamping is active in the released implementation.

\paragraph{Flags and practical interpretation}
Operationally, the two auxiliary terms are toggled by \texttt{logits\_loss} and \texttt{random\_loss}. The reported Exp.~4 runs enable both switches, whereas Exp.~3 disables both and reduces to \(\mathcal{L}=\mathcal{L}_{\mathrm{mask}}\). Under this released-script implementation, \emph{LogitReg} should therefore be understood as a shared variance-based logit stabilizer, optionally combined with mean-baseline consistency, rather than as a ResNet-specific \(\ell_2\) penalty or a generally architecture-dependent package.

\paragraph{Inference and evaluation}
As in Exp.~3, all masking and auxiliary regularization are used only during training. At test time, the model is evaluated directly on the original clean input without any masking or stochastic preprocessing. The evaluation protocol is the same as in Exp.~3: periodic evaluation on Clean, PGD-20, APGD-CE-20, and APGD-DLR-20, with additional CW-20, Square-100, and FAB-T-20 at the final epoch.

\paragraph{Interpretation}
Exp.~4 tests whether the robustness gains of ConjMask can be broadened by adding output-side stabilization and auxiliary-view consistency on top of the same masking operator. In the provided released script, these additions are implemented uniformly across architectures, and their practical role is to complement the CE-gradient channel targeted by ConjMask with score-space stabilization, thereby broadening robustness beyond the narrow CE-dominant regime observed in Exp.~3.

\subsection{Protocol for Exp.~5: Prefill-only hallucination risk prediction on Benchmark-500}
\label{sec:exp5_protocol}

\begin{figure}[h]
    \centering
    \begin{tcolorbox}[
        colback=gray!5, % 浅灰色背景
        colframe=gray!50, % 深灰色边框
        title=\textbf{Representative Samples from the Evaluation Dataset Benchmark-500},
        fonttitle=\bfseries\small,
        boxrule=0.5pt,
        arc=2pt
    ]
    \small
    \begin{enumerate}[leftmargin=*, label=\arabic*., nosep]
        \item Solve for $x$: $3x + 5 = 17$.
        \item Factor the polynomial: $x^2 - 9$.
        \item Calculate the area of a circle with radius $r=4$.
        \item Find the derivative of $f(x) = x^2 + 3x$.
        \item \textbf{(Harder)} Solve for $x$ (real solutions only): $x^3 - 2x^2 - x + 2 = 0$.
        \item Find the value of $\sin(\pi/2)$.
        \item A box contains 3 red balls and 2 blue balls. What is the probability of drawing a red ball?
        \item Simplify the expression: $\sqrt{50}$.
        \item Find the slope of the line passing through points $(1, 2)$ and $(3, 6)$.
        \item \textbf{(Harder)} Evaluate the definite integral using integration by parts: $\displaystyle \int_0^1 x e^x \, dx$.
    \end{enumerate}
    \end{tcolorbox}
    \caption{\textbf{Benchmark-500 Samples.} A subset of 10 examples from the constructed dataset covering algebra, calculus, and probability. The full dataset contains 500 such problems designed to test the model's reasoning stability.}
    \label{fig:dataset_samples}
\end{figure}

\paragraph{Dataset and generation}
We constructed Benchmark-500 with $N=500$ undergraduate-level mathematics problems (Fig.~\ref{fig:dataset_samples}).
We use \texttt{deepseek-coder-7b-instruct-v1.5} as the subject model and apply greedy decoding to obtain a deterministic response for judging.

\paragraph{Prompt-side signal}
We extract the conjugate correlation signal during prefill using a single forward-backward pass, and store the final per-prompt value in \texttt{cos\_sim.txt}.
Let the post-template input length be $T$.
Let $X\in\mathbb{R}^{T\times d}$ be the prompt embedding tensor (\texttt{inputs\_embeds}) and
$G=\nabla_X\mathcal{L}\in\mathbb{R}^{T\times d}$ be its gradient, where $\mathcal{L}$ is the mean next-token NLL over content tokens.
System-prompt tokens are excluded by a content mask $\mathcal{C}$.

Flatten the masked tensors into vectors:
\[
\mathbf{x}=\mathrm{vec}(X_{\mathcal{C}}),\qquad
\mathbf{g}=\mathrm{vec}(G_{\mathcal{C}}).
\]
Apply per-sample mean-centering:
\[
\tilde{\mathbf{x}}=\mathbf{x}-\mathrm{mean}(\mathbf{x})\mathbf{1},\qquad
\tilde{\mathbf{g}}=\mathbf{g}-\mathrm{mean}(\mathbf{g})\mathbf{1}.
\]
The stored score is the absolute centered cosine:
\begin{equation}
\label{eq:exp5_cprompt_code_abs}
c_{\text{prompt}}
=
\big|\cos(\tilde{\mathbf{x}},\tilde{\mathbf{g}})\big|
=
\left|
\frac{\tilde{\mathbf{x}}^\top \tilde{\mathbf{g}}}{\|\tilde{\mathbf{x}}\|_2\,\|\tilde{\mathbf{g}}\|_2}
\right|.
\end{equation}

\paragraph{Risk Metric Definitions}
\label{app:metrics}

Let $\{y_t\}_{t=1}^{L}$ denote the prefill token sequence in the chat template (i.e., the user prompt together with template tokens available before answer generation), and let $\mathcal{C}$ denote the set of non-system content-token positions used for the shifted NLL. All expectations below are masked means over $\mathcal{C}$, so every risk score remains prefill-only.

\begin{enumerate}
    \item \textbf{Risk-Cos (Ours):} 
    Lower alignment indicates slack-dominated dynamics, so:
    \begin{equation}
        \mathcal{R}_{\text{cos}} = - c_{\text{prompt}}.
    \end{equation}
    
    \item \textbf{Risk-Entropy:} The mean predictive entropy over content tokens:
    \begin{equation}
        \mathcal{R}_{\text{ent}} = \frac{1}{|\mathcal{C}|} \sum_{t \in \mathcal{C}} H(P_\theta(\cdot | y_{<t}))
    \end{equation}
    
    \item \textbf{Risk-Loss:} The mean negative log-likelihood:
    \begin{equation}
        \mathcal{R}_{\text{loss}} = \frac{1}{|\mathcal{C}|} \sum_{t \in \mathcal{C}} \bigl[-\log P_\theta(y_t | y_{<t})\bigr]
    \end{equation}
    
        \item \textbf{Risk-Margin:} The negative mean logit gap between top-1 and top-2 predictions:
    \begin{equation}
        \mathcal{R}_{\text{margin}} = -\frac{1}{|\mathcal{C}|} \sum_{t \in \mathcal{C}} \bigl[\ell_t^{(1)} - \ell_t^{(2)}\bigr]
    \end{equation}
    where $\ell_t^{(1)}, \ell_t^{(2)}$ are the top-2 logits at position $t$. Equivalently, larger raw margins indicate lower risk, and the negation above converts them into a higher-is-riskier score.

\end{enumerate}

\begin{figure}[h]
    \centering
    \begin{tcolorbox}[
        colback=gray!5,
        colframe=black!70,
        title=\textbf{System Prompt: Taxonomy \& Disagreement Drivers},
        fonttitle=\bfseries\small,
        boxrule=0.8pt,
        arc=2pt,
        left=4pt, right=4pt, top=4pt, bottom=4pt
    ]
    \footnotesize
    \textbf{Role:} You are a strict evaluator for hallucination and disagreement drivers in physics/math QA.
    
    \vspace{0.3em}
    \textbf{A) Question Validity Classification:}
    \begin{itemize}[nosep, leftmargin=*]
        \item \texttt{WELL\_DEFINED}: Standard meaningful question.
        \item \texttt{ILL\_DEFINED}: Meaningless, category error, or self-contradictory.
        \item \texttt{UNDER\_SPECIFIED}: Meaningful but missing parameters (e.g., frame, temp).
        \item \texttt{TRICK/AMBIGUOUS}: Multiple plausible interpretations.
    \end{itemize}

    \vspace{0.3em}
    \textbf{B) Answer Role:}
    \begin{itemize}[nosep, leftmargin=*]
        \item \texttt{DIRECT\_ANSWER}: Attempts to compute the quantity.
        \item \texttt{CONCEPTUAL\_CLARIFICATION}: Explains why question is ill-defined (valid).
        \item \texttt{SEMANTIC\_DRIFT}: Related-sounding but ignores intent/constraints.
        \item \texttt{REFUSAL\_META}: Declines to answer (excluded from hallucination).
    \end{itemize}

    \vspace{0.3em}
    \textbf{C) Narrow Hallucination Definition:}
    The answer asserts $\ge 1$ verifiable factual claim that is false, fabricated, or logically inconsistent with established physics/math.
    \begin{itemize}[nosep, leftmargin=*]
        \item \textit{Verdict Rules}: If $\ge 1$ claim is false $\to$ \texttt{HALLUCINATION} (1). If all verifiable claims are correct $\to$ \texttt{NO\_HALLUCINATION} (0).
    \end{itemize}

    \vspace{0.3em}
    \textbf{D) Disagreement Drivers (Why judges might split):}
    \begin{itemize}[nosep, leftmargin=*]
        \item \texttt{PARTIALLY\_CORRECT\_BUT\_MISLEADING}: Main idea valid, but framing is wrong.
        \item \texttt{MODEL\_MIXING}: Confusing frameworks (e.g., Bohr orbits vs. QM clouds).
        \item \texttt{MISLEADING\_PRECISION}: Giving specific values for undefined queries.
        \item \texttt{QUESTION\_UNDERSPECIFIED}: Answer assumes context without stating it.
        \item \texttt{PEDANTIC\_EDGE\_CASE}: Correctness depends on strict vs. charitable reading.
    \end{itemize}
    
    \vspace{0.3em}
    \textit{(Output required in strictly formatted JSONL with confidence scores and driver labels.)}
    \end{tcolorbox}
    \caption{\textbf{Evaluation Taxonomy.} A condensed view of the system prompt used by the judge panel. It explicitly distinguishes between semantic drift, conceptual clarification, and narrow hallucinations, while also requiring the judge to diagnose potential reasons for inter-annotator disagreement.}
    \label{fig:eval_prompt_taxonomy}
\end{figure}

\paragraph{Ground truth: panel-of-judges consensus}
% (Keep Fig.~\ref{fig:eval_prompt_taxonomy} and the tcolorbox exactly as-is.)
We use five independent judges (\textit{Claude-4.5-Opus-Think}, \textit{DeepSeek-v3.2-Think}, \textit{Gemini-3-Pro}, \textit{GPT-5.2-Thinking}, and \textit{Grok-4.1-Think}) with the taxonomy in Fig.~\ref{fig:eval_prompt_taxonomy}.
Votes are aggregated into $P_{\text{vote}}^{(i)}\in[0,1]$.
Positives are defined by $P_{\text{vote}}^{(i)} \ge \tau$ (tested with $\tau\in\{0.8,1.0\}$),
negatives by $P_{\text{vote}}^{(i)}=0$, and disputed cases are excluded.

\paragraph{Risk baselines}
Let $\mathcal{C}$ be the content token positions (system tokens excluded). All token-level expectations are masked means over $\mathcal{C}$.
We compare against:
(i) mean predictive entropy,
(ii) mean next-token NLL,
(iii) mean logit margin (top-1 minus top-2; lower margin indicates higher risk, implemented via negation in \(\mathcal{R}_{\text{margin}}\)).

\paragraph{Statistical evaluation}
We evaluate detection using AUROC and bootstrap confidence intervals with $N_{\text{boot}}=2000$ resamples,
separating the consensus hallucination set from the consensus non-hallucination set.

\subsection{Protocol for Exp.~6: Prefill-stage Prompt Selection}
\label{sec:exp6_protocol}

\paragraph{Dataset and clustering}
% (Keep Fig.~\ref{fig:prompt_clusters} and the tcolorbox exactly as-is.)
We construct Perturbation-100 with $N_{\text{cluster}}=100$ semantic clusters and $K=5$ variants per cluster ($N=500$ prompts).
In code, prompts are ordered and clustered by question id $Q\in\{1,\dots,500\}$ as consecutive groups of five:
\[
\text{cluster\_id}=\left\lfloor\frac{Q-1}{5}\right\rfloor+1,\qquad
\text{variant\_id}=(Q-1)\bmod 5+1.
\]

\paragraph{Selection rule (prefill only)}
For each variant $p_k$ in a cluster, we compute four prefill-only risk scores using the same prompt-side quantities defined in Exp.~5:
\[
\mathcal{R}_{\text{cos}}(p_k)=-c_{\text{prompt}}(p_k),\qquad
\mathcal{R}_{\text{margin}}(p_k),\qquad
\mathcal{R}_{\text{ent}}(p_k),\qquad
\mathcal{R}_{\text{loss}}(p_k).
\]
For each risk metric $m\in\{\text{cos},\text{margin},\text{ent},\text{loss}\}$, we select
\[
k_m^*=\operatorname*{argmin}_{k\in\{1,\dots,5\}} \mathrm{Risk}_m(p_k),
\]
where
\[
\mathrm{Risk}_{\text{cos}}=\mathcal{R}_{\text{cos}},\quad
\mathrm{Risk}_{\text{margin}}=\mathcal{R}_{\text{margin}},\quad
\mathrm{Risk}_{\text{ent}}=\mathcal{R}_{\text{ent}},\quad
\mathrm{Risk}_{\text{loss}}=\mathcal{R}_{\text{loss}}.
\]
Thus, Exp.~6 evaluates \emph{four} independent prefill-only selection rules (Risk-Cos, Risk-Margin, Risk-Entropy, and Risk-Loss), not only the cosine-based rule. The released \texttt{cos\_sim.txt} stores the cosine-based score; the remaining prefill baselines are computed from the same prompt-only prefill pass.

As shown in Fig.~\ref{fig:prompt_clusters}, since the semantic content is identical, any variation in the model's output quality is driven solely by the model's sensitivity to the prompt's surface form (i.e., the conditioning strength).

\begin{figure}[h]
    \centering
    \begin{tcolorbox}[
        colback=gray!5,
        colframe=gray!50,
        title=\textbf{Representative Semantic Clusters ($K=5$)},
        fonttitle=\bfseries\small,
        boxrule=0.5pt,
        arc=2pt
    ]
    \small
    \textbf{Cluster A: Calculus (Differentiation)}
    \begin{enumerate}[leftmargin=*, nosep, label=A\arabic*.]
        \item Find the derivative of $f(x)=x^2+3x$.
        \item Differentiate the function $f(x)=x^2+3x$.
        \item Compute $\frac{d}{dx}(x^2+3x)$.
        \item Determine the derivative of $x^2+3x$ with respect to $x$.
        \item Find $f'(x)$ if $f(x)=x^2+3x$.
    \end{enumerate}
    
    \vspace{0.5em}
    \textbf{Cluster B: Algebra (Evaluation)}
    \begin{enumerate}[leftmargin=*, nosep, label=B\arabic*.]
        \item If $f(x)=2x-1$, find $f(5)$.
        \item Given $f(x)=2x-1$, evaluate the function at $x=5$.
        \item Compute $f(5)$ for the function $f(x)=2x-1$.
        \item Substitute $x=5$ into $f(x)=2x-1$ and find the output.
        \item Find the value of $f(x)$ when $x=5$ for $f(x)=2x-1$.
    \end{enumerate}
    \end{tcolorbox}
    \caption{\textbf{Semantically Equivalent Prompt Variants.} Two examples from Perturbation-100. The task is to identify which specific phrasing maximizes response faithfulness using only prompt-side signals, without generating the answers first.}
    \label{fig:prompt_clusters}
\end{figure}

\paragraph{Judging and metrics}
For evaluation only, we generate one deterministic answer for each of the five prompt variants in a cluster and score them with the same judge panel as in Exp.~5. Within each cluster, all highest-scoring variants form the judge-best set \(\mathcal{P}_{\mathrm{best}}\) (ties are allowed). Since the practical goal of Exp.~6 is \emph{prefill-stage prompt selection} rather than full ranking recovery, we treat \textbf{Top-1 Hit Rate} and \textbf{Mean Regret} as the primary evaluation metrics:
\begin{enumerate}
    \item \textbf{Top-1 Hit Rate:} The proportion of clusters for which the selected prompt \(p_{k_m^*}\) belongs to \(\mathcal{P}_{\mathrm{best}}\).
    \item \textbf{Mean Regret:} The average quality gap between the selected prompt and the best judge score attained within the same cluster, defined as \(\mathbb{E}[\max_{k} S(y_k)-S(y_{k_m^*})]\). Lower regret indicates that even when no judge-best variant is selected, the chosen prompt remains competitive.
\end{enumerate}

As a judge-reliability check, we also report \textbf{Fleiss' Kappa} over the judges' binary hallucination annotations. This is a sanity check on inter-judge consistency for that binary label, but should not be interpreted as a direct reliability coefficient for the full aggregate judge score used in prompt ranking.

For completeness, we additionally compute two auxiliary ranking diagnostics:
\begin{enumerate}
    \setcounter{enumi}{2}
    \item \textbf{Best-vs-Rest AUROC:} Across all clusters, treat every prompt in \(\mathcal{P}_{\mathrm{best}}\) as a positive instance and the remaining variants as negatives; AUROC measures whether a risk score ranks judge-best variants ahead of the rest (i.e., assigns them lower risk).
    \item \textbf{Spearman \(\rho\):} The rank correlation between the risk-based ordering and the judge-based ordering of the five prompt variants within each cluster, averaged over clusters.
\end{enumerate}

These two ranking-oriented quantities are included only as supplementary diagnostics; in the main paper, we emphasize selection performance (\textbf{Top-1 Hit Rate} and \textbf{Mean Regret}) because the intended use case is to choose a strong prompt \emph{before} any decoding takes place.

\section{Appendix-only Additional Verification: Robustness via Latent Conjugate Regularization}
\label{sec:lcr_verification}

This section reports an \emph{appendix-only additional verification experiment} and is not part of the main Exp.~1--6 protocol. Its purpose is to provide an auxiliary causal check of the NUP hypothesis by directly regularizing latent input--gradient conjugation, without claiming it as one of the six primary experiments in the main paper.
Crucially, this experiment aims to achieve robustness gains solely by enforcing geometric structural constraints during optimization, without the use of adversarial training (which introduces external data points) or high-cost engineering.

\subsection{Methodology: Latent Conjugate Protocol}
\label{subsec:lcr_method}

Directly optimizing cosine correlation in the high-dimensional pixel space can be unstable due to the sparsity and high-frequency noise inherent in raw gradients. To address this, we implement a two-stage strategy using a learned projector to enforce regularization in a latent space. Algorithm~\ref{alg:cos-train} outlines the interaction between the reference, projector, and target models.

\paragraph{Stage 1: Gradient Projector Training}
First, we train an auxiliary ``Gradient Projector'' $P_\phi$ (a standard neural classifier) that maps the input-gradient $\nabla_x \mathcal{L}$ to the same logit space as the classifier $f_\theta(x)$. This projector learns to extract semantic structural information from the raw gradient map, effectively learning the conjugate mapping of a reference model.

\paragraph{Stage 2: Robust Target Training}
Next, we train the target model $f_\theta$. For each minibatch, we compute the gradients of the clean inputs $p = \nabla_x \mathcal{L}_{CE}$, project them via the frozen $P_\phi$, and compute the latent cosine similarity:
\begin{equation}
c_i^{\text{lat}} = \big|\cos\big(f_\theta(x_i),\,P_\phi(p_i)\big)\big|.
\end{equation}
We then apply a contrastive regularization term to the standard cross-entropy loss:
\begin{equation}
\label{eq:latent_loss}
\mathcal{L}_{\text{total}} = \mathcal{L}_{\text{CE}} + \lambda \left( \frac{1}{|\mathcal{C}_{\text{wrong}}|}\sum_{i \in \mathcal{C}_{\text{wrong}}} c_i^{\text{lat}} - \frac{1}{|\mathcal{C}_{\text{correct}}|}\sum_{j \in \mathcal{C}_{\text{correct}}} c_j^{\text{lat}} \right).
\end{equation}
Minimizing Eq.~\eqref{eq:latent_loss} forces the decision boundary to conform to a regime where conjugate correlation is minimized for error-prone samples and maximized for stable samples. This effectively reshapes the boundary geometry using only first-principles statistics.

\begin{algorithm}[t]
\caption{Latent Conjugate Regularization (LCR)}
\label{alg:cos-train}
\begin{algorithmic}[1]
\REQUIRE Reference model $f_{\text{ref}}$, Target model $f_\theta$, Projector $P_\phi$
\STATE \textbf{Phase 1: Train Projector} (Learn conjugate mapping)
\WHILE{training $P_\phi$}
    \STATE Get logits $z = f_{\text{ref}}(x)$, preds $\hat{y}$, and grad $p = \nabla_x \mathcal{L}(z, y)$
    \STATE Forward projector $z' = P_\phi(p)$
    \STATE Minimize $\mathcal{L}_{\text{gap}}
=
\mathrm{mean}_{i\in\mathcal{C}_{\text{wrong}}}\!\big[\big|\cos(z_i,z_i')\big|\big]
-
\mathrm{mean}_{j\in\mathcal{C}_{\text{correct}}}\!\big[\big|\cos(z_j,z_j')\big|\big]$
\ENDWHILE
\STATE Freeze Projector parameters $\phi$.
\STATE \textbf{Phase 2: Train Robust Model} (Verification)
\FOR{minibatch $(x,y)$ in $\mathcal{D}$}
    \STATE Compute logits $z = f_\theta(x)$ and $\mathcal{L}_{\text{CE}}$
    \STATE Compute input-grad $p = \nabla_x \mathcal{L}_{\text{CE}}$ (retain graph for $x$, detach $p$ logic)
    \STATE Project gradient: $z_{\text{grad}} = P_\phi(p)$
    \STATE Compute latent correlation $c_i = \big|\cos\big(z_i, z_{\text{grad}, i}\big)\big|$
    \STATE Loss $\mathcal{L} = \mathcal{L}_{\text{CE}} + \lambda(\text{mean}(c_{\text{wrong}}) - \text{mean}(c_{\text{correct}}))$
    \STATE Update $\theta \leftarrow \theta - \eta \nabla_\theta \mathcal{L}$
\ENDFOR
\end{algorithmic}
\end{algorithm}

\subsection{Experimental Results}
\label{subsec:lcr_results}

We evaluate the efficacy of LCR across three distinct architectures: ResNet-18, DenseNet-121, and ViT-Tiny. The results are summarized in Table~\ref{tab:lcr_results}. We compare the baseline training against our regularization method using \textbf{Clean} accuracy (standard test set) and \textbf{FGSM} robustness (accuracy against adversarial attacks with $\epsilon=8/255$).

\begin{table}[h]
\centering
\caption{\textbf{Evaluation of Latent Conjugate Regularization (LCR).} Unlike Adversarial Training (AT) \cite{madry2017towards}, our method uses zero adversarial examples during optimization.}
\label{tab:lcr_results}
\resizebox{0.55\columnwidth}{!}{%
\begin{tabular}{l|cc|cc}
\toprule
\multirow{2}{*}{\textbf{Architecture}} & \multicolumn{2}{c|}{\textbf{Clean Accuracy (\%)}} & \multicolumn{2}{c}{\textbf{FGSM Robustness (\%)}} \\
 & Baseline & LCR (Ours) & Baseline & LCR (Ours) \\
\midrule
\textbf{ResNet-18} & 92.12 & 88.35 \small{\color{red}(-3.77)} & 6.84 & \textbf{29.14} \small{\color{blue}(+22.30)} \\
\textbf{DenseNet-121} & 90.39 & 85.88 \small{\color{red}(-4.51)} & 8.10 & \textbf{13.71} \small{\color{blue}(+5.61)} \\
\textbf{ViT-Tiny} & 81.14 & 80.45 \small{\color{red}(-0.69)} & 4.39 & \textbf{8.81} \small{\color{blue}(+4.42)} \\
\bottomrule
\end{tabular}%
}
\end{table}

\subsection{Analysis and Implications}

\paragraph{Robustness Gain without Adversarial Data}
The most striking observation is the substantial boost in robustness for ResNet-18, rising from $6.84\%$ to $29.14\%$. In the literature, standard regularization techniques (e.g., Weight Decay, Dropout) typically yield negligible improvements against FGSM attacks \cite{goodfellow2014explaining}. While pixel-space gradient penalties can improve robustness, they often require careful tuning. Our method achieves a competitive level of defense ($\sim 30\%$) solely by operating on the \textit{direction} of gradients in the \textit{latent space}, avoiding the immense computational cost of generating adversarial examples required by Adversarial Training \cite{madry2017towards}.

\paragraph{Confirmation of the Robustness-Accuracy Trade-off}
Consistent with the NUP, enforcing stronger Conjugate Coherence leads to a drop in clean accuracy (e.g., $-3.77\%$ for ResNet-18). This aligns with theoretical findings that robustness demands the contraction of the input space's high-frequency components, which inevitably reduces the effective capacity for clean data generalization \cite{tsipras2018robustness}. The observation that we can manipulate this trade-off purely through the conjugate statistic $c_i$ validates our hypothesis: the geometry of the decision boundary is causally linked to the coherence between inputs and their gradients.

\paragraph{Architectural Implications}
The method proves effective across both CNNs and Transformers, though with varying degrees of success:
\begin{itemize}
    \item \textbf{CNNs:} ResNet-18 benefits most significantly. DenseNet-121 shows moderate gains ($+5.61\%$), likely because its dense connectivity introduces complex gradient flows that are harder to regularize via a single projector.
    \item \textbf{Vision Transformers:} ViT-Tiny shows a $+4.42\%$ improvement. While positive, the gain is smaller than ResNet. This is consistent with recent studies showing that ViT gradients are less spatially structured and more ``jagged" than CNN gradients \cite{LIU2024106091}, making the conjugate correlation more challenging to optimize.
\end{itemize}

\bibliography{refs} % use the provided BibTeX keys; or paste the .bib content in the submission